\definecolor{CColor}{rgb}{0.01,0.31,0.59}
\definecolor{GGray}{rgb}{0.80,0.90,1}
\definecolor{Shady}{rgb}{0.9,0.9,0.9}
\definecolor{kaistblue}{RGB}{20,135,200}
\definecolor{kaistdarkblue}{RGB}{0,65,145}
\definecolor{urbanablue}{RGB}{19,41,75}
\definecolor{urbanaorange}{RGB}{232,74,39}
\definecolor{drp}{rgb}{0.53,0.15,0.34}
  \newtheorem{remark}[theorem]{Remark}
  \theoremstyle{acmplain}
  \newenvironment{definition*}
                 {\pushQED{\qed}\definition}
                 {\popQED\enddefinition}  
  \newenvironment{example*}
                 {\pushQED{\qed}\example}
                 {\popQED\endremark}  
  \newenvironment{remark*}
                 {\pushQED{\qed}\remark}
                 {\popQED\endremark}  
  \newenvironment{assumption*}
                 {\pushQED{\qed}\assumption}
                 {\popQED\endremark}  
  \newenvironment{property*}
                 {\pushQED{\qed}\property}
                 {\popQED\endremark}  
\crefname{assumption}{Assumption}{Assumptions}
\crefname{figure}{Fig{.}}{Figs{.}}%
\crefname{table}{Table}{Tables}
\crefname{definition}{Definition}{Definitions}
\crefname{theorem}{Theorem}{Theorems}
\crefname{lemma}{Lemma}{Lemmas}
\crefname{proposition}{Proposition}{Propositions}
\crefname{corollary}{Corollary}{Corollaries}
\crefname{problem}{Problem}{Problems}
\crefname{example}{Example}{Examples}
\crefname{fact}{Fact}{Facts}
\crefname{conjecture}{Conjecture}{Conjectures}
\crefname{remark}{Remark}{Remarks}
\crefname{condition}{Condition}{Conditions}
\crefname{requirement}{Requirement}{Requirements}
\crefname{enumi}{}{}
\crefname{equation}{Eq{.}}{Eqs{.}}
\crefname{section}{Section}{Sections}
\newcommand{\mathboldcommand}[1]{\mathbb{#1}}
\newcommand{\bbF}{\mathboldcommand{F}}
\newcommand{\bbN}{\mathboldcommand{N}}
\newcommand{\bbR}{\mathboldcommand{R}}
\newcommand{\bbZ}{\mathboldcommand{Z}}
\newcommand{\bfa}{\mathbf{a}}
\newcommand{\bfb}{\mathbf{b}}
\newcommand{\bfc}{\mathbf{c}}
\newcommand{\bfw}{\mathbf{w}}
\newcommand{\bfx}{\mathbf{x}}
\newcommand{\bfy}{\mathbf{y}}
\newcommand{\bfz}{\mathbf{z}}
\newcommand{\mathcalcommand}[1]{\mathcal{#1}}
\newcommand{\mcF}{\mathcalcommand{F}}
\newcommand{\mcI}{\mathcalcommand{I}}
\newcommand{\mcX}{\mathcalcommand{X}}
\newcommand{\mcY}{\mathcalcommand{Y}}
\DeclareMathAlphabet{\mathpzc}{T1}{pzc}{m}{it}
\newcommand{\tcb}[1]{{{\color{blue}{#1}}}}
\newcommand*{\commentout}[1]{}
\newlength{\parskiptrue}
\definecolor{lred}{rgb}{1.0, 0.5, 0.5}
\definecolor{lorange}{rgb}{1.00, 0.90, 0.20}
\definecolor{lgreen}{rgb}{0.35, 0.95, 0.35}
\definecolor{lime}{rgb}{0.9, 1.0, 0.6}
\definecolor{lblue}{rgb}{1.0, 0.85, 0.75}
\newcommand*\wthelper[2]{%
        \hbox{\dimen@\accentfontxheight#1%
                \accentfontxheight#11.1\dimen@
                $\m@th#1\widetilde{#2}$%
                \accentfontxheight#1\dimen@
        }%
}
\newcommand*\accentfontxheight[1]{%
        \fontdimen5\ifx#1\displaystyle
                \textfont
        \else\ifx#1\textstyle
                \textfont
        \else\ifx#1\scriptstyle
                \scriptfont
        \else
                \scriptscriptfont
        \fi\fi\fi3
}
\newcommand*\whhelper[2]{%
        \hbox{\dimen@\accentfontxheight#1%
                \accentfontxheight#11.2\dimen@
                $\m@th#1\widehat{#2}$%
                \accentfontxheight#1\dimen@
        }%
}
\newcommand{\oset}[3][0ex]{%
  \mathrel{\mathop{#3}\limits^{
    \vbox to#1{\kern-3\ex@
    \hbox{$\scriptstyle#2$}\vss}}}}
\newcommand*{\defeq}{\triangleq}
\newcommand*{\relu}{\mathrm{ReLU}}
\newcommand*{\step}{\mathrm{Step}}
\DeclareMathOperator*{\argmin}{arg\,min}
\newcommand*{\round}[1]{ \lceil{#1}\rfloor }
\newcommand*{\indc}[1]{\mathbbm{1}\!\left[{#1}\right]}
\newcommand*{\aff}{{\mathrm{aff}}}
\newcommand*{\ceilZ}[1]{ \lceil{#1}\rceil_{\mathbb{Z}} }
\def\moverlay{\mathpalette\mov@rlay}
\def\mov@rlay#1#2{\leavevmode\vtop{%
   \baselineskip\z@skip \lineskiplimit-\maxdimen
   \ialign{\hfil$\m@th#1##$\hfil\cr#2\crcr}}}
\newcommand{\charfusion}[3][\mathord]{
    #1{\ifx#1\mathop\vphantom{#2}\fi
        \mathpalette\mov@rlay{#2\cr#3}
      }
    \ifx#1\mathop\expandafter\displaylimits\fi}
\newcommand{\bigcupdot}{\charfusion[\mathop]{\bigcup}{\cdot}}
\renewcommand{\paragraph}[1]{{#1}}
\setlist{noitemsep, topsep=5pt}
    \crefname{section}{Section}{Sections}%
    \crefname{appendix}{Appendix}{Appendices}%
    \crefname{subsection}{Section}{Sections}%
    \crefname{figure}{Figure}{Figures}%
\newtheorem{theorem}{Theorem}
\newtheorem{lemma}[theorem]{Lemma}
\newtheorem{corollary}[theorem]{Corollary}
\newtheorem{definition}[theorem]{Definition}
\LetLtxMacro\oldttfamily\ttfamily
\DeclareRobustCommand{\ttfamily}{\oldttfamily\csname ttsize\endcsname}
\newcommand{\setttsize}[1]{\def\ttsize{#1}}%
\newcommand{\listingsttfamily}{\small\oldttfamily}
\lstdefinestyle{prettycode}{
  basicstyle=\listingsttfamily,
  keywordstyle=,%
  morekeywords={prog, func, if, else, return},
  keepspaces=true,
  mathescape=true,
}
\newcommand{\Mod}[1]{\ (\mathrm{mod}\ #1)}
\title{Expressive Power of ReLU and Step Networks \\ under Floating-Point Operations}
\author{Yeachan Park\thanks{equal contribution}\ \ \ \ \ \
Geonho Hwang$^*$\ \ \ \ \ \
Wonyeol Lee\ \ \ \ \ \
Sejun Park\thanks{corresponding author}}
\date{}
\begin{document}
\maketitle

\begin{abstract}
The study of the expressive power of neural networks has investigated the fundamental limits of neural networks. Most existing results assume real-valued inputs and parameters as well as exact operations during the evaluation of neural networks. However, neural networks are typically executed on computers that can only represent a tiny subset of the reals and apply inexact operations, i.e., most existing results do not apply to neural networks used in practice. In this work, we analyze the expressive power of neural networks under a more realistic setup: when we use floating-point numbers and operations as in practice. Our first set of results assumes floating-point operations where the significand of a float is represented by finite bits but its exponent can take any integer value. Under this setup, we show that neural networks using a binary threshold unit or $\relu$ can memorize any finite input/output pairs and can approximate any continuous function within an arbitrary error. 
In particular, the number of parameters in our constructions for universal approximation and memorization coincides with that in classical results assuming exact mathematical operations.
We also show similar results on memorization and universal approximation when floating-point operations use finite bits for both significand and exponent; these results are applicable to many popular floating-point formats such as those defined in the IEEE 754 standard (e.g., 32-bit single-precision format) and bfloat16. 
\end{abstract}

\section{Introduction}\label{sec:intro}
Identifying the expressive power of neural networks is an important problem in the theory of deep learning. 
Theoretical results represented by the universal approximation theorem have shown that neural networks with sufficiently large width and depth are able to approximate a continuous function on a compact domain within an arbitrary error \cite{cybenko89, Hornik89, pinkus99, Lu17, park21}.
Another line of work on memory capacity has studied regression problems and showed that shallow networks with $O(n)$ parameters can fit arbitrary $n$ input/output pairs \cite{baum88, huang98, vershynin20, yun19}, while $o(n)$ parameters are sufficient for deep ones, i.e., $\omega(1)$ layers \cite{park21b, vardi22}.
However, since these results assume exact mathematical operations, they %
do not apply to neural networks executed on computers that can only represent a tiny subset of the reals (e.g., floating-point numbers) and perform inexact operations (e.g., floating-point operations)  \cite{wray95,puheim14}.

Several works studied the expressive power under machine-representable parameters.
For example, \cite{ding18} showed that for approximating a continuous function within $\varepsilon$ error using quantized weights, $O(K\log^5\varepsilon)$ parameters are sufficient where $K$ denotes the number of parameters for approximation using real-valued parameters.
The memory capacity of networks with quantized weights was also studied in
\cite{park21b,vardi22}.
However, these works also assume exact mathematical operations, and hence, what neural networks can/cannot do on actual computers has remained unknown. %

Most neural networks used in practice operate under floating-point arithmetic. %
Typically, a floating-point number $x$ can be written as $x=s_x\times a_x\times 2^{e_x}$,
where $s_x$, $a_x$, and $e_x$ are called the \emph{sign}, \emph{significand}, and \emph{exponent} of $x$, respectively. %
Here, to express $x$ using a finite memory, $s_x$, $a_x$, and $e_x$ must have finite-bit representations:
e.g., in 32-bit single-precision floats, %
$s_x$ uses $1$ bit, $a_x$ uses $23$ bits, and $e_x$ uses $8$ bits \cite{ieee19}.
If we apply floating-point addition/multiplication to two floating-point numbers, the result is rounded to a nearest\footnote{{There are other rounding modes (e.g., ``round towards $0$'') as well, but this paper assumes the ``round to nearest (ties to even)'' mode since it is the most commonly used one (e.g., it is the default rounding mode in the IEEE 754 standard~\cite{ieee19}).}} floating-point number; hence, floating-point operations may incur some rounding error. %

Although the relative error of each floating-point operation is usually small, the relative error incurred by a composition of such operations can be arbitrarily large, even with a small number of operations. %
For example, let $\delta$ be some small floating-point number such that $1/\delta$ is also a floating-point number, and consider the expression $(1/\delta)\times(1+\delta-1)$. %
If floating-point operations are applied to this expression and $\delta$ is small enough, %
then $1$ is returned for the expression $1+\delta$. %
This implies that $0$ is returned for $(1+\delta-1)$
as well as for the entire expression.
Thus, the final output $0$
has a relative error one compared to the true output $1$ (which is computed under exact operations). %

\subsection{Summary of contribution}\label{sec:contribution}
Unlike prior works considering exact mathematical operations that typical neural network implementations do not perform, in this work, we investigate the expressive power of neural networks under floating-point operations, and $\step(x)=\indc{x\ge0}$ and $\relu(x)=\max\{x,0\}$ activation functions. %
To our knowledge, this is the first work that considers practical neural networks typically implemented by computers using floating-point numbers (e.g., inputs, parameters, and intermediate values) and operations (e.g., addition and multiplication).

Our first set of results is for $\bbF_p$, a set of floating-point numbers with \emph{$p$-bit significand and unbounded exponent} (i.e., the exponent can be any integer), and an input dimension {$d\le2^p$}.
We note that $\bbF_p$ have been widely used in the floating-point literature since floating-point operations in $\bbF_p$ do not incur any overflow and underflow \cite{BoldoM11, Boldo15, Jeannerod15, JeannerodLMP16, JeannerodR18}, i.e., $\bbF_p$ is often easier to analyze compared to the bounded exponent case.

\begin{itemize}[leftmargin=15pt]
\item As in the classical results,  we show for $\step$ networks that $O(n)$ parameters are sufficient for memorizing arbitrary $n$ input/output pairs. \cref{thm:memstepfp-temp} states that there exists a $\step$ network $f_\theta:\bbF_p^d\to\bbF_p$ of $O(n)$ parameters that is parameterized by $\theta$, uses only floating-point operations, and satisfies the following: %
for any $\mathcal D=\{(\bfz_1,y_1),\dots,(\bfz_n,y_n)\}\subset\bbF_p^d\times\bbF_p$ such that $\bfz_i\ne\bfz_j$ for all $i\ne j$, there exists $\theta_{\mathcal D}$ satisfying
$$f_{\theta_{\mathcal D}}(\bfz_i)=y_i~\text{for all}~i.$$
\item We next show that $\step$ networks can also universally approximate continuous functions on a unit cube. \cref{thm:univstepfp-temp} states that for any continuous $f^*:[0,1]^d\to\bbR$ and $\varepsilon>0$, there exists a floating-point $\step$ network $f:[0,1]^d\cap\bbF_p^d\to\bbF_p$ such that
$$|f(\bfx)-f^*(\bfx)|\le|f^*(\bfx)-\round{f^*(\bfx)}|+\varepsilon~\text{for all}~\bfx\in[0,1]^d\cap\bbF_p^d,$$
where $\round{f^*(\bfx)}$ denotes a floating-point number in $\bbF_p$ closest to $f^*(\bfx)$.
The first term in the error bound denotes an intrinsic error arising from the nature of floating-point arithmetic, i.e., one cannot achieve a smaller error than this term.
\item By carefully analyzing floating-point operations, we further extend these results to $\relu$ networks in \cref{thm:memrelufp-temp,thm:univrelufp-temp}.
\end{itemize}

Our next set of results considers a more realistic class of floating-point numbers $\bbF_{p,q}$ with \emph{$p$-bit significand and $q$-bit exponent}, i.e., each number in $\bbF_{p,q}$ can be represented by a finite number of bits.
In particular, we focus on $p,q$ satisfying $q\ge5$ and $4\le p\le2^{q-2}+2$.
This condition on $p,q$ is met by many practical floating-point formats such as those defined in the IEEE 754 standard (e.g., 32-bit single-precision format) \cite{ieee19} and bfloat16 \cite{tensorflow};
see \cref{sec:float} for details. %
We note that operations in $\bbF_{p,q}$ may incur overflow or underflow unlike in $\bbF_p$.

\begin{itemize}[leftmargin=15pt]
\item \cref{thm:memstepfpqr-temp,thm:univstepfpqr-temp} show that memorization and universal approximation using $\step$ networks are possible under $\bbF_{p,q}$ with a similar number of parameters for the $\bbF_p$ case. This shows that $\step$ networks are expressive even under realistic floating-point operations.
\item Showing similar memorization and universal approximation results for $\relu$ networks is more challenging since 
the output of an activation function can be very large unlike in $\step$ networks;
this can incur overflow if the output is multiplied by a large weight in the next layer. %
Nevertheless, by carefully analyzing floating-point operations under $\bbF_{p,q}$, we show in \cref{thm:memrelufpqr-temp} that %
$O(n)$ parameters are sufficient for memorizing arbitrary $n$ input/output pairs $(\bfz_1,y_1),\dots,(\bfz_n,y_n)$,
under a mild condition on inputs.
In addition, \cref{thm:univrelufpqr-temp} shows that universal approximation using $\relu$ networks is possible under $\bbF_{p,q}$. %
\end{itemize}

We lastly note that the numbers of parameters 
used in our results to show memorization and universal approximation have the same order as the necessary and sufficient numbers of parameters shown in corresponding prior results under exact operations.

\subsection{Organization}
We introduce our problem setup and notations in \cref{sec:setup}. %
We then formally describe our results on the expressive power of $\step$ networks and $\relu$ networks under $\bbF_p$ and $\bbF_{p,q}$ in \cref{sec:fp,sec:fpqr}. 
We present the proofs of these results in \cref{sec:pfresults-fp,sec:pfresults-fpqr}, %
and conclude the paper in \cref{sec:conclusion}. %

\section{Problem setup and notations}\label{sec:setup}
\subsection{Notations}
We first introduce the notations used in this paper. For $n\in\bbN$, we use $[n]\defeq\{1,\dots,n\}$. We often use lower-case alphabets $a,b,c,\dots$ for denoting scalar values and 
bold lower-case alphabets $\bfa,\bfb,\bfc,\dots$ for denoting column vectors where $f,g,h$ are reserved for denoting a scalar-valued functions.
For $n\in\bbN$, we use $\mathbf{1}_n$ to denote the $n$-dimensional vector consisting of ones.
For a vector $\bfx$, we use $x_i$ to denote its $i$-th coordinate.
For $\mathcal S\subset\bbR$ and $x\in\bbR$, we use $x^{(\ge;\mathcal S)}\defeq\inf_{v \in \mathcal S :v \ge x} v $, $x^{(\le;\mathcal S)}\defeq\sup_{v \in \mathcal S :v\le x} v $, $x^{(>;\mathcal S)}\defeq\inf_{v \in \mathcal S :v > x} v $, and $x^{(<;\mathcal S)}\defeq\sup_{v \in \mathcal S :v< x} v $. Likewise, for $\bfx=(x_1,\dots,x_d)$, we define $\bfx^{(\le;\mathcal S)}\defeq (x_1^{(\le;\mathcal S)},\dots,x_d^{(\le;\mathcal S)})$ and use $\bfx^{(\ge;\mathcal S)}$, $\bfx^{(<;\mathcal S)}$, and $\bfx^{(>;\mathcal S)}$ similarly.
For $\bfx,\bfy\in\bbR^d$, we use $[\bfx,\bfy]\defeq[x_1,y_1]\times\cdots\times[x_d,y_d]$; we also use $[\bfx,\bfy)$, $(\bfx,\bfy]$, and $(\bfx,\bfy)$ similarly.
We use $\bigcupdot$ to denote the disjoint union.    

We use $C(\mathcal X,\mathcal Y)$ to denote the set of all continuous functions from $\mathcal X$ to $\mathcal Y$.
Given a continuous function $f:\mathcal X\to\bbR$ for some compact $\mathcal X\subset\bbR^d$, we use $\omega_f:\bbR_{\ge0}\to\bbR_{\ge0}$ to denote the modulus of continuity of $f$, defined as
$$\omega_f(\delta)\defeq\sup_{\bfx,\bfx'\in\mathcal X:\|\bfx-\bfx'\|_\infty\le \delta}|f(\bfx)-f(\bfx')|,$$
and we use $\omega_f^{-1}:\bbR_{\ge0}\to\bbR_{\ge0} \cup \{\infty\}$ to denote its inverse, defined as
$$\omega_f^{-1}(\varepsilon)\defeq\sup\{\delta\ge0:\omega_f(\delta)\le\varepsilon\}.$$
Throughout this paper, we treat the input dimension $d$ as a constant and often hide it in the big-O notation $O(\cdot)$.

\subsection{Floating-point numbers}
\label{sec:float}
We consider two types of floating-point numbers: 
\begin{align}
\nonumber
\bbF_{p}&\defeq\big\{s\times (1.a_1\cdots a_p) \times 2^{b}:s\in\{-1,1\},a_1,\dots,a_p\in\{0,1\},b\in\bbZ\big\} \cup {\big\{0\big\}},
\\
\nonumber
\bbF_{p,q}&\defeq\big\{s\times (1.a_1\cdots a_p) \times 2^{b-1+e_{min}}:s\in\{-1,1\},a_1,\dots,a_p\in\{0,1\},b\in[2^q-{2}]\big\}
\\
\label{eq:float-def}
&\qquad\cup\big\{s\times (0.a_1\cdots a_p) \times 2^{e_{min}} %
:s\in\{-1,1\},a_1,\dots,a_p\in\{0,1\}\big\},
\end{align}
where $1.a_1\cdots a_p$ and $0.a_1\cdots a_p$ denote binary representations, $e_{min} \defeq -2^{q-1}+2$, and $e_{max} \defeq 2^{q-1}-1$.
In \Cref{eq:float-def}, the first factor (i.e., $s$) of each floating-point number is called the \emph{sign}, the second factor (e.g., $1.a_1 \cdots a_p$) called the \emph{significand}, and $\log_2$ of the third factor (e.g., $b$) called the \emph{exponent}.
For instance, the sign, significand, and exponent of any $x \in \bbF_{p,q}$ are always in $\{-1,1\}$, $[0,2)$, and $[e_{min}, e_{max}]$, respectively.
Many floating-point formats used in practice can also represent three special values: $+\infty$, $-\infty$, and NaN (not-a-number). We do not include these values in $\bbF_p$ and $\bbF_{p,q}$, yet we do consider them in our results (including proofs).
For $\bbF_{p,q}$, we assume that $p$ and $q$ satisfy $q \ge 5$ and $4 \le p \le 2^{q-2}+2$.

As we introduced in \cref{sec:contribution}, floating-point numbers with unbounded exponent $\bbF_p$ have been widely used since it does not incur any underflow and overflow (i.e., rounding to $0$ or $\pm\infty$) unlike those over $\bbF_{p,q}$ \cite{BoldoM11, Boldo15, Jeannerod15, JeannerodLMP16, JeannerodR18}.
On the other hand, each number in $\bbF_{p,q}$ can be represented by a finite number of bits, and $\bbF_{p,q}$ covers many practical floating-point formats. %
For example, the following floating-point formats defined in the IEEE 754 standard \cite{ieee19} are all instances of $\bbF_{p,q}$:
the 16-bit half-precision format has $(p,q)=(10,5)$;
the 32-bit single-precision format has $(p,q)=(23,8)$;
the 64-bit double-precision format has $(p,q)=(52,11)$; and
the 128-bit quadruple-precision format has $(p,q)=(112,15)$.
Also, the bfloat16 format \cite{tensorflow}, frequently used in machine learning these days, is also an instance of $\bbF_{p,q}$ with $(p,q)=(7,8)$.

We say a non-zero number $x \in \bbF_{p,q}$ is \emph{normal} if $|x| \ge 2^{e_{min}}$ (i.e., significand is at least $1$), and is \emph{subnormal} if $|x| < 2^{e_{min}}$ (i.e., significand is smaller than $1$). In $\bbF_{p,q}$, we denote the unit round-off as $u \defeq 2^{-p}$, the largest number as $\Omega \defeq (2 - u) \times 2^{e_{max}}$, and the smallest positive number as $\eta \defeq 2^{-p+e_{min}}$. We also use $u \defeq 2^{-p}$ for $\bbF_p$.
We often use $\bbF$ to denote either $\bbF_{p}$ or $\bbF_{p,q}$. %

\subsection{Floating-point operations}
\label{sec:float-op}

For both $\bbF_p$ and $\bbF_{p,q}$, we consider the rounding mode called ``round to nearest (ties to even)''.
Roughly, we round $x\in \bbR$ to a floating-point number $y \in \bbF$ that is closest to $x$,
where ties are broken by choosing a floating-point number that has $0$ as the $p$-th binary digit of its significand (i.e., $a_p=0$ in \Cref{eq:float-def}).
Formally, the rounding of $x \in \bbR$ in $\bbF$ is defined as
\begin{align*}
    \round{x}_{\bbF_p} &\defeq \argmin_{y \in \bbF_p} |x-y|,
    &
    \round{x}_{\bbF_{p,q}} &\defeq 
    \begin{cases}
        \argmin_{y \in \bbF_{p,q}} |x-y| & \text{if } x \in (-\Omega',\Omega'),
        \\
        +\infty & \text{if } x \in [\Omega', \infty),
        \\
        -\infty & \text{if } x \in (-\infty, -\Omega'],
    \end{cases}    
\end{align*}
where ties are broken as explained above and $\Omega' = \Omega + 2^{e_{max}-p-1}$ \cite[Chapter 2.1]{BoldoJMM23}.
If $\bbF$ is clear from context, we use $\round{x}$ to denote $\round{x}_\bbF$.
For $x\in\bbF$, we use $x^-\defeq\sup_{z\in\bbF:z<x}z$ and 
$x^+\defeq\inf_{z\in\bbF:z>x}z$ if exist: e.g., $0^+$ and $0^-$ do not exist in $\bbF_p$.
We note that all network constructions in our results do not generate infinities and NaN during all operations including intermediate ones.

We consider the following floating-point operations for $\bbF$: $a\oplus_\bbF b\defeq\round{a+b}_\bbF$, $a\ominus_\bbF b\defeq a\oplus(-b)$, and $a\otimes_\bbF b \defeq \round{a\times b}_\bbF$.
We use $a\oplus b$, $a\ominus b$, $a\otimes b$ to denote $a\oplus_\bbF b$, $a\oplus_\bbF b$, $a\otimes_\bbF b$ if it is clear from the context.
Since the addition between floating-point numbers is not associative, the ordering of a summation becomes important. 
We define the summation operation $\bigoplus$ of a sequence of floating-point numbers $x_1,x_2,\dots$ inductively as follows: for $\bigoplus_{i=1}^{1}x_i\defeq x_1$, 
$$\bigoplus_{i=1}^n x_i \defeq \left(\bigoplus_{i=1}^{n-1}x_i\right) \oplus   x_n.$$
For $y,x_1,x_2,\dots,\in\bbF$, we also define the order of operation as
$$y \oplus \bigoplus_{i=1}^n x_i \defeq \left( y \oplus \bigoplus_{i=1}^{n-1} x_i\right) \oplus x_n,$$
and
$$y \ominus \bigoplus_{i=1}^n x_i \defeq \left( y \ominus \bigoplus_{i=1}^{n-1} x_i\right) \ominus x_n.$$
Using $\bigoplus$, we define an affine transformation under floating-point arithmetic as follows:
for $\bfx=(x_1,\dots,x_n)\in\bbF^n$, %
$\bfw=(w_1,\dots,w_k)\in\bbF^k$, %
and $\mathcal I=\{i_1,\dots,i_k\}\subset[n+1]$ with $i_1<\cdots<i_k$,
$$\aff_\bbF(\bfx,\bfw,\mathcal I)\defeq\bigoplus_{j=1}^k(w_{j}\otimes z_{i_j})$$
where $(z_1,\dots,z_{n+1})=(x_1,\dots,x_n,1)$.

\subsection{Neural networks}
\label{subsection:nn}

Let $\bbF$ be a set of floating-point numbers, $L\in\bbN$ be the number of layers, $N_0=d$ and $N_L=1$ be the input and output dimensions, $N_1,\dots,N_{L-1}\in\bbN$ be numbers of hidden neurons, and $\mathcal I_{l,i}=\{\iota_{l,i,1},\dots,\iota_{l,i,|\mathcal I_{l,i}|}\}\subset[N_{l-1}+1]$ be a set of indices that characterizes the affine map used for computing an input to the $i$-th hidden neuron in the layer $l$.
Let $N=\sum_{i=1}^{L-1}N_l$ be the total number of hidden neurons and $I=\sum_{l=1}^L\sum_{i=1}^{N_l}|\mcI_{l,i}|$ be the total number of free parameters.
Under this setup, we define a neural network $f_{\theta, \mathcal{I}}(\,\cdot\,;\bbF):\bbF^d\to\bbF$ parameterized by $\theta\in\bbF^I$ with  $\mathcal{I}\defeq(\mathcal I_{1,1},\dots,\mathcal I_{1,N_l},\dots,\mathcal I_{L,1},\dots,\mathcal I_{1,N_L})$ via the following recursive relationship: for each $l\in[L]$,

\begin{align}
f_{\theta,\mathcal{I}}(\bfx;\bbF)&\defeq\phi_L(\bfx;\bbF),\notag\\
\phi_l(\bfx;\bbF)&\defeq\big(\phi_{l,1}(\bfx;\bbF),\dots,\phi_{l,N_l}(\bfx;\bbF)\big),\notag\\
\phi_{l,i}(\bfx;\bbF)&\defeq  \;  \aff_\bbF\big(\psi_{l-1}(\bfx;\bbF),\bfw_{l,i},\mathcal I_{l,i}\big) = \bigoplus_{j =1 }^{|\mathcal I_{l,i}|}(w_{l,i,j} \otimes \psi_{l-1}(\bfx;\bbF)_{\iota_{l,i,j}}), \; \notag\\
\psi_l(\bfx;\bbF)&\defeq\big(\sigma(\phi_{l,1}(\bfx;\bbF)),\dots,\sigma(\phi_{l,N_l}(\bfx;\bbF))\big), \label{eq:def-nn}
\end{align}
where $\psi_0(\bfx;\bbF)\defeq\bfx$, $\sigma:\bbF\to\bbF$ denotes a pointwise activation function, and $\theta$ denote the concatenations of all $w_{l,i,k}$, i.e., $\theta\in\bbF^I$. %
In the definition of neural networks, we note that $\phi_{l,i}(\bfx;\bbF)$ denotes the mapping of the input $\bfx$ to the feature of the $i$-th neuron at the $l$-th layer \emph{before} applying the activation function of the $l$-th layer, and $\psi_{l,i}(\bfx;\bbF)$ denotes the mapping of the input $\bfx$ to the feature of the $i$-th neuron at the $l$-th layer \emph{after} applying the activation function of the $l$-th layer. %
See \cref{figure:nn} for an illustration of the neural network example.

\begin{figure}
\centering
\includegraphics[width=\textwidth]{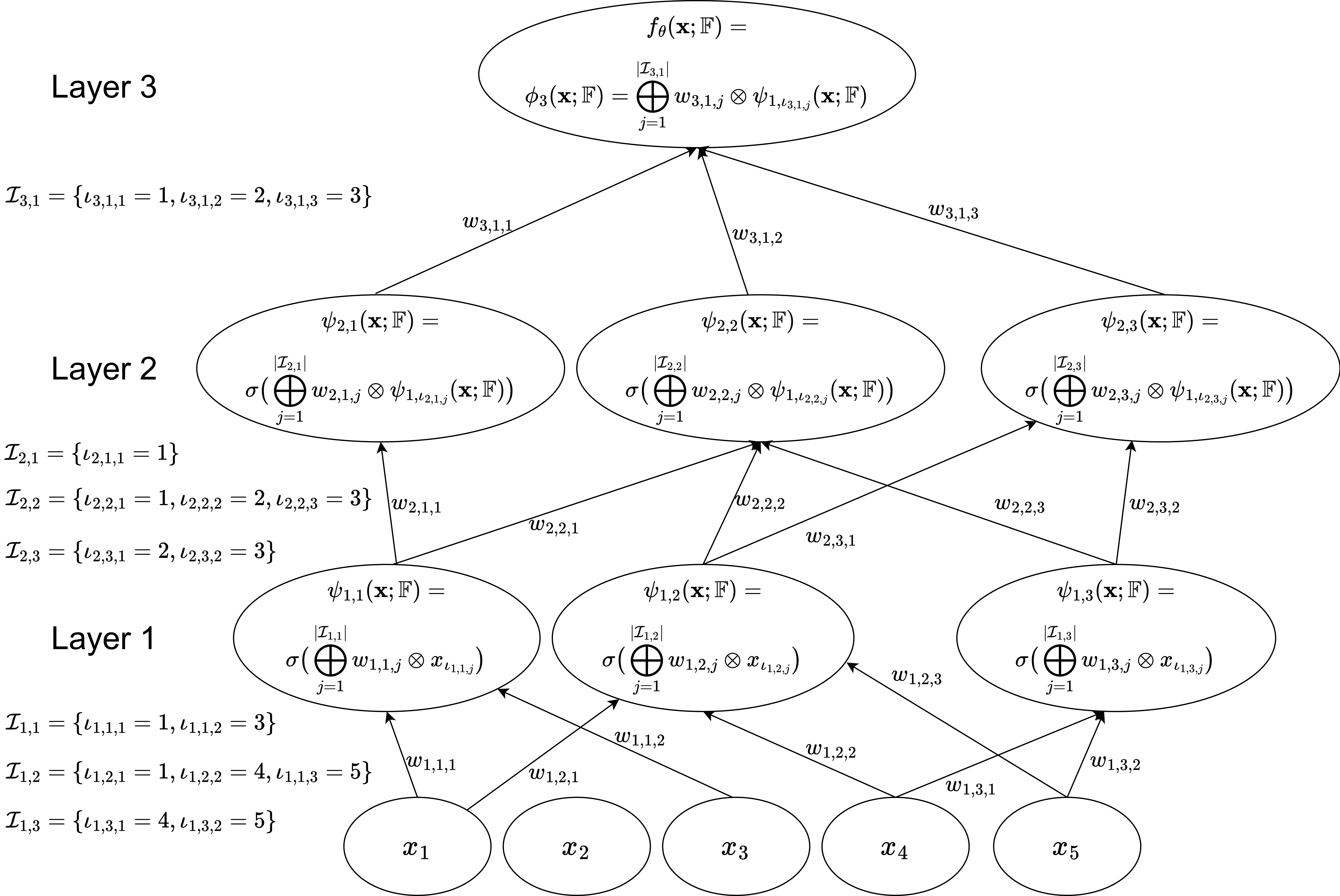}
\caption{An illustration of a neural network is presented in \cref{eq:def-nn}.  In this example, we set the parameters as follows: $d=5$, $L=3$, and $N_1=N_2=3$. 
}
\label{figure:nn}
\end{figure}

Namely, such a network has $L$ layers, $N$ (hidden) neurons, and $I$ parameters, where each layer $l$ has $\sum_{i=1}^{N_l}|\mathcal I_{l,i}|$ free parameters. Note that if $\mathcal{I}_{l,i} = [N_{l-1}+1]$ for all $l,i$, then the network is fully-connected.
For notational simplicity, we often omit $\mathcal{I}$ and use $f_{\theta}$. %
We say a neural network $f_\theta$ is a \emph{$\sigma$ network} if $f_\theta$ uses $\sigma$ as its activation function in \cref{eq:def-nn} for all layers $l\in[L]$. Note that within our network architecture, every input, output, parameter of the network, and number occurring during the intermediate calculation is in $\bbF$. Furthermore, all operations performed adhere to floating-point arithmetic.
\subsection{Memorization}
For a set of floating-point numbers $\bbF$, $\mathcal Z\subset\bbF^d$, and $\mathcal Y\subset\bbF$, we say a neural network $f_\theta(\,\cdot\,;\bbF):\bbF^d\to\bbF$ of $I$ parameters can \emph{memorize any set of $n$ pairs in $\mathcal Z\times\mathcal Y$} if for any 
$\{(\bfz_1,y_1),\dots,(\bfz_n,y_n)\}\subset\mathcal Z\times\mathcal Y$ with $\bfz_i\ne \bfz_j$ for all $i\ne j$, there exists a parameter configuration $\theta\in\bbF^I$ such that
$f_\theta(\bfz_i;\bbF)=y_i$
for all $i\in[n]$.
\subsection{Universal approximation}
For $\bbF\in\mcF$, a compact set $\mcX\subset\bbR^d$, and $\mcY\subset\bbR$, we say $\sigma$ networks can \emph{universally approximate} $C(\mcX,\mcY)$ under $\bbF$ if for each $f^*\in C(\mcX,\mcY)$ and $\varepsilon>0$, there exists a $\sigma$ network $f_\theta(\,\cdot\,;\bbF):\bbF^d\to\bbF$ such that
\begin{equation*}
|f_\theta(\bfx;\bbF)-f^*(\bfx)|\le|f^*(\bfx)-\round{f^*(\bfx)}_{\bbF}|+\varepsilon
\end{equation*}
for all $\bfx\in\mathcal X\cap\bbF^d$.
As we described in \cref{sec:contribution}, the term $|f^*(\bfx)-\round{f^*(\bfx)}_{\bbF}|$ is an intrinsic error from the representation of floating-point numbers; one cannot obtain a smaller error than this.

\section{Expressive power of neural networks under $\bbF_p$}\label{sec:fp}

We are now ready to present our main results on the universal approximation and memorization properties of neural networks under floating-point inputs, parameters, and operations.
In particular, we first introduce our results for $\bbF_p$ in \cref{sec:fp}, and then extend these results to $\bbF_{p,q}$ in \cref{sec:fpqr}. %
We defer the proofs of all results in this section to \cref{sec:pfresults-fp}.
\subsection{$\step$ network results}\label{sec:stepfp}%
To show our memorization and universal approximation results for $\step$ network under $\bbF_p$, we explicitly construct $\step$ networks that can memorize arbitrary $n$ pairs or universally approximate a target function.
Here, our $\step$ network constructions consist of indicator functions for $d$-dimensional cubes and affine transformations, where such multi-dimensional indicator functions can be implemented by a single $\step$ network architecture as in the following lemma. 
The proof of \cref{lem:multidim_step-temp} is presented in \cref{sec:pflem:multidim_step-temp}.
\begin{lemma}\label{lem:multidim_step-temp}
For any $p\in\bbN$ and $d\in[2^{p}]$, there exists a $\step$ network $f_{\theta}(\,\cdot\,;\bbF_p):\bbF_p^d\to\bbF_p$ of $3$ layers and $6d+2$ parameters that satisfies the following: 
for any $\alpha=(\alpha_1,\dots,\alpha_d),\beta=(\beta_1,\dots,\beta_d)\in\bbF_p^d$
with $\alpha_i<\beta_i$ for all $i\in[d]$, there exists $\theta_{\alpha,\beta}\in\bbF_p^{6d+2}$ such that
$$f_{\theta_{\alpha,\beta}}(\bfx;\bbF_p)=\indc{\bfx\in\prod_{i=1}^d[\alpha_i,\beta_i]}$$
for all $\bfx\in[0,1]^d$. 
\end{lemma}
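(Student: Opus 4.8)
The plan is to build the $d$-dimensional box indicator as an AND of $d$ one-dimensional interval indicators, implemented as a single fixed $\step$ network architecture whose weights depend on $\alpha,\beta$. First I would handle one coordinate: for fixed $i$, I want a small $\step$ gadget computing $\indc{x_i\in[\alpha_i,\beta_i]}$ on $[0,1]$. Recall $\step(x)=\indc{x\ge 0}$. The natural idea is $\indc{x_i\ge\alpha_i}-\indc{x_i>\beta_i}$, but $\step$ gives us $\ge$, not $>$, so I would instead use $\indc{x_i\ge\alpha_i}-\indc{x_i\ge\beta_i^+}$, where $\beta_i^+$ is the successor of $\beta_i$ in $\bbF_p$ (which exists since $\beta_i$ is finite and $\bbF_p$ has unbounded exponent). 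Because the inputs $x_i$ range only over $\bbF_p$, having $x_i\ge\beta_i^+$ is equivalent to $x_i>\beta_i$, so on $[0,1]\cap\bbF_p$ this equals $\indc{x_i\in[\alpha_i,\beta_i]}$. Each such gadget is two $\step$ neurons in layer~1 (inputs $x_i\ominus\alpha_i$ and $x_i\ominus\beta_i^+$, i.e.\ two parameters each: a unit weight on $x_i$ and a bias), so layer~1 has $2d$ neurons and $4d$ parameters; note the affine preactivations here are exact since subtracting floating-point numbers of comparable magnitude and the relevant rounding does not change the sign of the comparison — this is the kind of routine floating-point bookkeeping I would need to check carefully.

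Next I would combine the $d$ per-coordinate bits into a conjunction. Layer~2 computes a single neuron $s\defeq\step\bigl(\bigoplus_{i=1}^d(\text{bit}_i^{\ge}\ominus\text{bit}_i^{>}) \ominus (d-1) - \tfrac12\bigr)$ type expression: the sum of the $d$ interval-indicator values is an integer in $\{0,1,\dots,d\}$, and it equals $d$ exactly when $\bfx$ lies in the box, so thresholding at $d-\tfrac12$ isolates that case. This requires $d\le 2^p$ so that integers up to $d$ and the sum are represented exactly in $\bbF_p$ and the summation $\bigoplus$ incurs no rounding (each partial sum is a small integer, hence exact). That is one neuron with $2d+1$ parameters (weights $+1$ on the $\ge$-bits, $-1$ on the $>$-bits, and a bias). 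A trivial third layer (identity, one neuron, two parameters: unit weight and zero bias) passes the output through, matching the "$3$ layers" in the statement; alternatively the bias accounting is arranged so the total is exactly $6d+2$ parameters. I would do the final parameter count at the end: $4d$ (layer~1) $+(2d+1)$ (layer~2) $+ 1$ (layer~3 bias, or a reshuffle) $=6d+2$.

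The main obstacle I anticipate is not the combinatorial structure but the floating-point exactness claims at each stage: (i) that $x_i\ominus\alpha_i$ and $x_i\ominus\beta_i^+$ round so that $\step$ of them gives the true comparisons $x_i\ge\alpha_i$ and $x_i\ge\beta_i^+$ for all $x_i\in[0,1]\cap\bbF_p$ — the subtlety is when $x_i$ and $\alpha_i$ are close and the rounded difference is a tiny nonzero float or underflows to $0$, which in $\bbF_p$ (unbounded exponent, no underflow) is benign but still needs the observation that $\round{a-b}=0 \iff a=b$ and $\sign\round{a-b}=\sign(a-b)$; and (ii) that the layer-2 summation of at most $d\le 2^p$ values in $\{-1,0,1\}$ stays within the exactly-representable integers so $\bigoplus$ equals the true sum, and thresholding at the half-integer $d-\tfrac12\in\bbF_p$ (representable since $d\le 2^p$) behaves correctly. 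Once these local exactness lemmas are in place, the rest is the routine assembly and parameter count described above, and I would also remark that the construction never produces $\pm\infty$ or NaN since all intermediate values are bounded integers or differences of order-$1$ quantities.
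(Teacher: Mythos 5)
Your overall architecture is the same as the paper's: one layer of $2d$ $\step$ neurons doing per-coordinate comparisons, a second $\step$ neuron that thresholds the (exactly computed) integer sum of the resulting bits, and a trivial output layer, with the same $4d+(2d+1)+1=6d+2$ parameter count and the same floating-point bookkeeping (sign-correctness of $\round{a-b}$ in $\bbF_p$, exactness of small-integer sums via the analogue of \cref{lem:integerexact_fp}).

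There is, however, one genuine flaw in how you encode the upper bound. You implement $\indc{x_i\le\beta_i}$ as $\indc{x_i\ge\beta_i^+}$ (subtracted off), and you justify the existence of the successor $\beta_i^+$ by the unboundedness of the exponent. This is exactly backwards for $\beta_i=0$: the paper explicitly notes that $0^+$ does not exist in $\bbF_p$, precisely because the exponent is unbounded below, and the lemma does allow $\beta_i=0$ (e.g.\ $\alpha_i<0=\beta_i$). Moreover no strict-threshold surrogate can repair this case, since $[0,1]\cap\bbF_p$ contains arbitrarily small positive floats, so $\indc{x>0}$ is not of the form $\indc{x\ge t}$ for any fixed $t\in\bbF_p$ on this domain. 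The fix is what the paper does: encode the upper bound directly as a non-strict comparison, $\step(-x_i\oplus\beta_i)=\indc{x_i\le\beta_i}$ (weight $-1$, bias $\beta_i$), so each coordinate contributes two $\{0,1\}$ bits whose sum is $2$ exactly on $[\alpha_i,\beta_i]$, and threshold the total at $2d$; no successor is ever needed. With that change (and tightening your slightly loose third-layer parameter accounting to a single unit weight), your argument matches the paper's proof.
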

\cref{lem:multidim_step-temp} shows that if the input dimension $d$ smaller than or equal to $2^{p}$, a three-layer $\step$ network of $O(1)$ parameters can represent any indicator function for a $d$-dimensional cube under $\bbF_p$. 
We note that the parameter $\theta_{\alpha,\beta}$ of this network is a function of the cube $\Pi_{i=1}^d[\alpha_i,\beta_i]$ in the indicator function.

Using \cref{lem:multidim_step-temp}, we show the existence of a $\step$ network $f_\theta(\,\cdot\,;\bbF_p):\bbF_p^d\to\bbF_p$ %
such that for any
$\mathcal D=\{(\bfz_1,y_1),\dots,(\bfz_n,y_n)\}\subset\bbF_p^d\times\bbF_p$ with $\bfz_i\ne\bfz_j$ for all $i\ne j$, there exists $\theta_{\mathcal D}$ satisfying
$$f_{\theta_{\mathcal D}}(\bfx;\bbF_p)=\sum_{i=1}^ny_i\otimes \indc{\bfx\in\prod_{j=1}^d[z_{ij},z_{ij}]}.$$
Such a result is formally stated in the following theorem; its proof is presented in \cref{sec:pfthm:memstepfp-temp}.
\begin{theorem}\label{thm:memstepfp-temp}
For any $p\in\bbN$ and $d\in[2^{p}]$, there exists a $\step$ network $f_{\theta}(\,\cdot\,;\bbF_p):\bbF_p^d\to\bbF_p$ of $3$ layers and {$6dn+2n$} parameters satisfying the following: for any $\mathcal D=\{(\bfz_1,y_1),\dots,(\bfz_n,y_n)\}\subset\bbF_p^d\times\bbF_p$ with $\bfz_i\ne\bfz_j$ for all $i\ne j$, there exists $\theta_{\mathcal D}$ such that
$$f_{\theta_{\mathcal D}}(\bfz_i)=y_i\quad\forall i\in[n].$$
\end{theorem}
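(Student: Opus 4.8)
The plan is to build the memorizing $\step$ network as a single affine combination of $n$ indicator functions, one per data point, reusing \cref{lem:multidim_step-temp} as the building block. First I would invoke \cref{lem:multidim_step-temp} to get, for each $i\in[n]$, a $3$-layer $\step$ network that computes $\indc{\bfx\in\prod_{j=1}^d[z_{ij},z_{ij}]}$ on $[0,1]^d$; note that a degenerate ``cube'' $\prod_j[z_{ij},z_{ij}]=\{\bfz_i\}$ is allowed provided we can still realize it, so I would first check that the construction in \cref{lem:multidim_step-temp} (which assumes $\alpha_i<\beta_i$) extends to the singleton case, e.g. by intersecting the half-space tests $\step(x_j-z_{ij})$ and $\step(z_{ij}-x_j)$ so that the product/threshold picks out exactly $\bfx=\bfz_i$; this gives an indicator of $\{\bfz_i\}$ with $O(d)$ parameters in $3$ layers. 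Then I would place all $n$ such subnetworks in parallel (they share the input layer, so widths add but depth stays $3$), and in the output layer take the floating-point affine combination $\bigoplus_{i=1}^n (y_i\otimes \indc{\bfx\in\{\bfz_i\}})$, which uses $n$ more parameters. The total is $3$ layers and $6dn+2n$ parameters, matching the statement.

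The correctness argument is where the floating-point care enters. Evaluated at a data point $\bfz_k$, exactly one indicator fires: $\indc{\bfz_k\in\{\bfz_i\}}=\indc{i=k}$, using the hypothesis $\bfz_i\ne\bfz_j$ for $i\ne j$ so the singleton cubes are disjoint and $\bfz_k$ lies in exactly one of them. Hence the output-layer sum becomes $\bigoplus_{i=1}^n (y_i\otimes b_i)$ where $b_i=\indc{i=k}\in\{0,1\}\subset\bbF_p$. Now $y_i\otimes 0 = \round{y_i\times 0}=0$ exactly, and $y_k\otimes 1 = \round{y_k\times 1}=y_k$ exactly, since $0$ and $y_k$ are already in $\bbF_p$ and multiplication by $0$ or $1$ is exact. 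So the summand sequence is $0,\dots,0,y_k,0,\dots,0$. Because $0$ is the additive identity under $\oplus$ in $\bbF_p$ (indeed $\round{y+0}=y$ for any $y\in\bbF_p$) and $\bbF_p$ has no overflow/underflow, the left-to-right fold $\bigoplus$ collapses to $y_k$ regardless of where $k$ sits in the ordering. Thus $f_{\theta_{\mathcal D}}(\bfz_k)=y_k$ for all $k$. One should also record that no intermediate computation produces $\pm\infty$ or NaN, which is immediate here since every intermediate value is either an input coordinate, a $\step$ output in $\{0,1\}$, or one of the fixed floats $0,y_i$.

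The main obstacle I anticipate is the singleton-cube issue: \cref{lem:multidim_step-temp} is stated only for nondegenerate cubes $\alpha_i<\beta_i$, but memorization needs the point masses $\indc{\bfx\in\prod_j[z_{ij},z_{ij}]}$. Resolving this cleanly — either by a small modification of the \cref{lem:multidim_step-temp} construction to allow $\alpha_i=\beta_i$ (testing $\step(x_j-z_{ij})\wedge\step(z_{ij}-x_j)$ and AND-ing the $2d$ threshold outputs, which still fits in $3$ layers and $O(d)$ parameters per point), or by proving a separate one-line lemma for the singleton case — is the only nontrivial step; everything else is the exact-arithmetic bookkeeping above. A secondary, purely cosmetic point is confirming that stacking $n$ three-layer networks in parallel and then applying one output affine map keeps the depth at $3$ (the $n$ copies share layers $1$ and $2$, and their layer-$3$ outputs feed a single final affine node), so that the $3$-layer, $6dn+2n$-parameter count in the statement is met exactly rather than merely up to constants.
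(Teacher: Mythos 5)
Your proposal is correct and follows essentially the same route as the paper: the paper likewise builds $f_{\theta_{\mathcal D}}(\bfx)=\bigoplus_{i=1}^n\big(y_i\otimes\indc{\bfx\in\prod_j[z_{i,j},z_{i,j}]}\big)$ by applying the construction of \cref{lem:multidim_step-temp} with degenerate cubes $\alpha=\beta=\bfz_i$ in parallel, yielding $3$ layers and $4dn+(2dn+n)+n=6dn+2n$ parameters. The singleton-cube concern you flag is handled exactly as you suggest — the lemma's construction $g_{2j-1}(x)=\indc{-x\oplus\beta_j\ge0}$, $g_{2j}(x)=\indc{x\ominus\alpha_j\ge0}$ with the threshold at $2d$ works verbatim when $\alpha_j=\beta_j$ — and your exactness argument for the output sum (multiplication by $0$ or $1$ and addition of zeros are exact in $\bbF_p$) is the same reasoning the paper leaves implicit.
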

\cref{thm:memstepfp-temp} shows that even under floating-point operations, $\step$ networks can successfully memorize finite datasets represented by floating-point numbers in $\bbF_p$. Furthermore, 
it states that $O(n)$ parameters are sufficient for $\step$ networks to memorize arbitrary $n$ pairs under $\bbF_p$; this coincides with the real-valued parameters and exact operation case \cite{baum88, huang98, vershynin20, yun19} that are known to be tight up to a logarithmic multiplicative factor for networks of $O(1)$ layers using piecewise linear activation functions \cite{bartlett19}.
Namely, the required number of parameters for memorization under $\bbF_p$ does not decrease compared to existing results assuming real operations.

We next show that $\step$ networks can universally approximate $C([0,1]^d,\bbR)$ under $\bbF_p$. %
See \cref{sec:pfthm:univstepfp-temp} for the proof of \cref{thm:univstepfp-temp}.
\begin{theorem}\label{thm:univstepfp-temp}
For any $p\in\bbN$, $d\in[2^{p}]$,  $f^*\in C([0,1]^d,\bbR)$, and $\varepsilon>0$, there exists a $\step$ network $f_{\theta}(\,\cdot\,;\bbF_p):\bbF_p^d\to\bbF_p$ of {$3$} layers and at most {$(6d+2)K^d$} parameters %
where $K=\min\{k\in\bbN:(\omega_{f^*}^{-1}(\varepsilon))^{-1}\le k\}$ such that 
$$|f_\theta(\bfx;\bbF_p)-f^*(x)|\le|f^*(\bfx;\bbF_p)-\round{f^*(x)}|+\varepsilon\quad\forall\bfx\in[0,1]^d\cap\bbF_p^d.$$
\end{theorem}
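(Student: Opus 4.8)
The plan is to approximate $f^*$ by a step function that is constant on the cells of a floating-point grid, just as in the classical universal approximation argument, but with every interval endpoint taken from $\bbF_p$. Fix $f^*$ and $\varepsilon$ and put $\delta\defeq\omega_{f^*}^{-1}(\varepsilon)$. Along each coordinate I would choose grid nodes from $[0,1]\cap\bbF_p$, starting at $0$ and ending at $1$, so that consecutive nodes are at distance at most $\delta$ and there are at most $K$ of them; taking products gives at most $K^d$ cells $Q_1,\dots,Q_M$ on each of which $\sup_{Q_j}f^*-\inf_{Q_j}f^*\le\omega_{f^*}(\delta)\le\varepsilon$. Since the closed cubes of \cref{lem:multidim_step-temp} overlap on shared faces, I would turn the cells into a genuine partition of the floating-point points $[0,1]^d\cap\bbF_p^d$ by the ``next-float'' device: along each coordinate, if consecutive nodes are $t<t'$ the lower cell keeps $[\,\cdot\,,t]$ and the next one starts at $t^+$, so the cells are pairwise disjoint as subsets of $\bbR^d$ yet still cover every float point exactly once.

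On cell $Q_j$ the network outputs the single value $v_j\defeq\round{m_j}$, where $m_j$ is the midpoint of $[\inf_{Q_j}f^*,\ \sup_{Q_j}f^*]$ and $\round{\cdot}$ denotes rounding in $\bbF_p$. The network itself is $f_\theta(\bfx)=\bigoplus_{j=1}^M\big(v_j\otimes g_j(\bfx)\big)$, where $g_j$ is the $\{0,1\}$-valued indicator of $Q_j$ supplied by \cref{lem:multidim_step-temp}; concretely, one runs $M\le K^d$ copies of that gadget in parallel and fuses their output affine maps into a single final layer with weights $(v_j)_j$, which keeps the depth at $3$ and the parameter count at most $(6d+2)K^d$ (and the hypothesis $d\le 2^p$ is exactly what \cref{lem:multidim_step-temp} requires). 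For any input $\bfx\in[0,1]^d\cap\bbF_p^d$ exactly one $g_j(\bfx)$ equals $1$ and all others are $0$; since $v\otimes 1=v$, $v\otimes 0=0$, and $a\oplus 0=0\oplus a=a$ exactly for $a\in\bbF_p$, the summation collapses to $f_\theta(\bfx)=v_{j(\bfx)}$ with no extra rounding error introduced by the output layer.

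It then remains to bound the error. Fix $\bfx\in Q_j\cap\bbF_p^d$ and write $r_j\defeq\sup_{Q_j}f^*-\inf_{Q_j}f^*\le\varepsilon$, so $|m_j-f^*(\bfx)|\le r_j/2$. Because $v_j$ is a float nearest to $m_j$ while $\round{f^*(\bfx)}$ is also a float, $|v_j-m_j|\le|m_j-\round{f^*(\bfx)}|\le|m_j-f^*(\bfx)|+|f^*(\bfx)-\round{f^*(\bfx)}|\le r_j/2+|f^*(\bfx)-\round{f^*(\bfx)}|$. Hence $|f_\theta(\bfx)-f^*(\bfx)|=|v_j-f^*(\bfx)|\le|v_j-m_j|+|m_j-f^*(\bfx)|\le r_j+|f^*(\bfx)-\round{f^*(\bfx)}|\le\varepsilon+|f^*(\bfx)-\round{f^*(\bfx)}|$, which is the assertion. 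Using the midpoint of the range of $f^*$ over $Q_j$ (rather than, say, the value of $f^*$ at a corner of $Q_j$) is precisely what keeps each half of this estimate at $r_j/2$, so that a single $\varepsilon$ suffices instead of $2\varepsilon$.

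The delicate step is the floating-point grid construction in the first paragraph. Unlike $\bbR$, $\bbF_p$ is not uniformly spaced and accumulates at $0$, so $[0,1]^d\cap\bbF_p^d$ is infinite and cannot simply be tiled into equal pieces; moreover, when $\varepsilon$ is so small that $\omega_{f^*}^{-1}(\varepsilon)<2^{-p}$, consecutive floats near magnitude $1$ are already farther apart than $\delta$. I would handle the accumulation at $0$ with a single cell absorbing all sufficiently small floats in each coordinate (on which $f^*$ varies by at most $\varepsilon$ by continuity), use width-$\le\delta$ cells wherever the local float spacing permits, and fall back to one cell per float elsewhere; the bookkeeping needed to verify that this still uses at most $K$ nodes per coordinate---and that every resulting cell has two distinct floating-point endpoints, so that \cref{lem:multidim_step-temp} applies verbatim---is the part that requires genuine care. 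Everything else (the choice of $v_j$, the exactness of the output summation, and the depth and parameter count) is routine once the grid is fixed.
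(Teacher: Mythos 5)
Your proposal is correct and follows the same skeleton as the paper's proof: a per-coordinate grid with floating-point endpoints (disjointified by passing to the next float), one indicator gadget from \cref{lem:multidim_step-temp} per cell, a fused output layer with one weight per cell, exact collapse of the output sum since exactly one indicator fires, and the count $(6d+2)K^d$. The one genuine difference is the choice of the per-cell output value and the resulting error analysis. The paper outputs $\round{f^*(\boldsymbol{\gamma}_{\boldsymbol{\iota}})}$ where $\boldsymbol{\gamma}_{\boldsymbol{\iota}}$ minimizes the intrinsic rounding error $|f^*(\bfx)-\round{f^*(\bfx)}|$ over the (infinite) set of floats in the cell; making this well defined is exactly the content of \cref{lem:bestrounding}, which needs the compactness/limit-point argument of \cref{lem:seqconv}. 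You instead output the rounding of the midpoint $m_j$ of the range of $f^*$ over the cell and use the nearest-float property of $\round{\cdot}$ ($|\round{m_j}-m_j|\le|m_j-\round{f^*(\bfx)}|$) plus $|m_j-f^*(\bfx)|\le r_j/2$ to get the same bound $r_j+|f^*(\bfx)-\round{f^*(\bfx)}|\le\varepsilon+|f^*(\bfx)-\round{f^*(\bfx)}|$; this bypasses \cref{lem:bestrounding} entirely and is a legitimate simplification. Conversely, the part you flag as delicate is lighter than you fear: taking each one-dimensional cell to be $[\alpha_{i-1}^{(\ge,\bbF_p)},\alpha_i^{(<,\bbF_p)}]\cap\bbF_p$ (and $[\alpha_{K-1}^{(\ge,\bbF_p)},\alpha_K^{(\le,\bbF_p)}]\cap\bbF_p$ at the right end), discarding empty cells, already yields disjoint cells of width strictly less than $\delta$ that partition $[0,1]\cap\bbF_p$, with the accumulation at $0$ and the coarse float spacing near $1$ causing no trouble; and degenerate cells containing a single float are harmless, since the construction behind \cref{lem:multidim_step-temp} works verbatim with $\alpha_i=\beta_i$ (the paper itself uses it that way in the proof of \cref{thm:memstepfp-temp}), so no special ``two distinct endpoints'' bookkeeping is needed.
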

\cref{thm:univstepfp-temp} states that given a target continuous function $f^*:[0,1]^d\to\bbR$ and $\varepsilon>0$, $O(\omega_{f^*}^{-1}(\varepsilon)^{-d})$ parameters are sufficient for approximating $f^*$ in $|f^*(x;\bbF_p)-\round{f^*(x)}|+\varepsilon$ error; this result easily generalizes to an arbitrary compact domain in $\bbR^d$ instead of $[0,1]^d$. The number of parameters in the network in \cref{thm:univstepfp-temp} is similar to existing results under real-valued parameters and the exact operations for $\relu$ networks of $O(1)$ layers: $(\omega_{f^*}^{-1}(\Theta(\varepsilon)))^{-d}$ parameters are necessary and sufficient for approximation in $\varepsilon$ error \cite{yarotsky18}.

The error bound in \cref{thm:univstepfp-temp} contains an additional term $|f^*(x;\bbF_p)-\round{f^*(x)}|$ compared to the classical universal approximation results. This term corresponds to an intrinsic error arising from the floating-point representation; one cannot achieve a smaller error than this.
However, if the domain of the target function contains only finite numbers in $\bbF_p^d$, then we can approximate this function in the intrinsic error without an additional error term $\varepsilon$ as in the following corollary of our memorization result (\cref{thm:memstepfp-temp}). We also note that the domain $[0,1]^d\cap\bbF_p^d$ considered in \cref{thm:univstepfp-temp} is an infinite set, and hence, cannot directly apply this corollary.
\begin{corollary}\label{cor:univstepfp-temp}
For any $p\in\bbN$, $d\in[2^{p}]$, a compact set $\mathcal K\subset\bbR^d$ such that $|\mathcal K\cap\bbF_p^d|<\infty$, $f^*\in C(\mathcal K,\bbR)$, and $\varepsilon>0$, there exists a $\step$ network $f(\,\cdot\,;\bbF_p):\bbF_p^d\to\bbF_p$ of {$3$} layers and {$6d|\mathcal K\cap\bbF_p^d|+2|\mathcal K\cap\bbF_p^d|$} parameters %
such that 
$$|f_\theta(\bfx;\bbF_p)-f^*(\bfx)|=|f^*(\bfx;\bbF_p)-\round{f^*(\bfx)}|\quad\forall\bfx\in[0,1]^d\cap\bbF_p^d.$$
\end{corollary}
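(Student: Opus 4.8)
The plan is to derive \cref{cor:univstepfp-temp} directly from the memorization result \cref{thm:memstepfp-temp} by treating the finite set $\mathcal K\cap\bbF_p^d$ as the dataset to be memorized. First I would enumerate $\mathcal K\cap\bbF_p^d=\{\bfx_1,\dots,\bfx_m\}$, where $m=|\mathcal K\cap\bbF_p^d|<\infty$; this is a finite set of distinct points in $\bbF_p^d$. For each $\bfx_i$, the target value $f^*(\bfx_i)\in\bbR$ may not lie in $\bbF_p$, so I would set $y_i\defeq\round{f^*(\bfx_i)}_{\bbF_p}\in\bbF_p$, the nearest floating-point number. This produces a dataset $\mathcal D=\{(\bfx_1,y_1),\dots,(\bfx_m,y_m)\}\subset\bbF_p^d\times\bbF_p$ with $\bfx_i\ne\bfx_j$ for $i\ne j$.

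Next I would invoke \cref{thm:memstepfp-temp} with $n=m$: there exists a $\step$ network $f_\theta(\,\cdot\,;\bbF_p):\bbF_p^d\to\bbF_p$ of $3$ layers and $6dm+2m=6d|\mathcal K\cap\bbF_p^d|+2|\mathcal K\cap\bbF_p^d|$ parameters, and a parameter setting $\theta_{\mathcal D}$ such that $f_{\theta_{\mathcal D}}(\bfx_i)=y_i$ for all $i\in[m]$. Then for every $\bfx\in\mathcal K\cap\bbF_p^d$, we have $\bfx=\bfx_i$ for some $i$, so
$$|f_{\theta_{\mathcal D}}(\bfx;\bbF_p)-f^*(\bfx)|=|y_i-f^*(\bfx_i)|=|f^*(\bfx_i)-\round{f^*(\bfx_i)}_{\bbF_p}|=|f^*(\bfx)-\round{f^*(\bfx)}_{\bbF_p}|,$$
which is exactly the claimed identity. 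The parameter count and layer count match the statement.

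I do not anticipate a serious obstacle here, since the corollary is essentially a repackaging of the memorization theorem. The one point requiring a little care is the mismatch between the corollary's domain: the statement writes $\forall\bfx\in[0,1]^d\cap\bbF_p^d$, whereas the natural quantifier (and the one that makes the proof go through verbatim) is $\forall\bfx\in\mathcal K\cap\bbF_p^d$ — I would read the displayed conclusion as ranging over $\mathcal K\cap\bbF_p^d$ (the finite set on which memorization is performed), which is presumably a typo in the statement. A second minor check is that $f^*$ is only assumed continuous on $\mathcal K$, but continuity plays no role beyond ensuring $f^*(\bfx_i)$ is well-defined as a real number; the rounding $\round{\cdot}_{\bbF_p}$ is always defined on $\bbR$ for $\bbF_p$ since $\bbF_p$ has unbounded exponent (no overflow), so each $y_i$ is a genuine element of $\bbF_p$ and \cref{thm:memstepfp-temp} applies without any side condition.
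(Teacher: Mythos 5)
Your proof is correct and matches the paper's intended argument exactly: the paper presents this corollary as a direct consequence of \cref{thm:memstepfp-temp}, obtained by memorizing the finite dataset $\{(\bfx,\round{f^*(\bfx)}):\bfx\in\mathcal K\cap\bbF_p^d\}$, which is precisely what you do (and your reading of the displayed quantifier as ranging over $\mathcal K\cap\bbF_p^d$ is the right one).
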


\subsection{$\relu$ network results}\label{sec:relufp}
We next analyze the expressive power of $\relu$ networks under $\bbF_p$.
Given the $\step$ network results, a na\"ive approach can be implementing $\step$ function using two $\relu$ activation functions. For example, for any two consecutive numbers $z^-<z$ in $\bbF_p$,
\begin{align}
\indc{x\ge z}=\relu\left(\frac1{z-z^-}(x-z^-)\right)-\relu\left(\frac1{z-z^-}(x-z)\right)\label{eq:naiverelu}
\end{align}
for all $x\in\bbF_p$ under \emph{exact operations}.
However, if we consider floating-point operations in $\bbF_p$, RHS in \cref{eq:naiverelu} does not represent the indicator function anymore. 
For example, consider the indicator function $\indc{x \ge 1}$, which can be exactly implemented under the exact mathematical operations for all inputs $x\in\bbF_p$ by 
\[ f(x ; \bbR) = \relu\big( (x \times 2^p) - (1-u) \times 2^p \big) -  \relu\big((x \times 2^p) -2^p \big)  \]
where $u=2^{-p}$.
Under operations in $\bbF_p$, however, $f(x ; \bbF_p)$ with the following form cannot represent $\indc{x \ge 1}$ for some $x\in\bbF_p$: %
\begin{align*}
    f(x ; \bbF_p) &= \relu\big( (x \times 2^p) \ominus ((1-u) \times 2^p) \big) \ominus  \relu\big((x \times 2^p) \ominus 2^p \big)  . 
\end{align*}
Specifically, the output values of $f(x ; \bbF_p)$ can be exactly characterized as follows: for $x\in\bbF_p$,
\begin{align}
    f(x ; \bbF_p) &=  \begin{cases}
    0 \; &\text{if} \;  x < 1,    \\ 
    1 \; &\text{if} \;  1 \le x < 1.1 \times 2^1, \\ 
    1 + (-1)^{n_x+1} \; &\text{if} \;  1.1 \times 2^1  \le x < 1.01 \times 2^2 , \\ 
    0 \; &\text{if} \;  x \ge 1.01 \times 2^2, 
\end{cases} \label{eq:ex_indc}
\end{align}
where $n_x = (x-1.1 \times 2^1)\times 2^{p-1} \in \bbN$ for $1.1 \times 2^1  \le x < 1.01 \times 2^2$.
Here, one can observe that $f(x;\bbF_p)$ becomes $0$ for $x \ge 1.01 \times 2^2$; for $1.1 \times 2^1  \le x < 1.01 \times 2^2$, it oscillates between $0$ and $2$.
Hence, unlike the exact operation case, deriving the $\relu$ network results from $\step$ network results is non-trivial under floating-point operations. 
We present the formal derivation of \cref{eq:ex_indc} in \cref{proof:ex_indc}.

Nevertheless, we observe that \cref{eq:naiverelu} under $\bbF_p$ behaves like an indicator function for $x$ close to $z$. Using this property, we implement an indicator function as follows: using a $\relu$ network, we first map an input $x$ to $x'$ in some local neighborhood of $z$ such that $x'\ge z$ if and only if $x\ge z$; we then apply \cref{eq:naiverelu} to $x'$.
This construction of an indicator function via $\relu$ networks enables us to show memorization and universal approximation properties of $\relu$ networks as in the following theorems. The proofs of \cref{thm:memrelufp-temp,thm:univrelufp-temp} are presented in \cref{sec:pfthm:memrelufp-temp,sec:pfthm:univrelufp-temp}, respectively. %
\begin{theorem}\label{thm:memrelufp-temp}
For any $p\in\bbN$ and $d\in[2^{p}]$, there exists a $\relu$ network $f_{\theta}(\,\cdot\,;\bbF_p):\bbF_p^d\to\bbF_p$ of {$4$ layers and $20dn+2n$ parameters} satisfying the following: for any $\mathcal D=\{(\bfz_1,y_1),\dots,(\bfz_n,y_n)\}\subset\bbF_p^d\times\bbF_p$ with $\bfz_i\ne\bfz_j$ for all $i\ne j$, there exists $\theta_{\mathcal D}$ such that
$$f_{\theta_{\mathcal D}}(\bfz_i)=y_i\quad\forall i\in[n].$$
\end{theorem}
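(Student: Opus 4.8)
The plan is to run the argument behind \cref{thm:memstepfp-temp} essentially verbatim, but with every binary threshold replaced by the $\relu$ surrogate sketched just before the statement: a \emph{localization} step followed by the expression in \cref{eq:naiverelu}. First I would prove a $\relu$ analogue of \cref{lem:multidim_step-temp}: for every $z\in\bbF_p$ with $z\neq 0$ there is a gadget $g_z:\bbF_p\to\{0,1\}$ using two $\relu$ layers with $g_z(x)=\indc{x\ge z}$ for \emph{all} $x\in\bbF_p$. Its first layer localizes by the one-sided clamp $x\mapsto x'\defeq c\ominus\relu\big(c\ominus(\sigma x)\big)$, where $\sigma=1,\,c=z^+$ when $z>0$ and $\sigma=-1,\,c=(-z)^+$ when $z<0$ (only the affine map feeding the $\relu$ changes); its second layer applies \cref{eq:naiverelu} to $x'$ at the two thresholds $z$ and $z^+$, so that the affine map of the following layer can form $\indc{x=z}=\indc{x\ge z}\ominus\indc{x\ge z^+}$. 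The case $z=0$, in which $0^+$ and $0^-$ do not exist in $\bbF_p$, is handled separately by detecting $x\in[-m^-,m^-]$ with $m>0$ the smallest nonzero magnitude among the data values in that coordinate; this interval has nonzero endpoints, so it is again captured by a gadget of the same two-layer shape.

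The two things to check for $g_z$ are: (i) the clamp is \emph{safe} --- for every $x\in\bbF_p$ the value $x'$ satisfies $x'\ge z\iff x\ge z$ and $x'\ge z^+\iff x\ge z^+$; and (ii) \cref{eq:naiverelu}, applied to any value $x'$ the clamp can emit, returns $\indc{x\ge z}$. For (i) the point is monotonicity together with the fact that whenever floating-point $\min$ fails to return the true minimum it errs in the harmless direction: it collapses to (roughly) $0$ only when $x$ is far smaller in magnitude than $z$, where $x<z$ anyway, and it saturates at $c$ only when $x$ dominates $z$, where $x\ge z$; a short case split on the binade of $x$ relative to that of $z$ finishes it. For (ii), when $x'$ lies within a bounded number of floats of $z$ the subtractions $x'\ominus z^-$ and $x'\ominus z$ are exact by Sterbenz's lemma, the factor $1/(z-z^-)$ is a power of two so multiplication by it is exact and --- crucially, and unlike in $\bbF_{p,q}$ --- never overflows in $\bbF_p$, and the scaled quantities are integers of magnitude below $2^{p+1}$, hence representable, so \cref{eq:naiverelu} returns the true value in $\{0,1\}$ rather than the oscillating output exhibited in \cref{eq:ex_indc}; for the remaining outputs of the clamp ($0$ or a negative pass-through, which occur only when $x<z$) every intermediate is a $\relu$ of a negative after exact operations (negation and power-of-two scaling are always exact in $\bbF_p$), so \cref{eq:naiverelu} returns $0$, as required.

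With $g_z$ in hand I would assemble the $4$-layer network exactly as in \cref{thm:memstepfp-temp}. Layer $1$ runs all the clamps (one per pair $(i,j)$ and relevant threshold); layer $2$ runs the \cref{eq:naiverelu} blocks, and the affine map after it builds the $1$-D indicators $\indc{x_j=z_{ij}}$ and sums them to $s_i\defeq\sum_{j=1}^d\indc{x_j=z_{ij}}$; layer $3$ applies \cref{eq:naiverelu} once more, this time with \emph{no} localization, to compute $\indc{\bfx=\bfz_i}=\indc{s_i\ge d}$ --- here $s_i$ and every partial sum $\bigoplus_j$ is an integer in $\{0,\dots,d\}\subseteq\{0,\dots,2^p\}$ by the hypothesis $d\le 2^p$, so all of these operations are exact and $s_i$ stays safely below the range in which \cref{eq:naiverelu} misbehaves; finally layer $4$ outputs $\bigoplus_{i=1}^n\big(y_i\otimes\indc{\bfx=\bfz_i}\big)$. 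At $\bfx=\bfz_k$ exactly the $k$-th indicator is $1$ and the rest are $0$, $y_k\otimes 1=y_k$ holds exactly, and adding the zero terms under $\oplus$ leaves $y_k$, so the network outputs $y_k$; counting the clamp, \cref{eq:naiverelu}, and combining neurons gives the claimed $20dn+2n$ parameters, of the same order as in the exact-arithmetic case.

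The main obstacle is (i)--(ii): pinning down a localization window that is at once wide enough to absorb the rounding the clamp incurs on inputs of arbitrary exponent and narrow enough that \cref{eq:naiverelu} is provably exact inside it, and then carrying the accompanying case analysis through uniformly over all of $\bbF_p$ --- arbitrarily large, arbitrarily small in magnitude, and negative inputs alike. Everything after that --- exactness of the layer-$3$ summation, the $z=0$ corner case, and the parameter count --- is routine bookkeeping, made easier by the fact that $\bbF_p$ suffers neither overflow nor underflow, so no intermediate quantity can blow up or flush to zero.
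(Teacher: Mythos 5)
Your proposal follows essentially the same route as the paper: the paper's \cref{lem:reluindc-temp} is exactly your clamp-then-apply-\cref{eq:naiverelu} gadget (with the reflected clamp $\relu(-x\oplus(2-u)\times 2^{e_z})$ in place of your $\min$-with-$z^+$, and with thresholds $\pm\delta$, $\delta=\tfrac12\min|z_{i,j}|$, playing the role of your $\pm m^-$ for the $z_{i,j}=0$ case), and its 4-layer assembly with exact integer aggregation via \cref{lem:integerexact_fp} and a final $y_i\otimes$ combination matches your layers 2--4, the only cosmetic difference being that the paper thresholds the aggregated sum with a single $\relu(\cdot\ominus(d-1))$ rather than a second \cref{eq:naiverelu} block, which is also why its count lands exactly at $20dn+2n$. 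So the approach is the same and correct in outline; the case analysis you defer in (i)--(ii) is precisely the content of the paper's \cref{lem:reluindc-temp}.
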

\begin{theorem}\label{thm:univrelufp-temp}
For any $p\in\bbN$, $d\in[2^{p}]$, $f^*\in C([0,1]^d,\bbR)$, and $\varepsilon>0$, there exists a $\step$ network $f(\,\cdot\,;\bbF_p):\bbF_p^d\to\bbF_p$ of $4$ layers and $(20d+2)K^d$ parameters %
where  $K=\min\{k\in\bbN:(\omega_{f^*}^{-1}(\varepsilon))^{-1}\le k\}$ such that 
$$|f_\theta(x;\bbF_p)-f^*(x)|\le|f^*(x;\bbF_p)-\round{f^*(x)}|+\varepsilon.$$
\end{theorem}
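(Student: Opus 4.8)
The plan is to keep the scaffolding of the proof of \cref{thm:univstepfp-temp} intact and only swap the building block: wherever that proof invokes the step-network cube indicator of \cref{lem:multidim_step-temp}, I would instead plug in the $\relu$-network realization of a $d$-dimensional $\bbF_p$-cube indicator that is built in the argument for \cref{thm:memrelufp-temp} (the $\relu$ counterpart of \cref{lem:multidim_step-temp}, now applied to genuine, non-degenerate cubes $\prod_i[\alpha_i,\beta_i]$ with $\alpha_i<\beta_i$ in $\bbF_p$). Concretely, given $f^*\in C([0,1]^d,\bbR)$ and $\varepsilon>0$, I would first reuse verbatim the combinatorial part of the proof of \cref{thm:univstepfp-temp}: set $K=\min\{k\in\bbN:(\omega_{f^*}^{-1}(\varepsilon))^{-1}\le k\}$, take the $K^d$ axis-aligned cubes $C_1,\dots,C_{K^d}\subset[0,1]^d$ with $\bbF_p$-endpoints that cover $[0,1]^d\cap\bbF_p^d$ and are pairwise disjoint on $\bbF_p^d$, and fix a representative $\bfz_i\in C_i$; by the choice of $K$, $\|\bfx-\bfz_i\|_\infty\le 1/K\le\omega_{f^*}^{-1}(\varepsilon)$ for $\bfx\in C_i$, hence $|f^*(\bfx)-f^*(\bfz_i)|\le\varepsilon$.

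Next, I would recall the $\relu$ indicator gadget. Its idea is the one sketched before \cref{thm:memrelufp-temp}: for a threshold $z\in\bbF_p$, one first clamps an input coordinate into a small neighborhood of $z$ using two $\relu$ layers (for instance composing $x\mapsto z^+\ominus\relu(z^+\ominus x)$ with an analogous lower clamp), so that the clamped value lies on the correct side of $z$ iff $x$ did, and then applies the local two-$\relu$ formula \cref{eq:naiverelu} to the clamped value; since on that neighborhood only finitely many floating-point numbers occur, \cref{eq:naiverelu} evaluates exactly under $\oplus,\ominus,\otimes$, and because $\bbF_p$ has unbounded exponent the large weight $1/(z-z^-)$ never overflows. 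The $d$ coordinatewise indicators are then AND-ed and thresholded in one final $\relu$ layer, exactly as the last layer of \cref{lem:multidim_step-temp}. This yields, for $d\le 2^p$, a $4$-layer $\relu$ network of $20d+2$ parameters implementing $\bfx\mapsto\indc{\bfx\in\prod_i[\alpha_i,\beta_i]}$ on $[0,1]^d\cap\bbF_p^d$.

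I would then place $K^d$ independent copies $g_1,\dots,g_{K^d}$ of this gadget (with $g_i$ the indicator of $C_i$) in parallel and form the output as $\bigoplus_{i=1}^{K^d}\bigl(\round{f^*(\bfz_i)}\otimes g_i(\bfx)\bigr)$, obtaining a $4$-layer $\relu$ network with $(20d+2)K^d$ parameters by the same parameter accounting used in \cref{thm:memrelufp-temp} and \cref{thm:univstepfp-temp}. For $\bfx\in[0,1]^d\cap\bbF_p^d$, exactly one $g_i(\bfx)$ equals $1$ and the rest are $0$, so the floating-point summation collapses to $\round{f^*(\bfz_i)}\otimes 1=\round{f^*(\bfz_i)}$; the claimed error bound then follows exactly as in the proof of \cref{thm:univstepfp-temp}, writing $|f_\theta(\bfx)-f^*(\bfx)|\le|\round{f^*(\bfz_i)}-f^*(\bfz_i)|+|f^*(\bfz_i)-f^*(\bfx)|$, using $|f^*(\bfz_i)-f^*(\bfx)|\le\varepsilon$, and comparing $|\round{f^*(\bfz_i)}-f^*(\bfz_i)|$ against $|\round{f^*(\bfx)}-f^*(\bfx)|$ via optimality of rounding at $f^*(\bfz_i)$.

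The main obstacle is entirely contained in the $\relu$ cube-indicator gadget, i.e.\ in verifying (as must already be done for \cref{thm:memrelufp-temp}) that the clamp-then-local-step construction reproduces a genuine $\bbF_p$-cube indicator \emph{exactly} under $\oplus,\ominus,\otimes$ — in particular that the clamp sends each $x_i\in[0,1]\cap\bbF_p$ close enough to the relevant endpoint for \cref{eq:naiverelu} to be exact there, that the degenerate endpoints (near $0$ and near $1$, and the cube boundaries) behave correctly, and that no intermediate value is $\pm\infty$ or NaN. Once that gadget is in hand and extended from the degenerate point-cubes used for memorization to the non-degenerate cubes $C_i$, the reduction to \cref{thm:univstepfp-temp} is routine.
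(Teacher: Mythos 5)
Your overall architecture matches the paper's (parallel per-cube ReLU indicator gadgets, weighted floating-point sum, same combinatorial partition as \cref{thm:univstepfp-temp}), but there is a genuine gap in the error-bound step. You fix an \emph{arbitrary} representative $\bfz_i\in C_i$ and then compare $|\round{f^*(\bfz_i)}-f^*(\bfz_i)|$ with $|\round{f^*(\bfx)}-f^*(\bfx)|$ ``via optimality of rounding at $f^*(\bfz_i)$''. The nearest-rounding property only gives
$|\round{f^*(\bfz_i)}-f^*(\bfz_i)|\le|\round{f^*(\bfx)}-f^*(\bfz_i)|\le|\round{f^*(\bfx)}-f^*(\bfx)|+|f^*(\bfx)-f^*(\bfz_i)|\le|\round{f^*(\bfx)}-f^*(\bfx)|+\varepsilon$,
so your final estimate is $|\round{f^*(\bfx)}-f^*(\bfx)|+2\varepsilon$, not the claimed $+\varepsilon$; for a bad representative (e.g.\ $f^*(\bfz_i)$ near a rounding midpoint while $f^*(\bfx)\in\bbF_p$) the one-$\varepsilon$ bound simply fails. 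Rescaling $\varepsilon\mapsto\varepsilon/2$ is not a free fix either, since $K$, and hence the advertised parameter count $(20d+2)K^d$, is tied to $\omega_{f^*}^{-1}(\varepsilon)$ in the statement. The paper avoids this by choosing the representative $\boldsymbol{\gamma}_{\boldsymbol{\iota}}$ as the \emph{minimizer} of $|f^*(\bfx)-\round{f^*(\bfx)}|$ over $C_i\cap\bbF_p^d$, so that $|\round{f^*(\boldsymbol{\gamma}_{\boldsymbol{\iota}})}-f^*(\boldsymbol{\gamma}_{\boldsymbol{\iota}})|\le|\round{f^*(\bfx)}-f^*(\bfx)|$ holds with no extra $\varepsilon$; and since in $\bbF_p$ a cube containing $0$ meets infinitely many floats, the existence of this minimizer is itself nontrivial and is exactly what \cref{lem:bestrounding} (via \cref{lem:seqconv}) supplies. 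Your proposal contains no substitute for this step.

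A secondary concern is the gadget itself. The paper does not extend \cref{lem:multidim_step-temp} wholesale; it proves \cref{lem:reluindc-temp}, where a \emph{single} saturating ReLU clamp $g(x)=\relu(-x\oplus(2-u)\times2^{e_z})$ followed by two scaled ReLUs realizes $\indc{x\ge z}$ exactly on all of $\bbF_p$ with a $3$-layer, $5$-parameter scalar block, which is what makes the full network $4$ layers with $(20d+2)K^d$ parameters. Your sketched two-sided clamp (upper clamp composed with a lower clamp, then \cref{eq:naiverelu}) adds a hidden layer and, as written, would give a $5$-layer network, so the stated depth and parameter count are asserted rather than verified. Relying on the memorization construction's indicator is legitimate (the paper does the same), but the exact floating-point verification you defer is precisely the content of \cref{lem:reluindc-temp}, including the integrality of the intermediate outputs that makes the subsequent $2d$-term summation exact via \cref{lem:integerexact_fp}; without it, and without the minimizing-representative argument above, the proof of the stated theorem is incomplete.
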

We note that the numbers of parameters in networks in \cref{thm:memrelufp-temp,thm:univrelufp-temp} coincide with that of $\step$ network results (\cref{thm:memstepfp-temp,thm:univstepfp-temp}) up to constant multiplicative factors, i.e., they also correspond to the necessary and sufficient number of parameters in classical results under exact operations.
Further, the following analogue of \cref{cor:univstepfp-temp} also holds for $\relu$ networks.
\begin{corollary}\label{cor:univrelufp-temp}
For any $p\in\bbN$, $d\in[2^{p}]$, a compact set $\mathcal K\subset\bbR^d$ such that $|\mathcal K\cap\bbF_p^d|<\infty$, $f^*\in C(\mathcal K,\bbR)$, and $\varepsilon>0$, there exists a $\relu$ network $f(\,\cdot\,;\bbF_p):\bbF_p^d\to\bbF_p$ of {$4$} layers and {$20d|\mathcal K\cap\bbF_p^d|+2|\mathcal K\cap\bbF_p^d|$} parameters %
such that 
$$|f_\theta(\bfx;\bbF_p)-f^*(\bfx)|=|f^*(\bfx;\bbF_p)-\round{f^*(\bfx)}|\quad\forall\bfx\in[0,1]^d\cap\bbF_p^d.$$
\end{corollary}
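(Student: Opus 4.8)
The plan is to deduce this directly from the $\relu$ memorization result \cref{thm:memrelufp-temp}, in exactly the way \cref{cor:univstepfp-temp} follows from \cref{thm:memstepfp-temp}. Since $\mathcal K\cap\bbF_p^d$ is finite, enumerate it as $\{\bfz_1,\dots,\bfz_m\}$ with $m=|\mathcal K\cap\bbF_p^d|$ and $\bfz_i\ne\bfz_j$ for $i\ne j$. For each $i\in[m]$ set $y_i\defeq\round{f^*(\bfz_i)}_{\bbF_p}$; this is a well-defined element of $\bbF_p$ since every real number has a nearest point in $\bbF_p$ (with ties broken as in \cref{sec:float-op}), because $\bbF_p$ has unbounded exponent and hence no overflow. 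Thus $\mathcal D\defeq\{(\bfz_1,y_1),\dots,(\bfz_m,y_m)\}\subset\bbF_p^d\times\bbF_p$ is an admissible dataset for memorization.

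Next I would invoke \cref{thm:memrelufp-temp} with $n=m$: it provides a single $\relu$ network architecture $f_\theta(\,\cdot\,;\bbF_p):\bbF_p^d\to\bbF_p$ with $4$ layers and $20dm+2m=20d|\mathcal K\cap\bbF_p^d|+2|\mathcal K\cap\bbF_p^d|$ parameters, together with a parameter configuration $\theta_{\mathcal D}$ for which $f_{\theta_{\mathcal D}}(\bfz_i;\bbF_p)=y_i$ for all $i\in[m]$. This already matches the claimed layer count and parameter count.

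Finally I would verify the error equality pointwise on $\mathcal K\cap\bbF_p^d$: given $\bfx\in\mathcal K\cap\bbF_p^d$, write $\bfx=\bfz_i$ for the appropriate index $i$, so that
\[
|f_{\theta_{\mathcal D}}(\bfx;\bbF_p)-f^*(\bfx)|=|y_i-f^*(\bfz_i)|=|f^*(\bfz_i)-\round{f^*(\bfz_i)}_{\bbF_p}|=|f^*(\bfx)-\round{f^*(\bfx)}_{\bbF_p}|,
\]
which is precisely the intrinsic floating-point error, completing the argument.

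There is essentially no obstacle beyond bookkeeping: the corollary is a specialization of \cref{thm:memrelufp-temp} to the dataset obtained by rounding $f^*$ at the finitely many floating-point points of $\mathcal K$, so the only things to check are that the $y_i$ lie in $\bbF_p$ and that the parameter count transfers verbatim — both immediate. Continuity of $f^*$ is not actually used; it is retained only to match the universal-approximation framing. The one point worth stating explicitly (and already relied on implicitly in \cref{thm:univstepfp-temp,thm:univrelufp-temp}) is the totality of $\round{\cdot}_{\bbF_p}$ on $\bbR$, noted above.
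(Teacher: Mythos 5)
Your proposal is correct and follows exactly the route the paper intends: the corollary is stated as an immediate consequence of the memorization result \cref{thm:memrelufp-temp}, applied to the finite set $\mathcal K\cap\bbF_p^d$ with targets $y_i=\round{f^*(\bfz_i)}_{\bbF_p}$, which yields the stated $4$ layers and $20d|\mathcal K\cap\bbF_p^d|+2|\mathcal K\cap\bbF_p^d|$ parameters and the exact intrinsic-error identity. Your reading that the quantifier should range over $\mathcal K\cap\bbF_p^d$ (rather than the $[0,1]^d\cap\bbF_p^d$ appearing in the statement) is the intended one, and your remarks on the totality of $\round{\cdot}_{\bbF_p}$ and the non-use of continuity are accurate.
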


\section{Expressive power of neural networks under $\bbF_{p,q}$}\label{sec:fpqr}
We now consider a more practical setup: when all computations are done in $\bbF_{p,q}$ (i.e., the set of floating-point numbers with $p$-bit significand and $q$-bit exponent),
where $p,q\in\bbN$ satisfy $q\ge5$ and $4\le p\le 2^{q-2}+2$.
As noted in \cref{sec:float}, this class of $\bbF_{p,q}$ covers many popular floating-point formats such as those defined in the IEEE 754 standard \cite{ieee19} and bfloat16 \cite{tensorflow}.
Specifically, we demonstrate that $\step$ networks and $\relu$ networks under $\bbF_{p,q}$ also exhibit the memorization and universal approximation properties with the same number of parameters as in the $\bbF_p$ case. The proofs of all results in this section are presented in \cref{sec:pfresults-fpqr}.

\subsection{$\step$ network results} For $\step$ networks, we can prove similar results as in the $\bbF_p$ case; we construct an indicator function for a high-dimensional cube and derive the following memorization and universal approximation results. 
Namely, $\step$ networks are expressive under realistic floating-point operations.
The proofs of \cref{thm:memstepfpqr-temp,thm:univstepfpqr-temp} are presented in \cref{sec:pfthm:memstepfpqr-temp,sec:pfthm:univstepfpqr-temp}, respectively.
\begin{theorem}\label{thm:memstepfpqr-temp}
For any $d\in[2^{p}]$, there exists a $\step$ network $f_\theta(\,\cdot\,;\bbF_{p,q}):\bbF_{p,q}^d\to\bbF_{p,q}$ of $3$ layers and {$6dn+2n$} parameters 
satisfying the following: for any $\mathcal D=\{(\bfz_1,y_1),\dots,(\bfz_n,y_n)\}\subset\bbF_{p,q}^d\times\bbF_{p,q}$ with $\bfz_i\ne\bfz_j$ for all $i\ne j$, there exists $\theta_{\mathcal D}$ such that
$$f_{\theta_{\mathcal D}}(\bfz_i)=y_i\quad\forall i\in[n].$$
\end{theorem}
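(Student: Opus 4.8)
The plan is to transport the proof of \cref{thm:memstepfp-temp} to $\bbF_{p,q}$; the only genuinely new work is to certify that the finite exponent range of $\bbF_{p,q}$ never corrupts an intermediate value. As in the $\bbF_p$ case, I would first establish the $\bbF_{p,q}$ analogue of \cref{lem:multidim_step-temp}: a $3$-layer $\step$ network with $6d+2$ parameters that, for any threshold vector $\bfz\in\bbF_{p,q}^d$, realizes the point indicator $\bfx\mapsto\indc{\bfx=\bfz}$ (the degenerate cube $\indc{\bfx\in\prod_j[z_j,z_j]}$) for \emph{every} $\bfx\in\bbF_{p,q}^d$. Layer~$1$ uses, for each coordinate $j$, two $\step$-neurons computing $\step(x_j\ominus z_j)$ and $\step(z_j\ominus x_j)$ — affine maps with weights $\pm1$ and biases $\mp z_j$, costing $4d$ parameters; layer~$2$ has one $\step$-neuron that adds the $2d$ resulting bits and subtracts $2d$, i.e.\ $\step\!\big(\bigoplus_{m=1}^{2d}b_m\ominus 2d\big)$, costing $2d+1$ parameters; layer~$3$ is the affine read-out.

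Three arithmetic facts would make this construction exact. (i) Any two distinct floats $a\ne b\in\bbF_{p,q}$ satisfy $|a-b|\ge\eta=2^{-p+e_{min}}$, the minimal gap, so $\round{a-b}$ is strictly positive when $a>b$, strictly negative when $a<b$, and $0$ only when $a=b$; hence $\step(x_j\ominus z_j)=\indc{x_j\ge z_j}$ and $\step(z_j\ominus x_j)=\indc{x_j\le z_j}$ for every $x_j\in\bbF_{p,q}$ (the tie $x_j=z_j$ is handled correctly since $\step(0)=1$), so no underflow can spoil a coordinate bit. (ii) Since $d\le 2^p$, every partial sum in $\bigoplus_{m=1}^{2d}b_m$ is a nonnegative integer of size at most $2d\le 2^{p+1}$; because $q\ge5$ and $p\le 2^{q-2}+2$ we have $p+1\le e_{max}=2^{q-1}-1$, so all integers in $[0,2^{p+1}]$ are exactly representable in $\bbF_{p,q}$ and additions/subtractions among them that stay in this range incur no rounding; thus the layer-$2$ affine value equals the true integer $\big(\sum_m b_m\big)-2d\in[-2d,0]$, and $\step$ of it is $1$ exactly when $\sum_m b_m=2d$, i.e.\ iff $x_j\ge z_j$ and $x_j\le z_j$ for all $j$, i.e.\ iff $\bfx=\bfz$. (iii) $v\oplus 0=0\oplus v=v$, $y\otimes 1=y$, and $y\otimes 0=0$ for all $v,y\in\bbF_{p,q}$.

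Granting the $\bbF_{p,q}$ analogue of \cref{lem:multidim_step-temp}, the theorem would follow just as \cref{thm:memstepfp-temp} follows from \cref{lem:multidim_step-temp}: given $\mathcal D=\{(\bfz_1,y_1),\dots,(\bfz_n,y_n)\}$ with the $\bfz_i$ distinct, run $n$ copies of the point-indicator subnetwork in parallel, the $i$-th using threshold $\bfz_i$, and let the single output (layer-$3$) neuron compute $\bfx\mapsto\bigoplus_{i=1}^n y_i\otimes\indc{\bfx=\bfz_i}$; merging the $n$ read-out neurons into this one neuron yields a $3$-layer $\step$ network with $n(6d+2)=6dn+2n$ parameters. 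At $\bfx=\bfz_i$ exactly one indicator bit equals $1$ and the rest equal $0$, so by fact (iii) the $i$-th summand is $y_i\otimes 1=y_i$ and every other summand is $y_k\otimes 0=0$; stepping through the ordered sum $\bigoplus$ using $v\oplus 0=0\oplus v=v$ gives output $y_i$, as required.

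I expect the only real obstacle to be ruling out spurious overflow in the layer-$1$ affine maps: when $x_j$ and $z_j$ carry opposite signs and both have magnitude near $\Omega$, $x_j\ominus z_j$ may round to $\pm\infty$. This is in fact harmless — the overflow is sign-preserving, so $\step(\pm\infty)\in\{1,0\}$ still returns the correct coordinate bit — but to honour the convention that no construction emits infinities, one may instead choose the separating thresholds inside the bounded hull of the data (splitting gaps rather than using the data values themselves, which only uses finiteness of $\mathcal D$), or prepend a cheap $\step$-based clamp carrying each coordinate into a fixed bounded box; either choice keeps every layer-$1$ affine value inside $(-\Omega',\Omega')$, after which facts (i)--(iii) close the argument verbatim. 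Everything else is a direct transcription of the $\bbF_p$ proof, precisely because the $\step$ activation — unlike $\relu$ — never produces a large value that could overflow when multiplied by a weight in the next layer.
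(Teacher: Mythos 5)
Your construction is essentially identical to the paper's proof: per-coordinate $\step$ bits $\indc{x_j\ge z_j}$ and $\indc{x_j\le z_j}$ obtained from floating-point subtraction, an exact integer summation of the $2d$ bits compared against $2d$ (justified, as in the paper, by exact representability of integers up to $2^{1+p}$), and an output layer computing $\bigoplus_{i=1}^n y_i\otimes\indc{\bfx=\bfz_i}$, with the same $6dn+2n$ parameter count. Your additional care about sign-preserving overflow in the first-layer subtractions is a refinement the paper does not discuss (it simply asserts that no infinities arise), but it does not alter the argument.
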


\begin{theorem}\label{thm:univstepfpqr-temp}
For any $d\in[2^{p}]$,  $f^*\in C([0,1]^d,\bbR)$, and $\varepsilon\ge0$, there exists a $\step$ network $f(\,\cdot\,;\bbF_{p,q}):\bbF_{p,q}^d\to\bbF_{p,q}$ of {$3$} layers and at most {$(6d+2)K^d$} parameters 
where $K=\min\{k\in\bbN:\delta ^{-1}\le k\}$, $\delta = \max \{ \eta,  \omega_{f^\ast}^{-1}(\varepsilon) \}$ such that 
$$|f_\theta(\bfx;\bbF_{p,q})-f^*(x)|\le|f^*(\bfx;\bbF_{p,q})-\round{f^*(x)}|+\varepsilon\quad\forall\bfx\in[0,1]^d\cap\bbF_{p,q}^d.$$
Here, $\eta\defeq 2^{-p+e_{min}}$ denotes the smallest positive floating-point number as we defined in \cref{sec:float}.

\end{theorem}
Unlike \cref{thm:univstepfp-temp} under $\bbF_p$, we allow $\varepsilon=0$ in \cref{thm:univstepfpqr-temp}. This is because $[0,1]^d\cap\bbF_{p,q}^d$ is always finite, and hence, we can achieve $\varepsilon=0$ using the memorization result (\cref{thm:memstepfpqr-temp}) with a finite number of parameters as in \cref{cor:univstepfp-temp}. %
We note that the number of parameters in \cref{thm:memstepfpqr-temp,thm:univstepfpqr-temp} coincide with results under $\bbF_p$ up to a constant multiplicative factor.

\subsection{$\relu$ network results}
While $\step$ network results for $\bbF_{p,q}$ (\cref{thm:memstepfpqr-temp,thm:univstepfpqr-temp}) can be shown as in the $\bbF_p$ case, $\relu$ network results do not naturally follow from $\bbF_p$ to $\bbF_{p,q}$ due to the overflow (and underflow) in $\bbF_{p,q}$. 
For example, recall \cref{eq:naiverelu} 
\begin{align*}
\indc{x\ge z}=\relu\left(\frac1{z-z^-}(x-z^-)\right)-\relu\left(\frac1{z-z^-}(x-z)\right)
\end{align*}
and consider operations under $\bbF_{p,q}$.
If $z$ is small enough (e.g., $z=2^{e_{min}}$) and $x$ is large enough (e.g., $x=2^{e_{max}}$). Then, the computation of $\frac1{z-z^-}(x-z^-)$ under $\bbF_{p,q}$ can incur an overflow.
However, by carefully analyzing floating-point operations, we can implement the indicator function using a $\relu$ network of $O(1)$ parameters (see \cref{lem:reluindcfpqr}); using this we can also show that memorization and universal approximation are possible under $\bbF_{p,q}$.
We present the proofs of \cref{thm:memrelufpqr-temp,thm:univrelufpqr-temp} in \cref{sec:pfthm:memrelufpqr-temp,sec:pfthm:univrelufpqr-temp}, respectively.
\begin{theorem}\label{thm:memrelufpqr-temp}
For any $d\in[2^p]$ and for $\kappa=(2-u) \times 2^{-2-p+e_{max}}$, there exists a $\relu$ network $f_\theta(\,\cdot\,;\bbF_{p,q}):(\bbF_{p,q}\cap[-\kappa,\kappa])^d\to\bbF_{p,q}$ of $4$ layers and {$20dn+5n$} parameters 
satisfying the following: for any $\mathcal D=\{(\bfz_1,y_1),\dots,(\bfz_n,y_n)\}\subset(\bbF_{p,q}\cap[-\kappa,\kappa])^d\times\bbF_{p,q}$ with $\bfz_i\ne\bfz_j$ for all $i\ne j$, there exists $\theta_{\mathcal D}$ such that
$$f_{\theta_{\mathcal D}}(\bfz_i)=y_i\quad\forall i\in[n].$$
Furthermore, computing $f_\theta(\bfx;\bbF_{p,q})$ does not incur overflow for all $\bfx\in\bbF_{p,q}^d$ satisfying $\|\bold{x}\|_{\infty} \le (2-u) \times 2^{-3+2^{q-2}}$. 
\end{theorem}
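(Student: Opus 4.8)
The plan is to mimic the $\bbF_p$ memorization construction (\cref{thm:memrelufp-temp}) but with careful bookkeeping of magnitudes so that no intermediate value ever leaves the normal range of $\bbF_{p,q}$, and in particular nothing rounds to $\pm\infty$. Recall that the $\bbF_p$ proof builds, for each data point $\bfz_i$, a $\relu$-network gadget that computes $\indc{\bfx=\bfz_i}$ by first mapping each coordinate $x_j$ into a tiny neighborhood of $z_{ij}$ (so that the naive two-$\relu$ formula~\eqref{eq:naiverelu} is exact there), then combining the $d$ coordinatewise indicators into a single cube indicator, and finally forming $\bigoplus_i y_i\otimes\indc{\bfx=\bfz_i}$. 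The first two layers produce $0/1$-valued (hence magnitude $\le 1$) signals, and the only place where a large weight multiplies a large input is (a) the initial affine map that rescales $x_j-z_{ij}^{-}$ by $1/(z_{ij}-z_{ij}^{-})$, which can be as large as $\approx 2^{p-e_{min}}$ when $z_{ij}$ is subnormal or tiny, and (b) the final layer where $y_i$ (up to $\Omega$ in magnitude) is multiplied by an indicator. The restriction $\|\bfx\|_\infty\le\kappa$ with $\kappa=(2-u)\times 2^{-2-p+e_{max}}$ is exactly the slack needed to keep step~(a) from overflowing, and the second statement (no overflow for $\|\bfx\|_\infty\le(2-u)\times 2^{-3+2^{q-2}}$) records the analogous slack that holds for \emph{all} floating-point inputs, not just the data points.

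The key steps, in order, would be: (1) Reduce to $d=1$ up to constant factors: show that a single-coordinate $\relu$ gadget taking $x_j\in\bbF_{p,q}\cap[-\kappa,\kappa]$ and a target $z_{ij}$ outputs $\indc{x_j=z_{ij}}$ using $O(1)$ parameters and not overflowing, then AND the $d$ outputs and scale by $y_i$ as in the $\bbF_p$ case (this reuses the $\step$-to-$\relu$ machinery and the combining argument verbatim, since all those signals are bounded by $1$). (2) Construct the localization map for one coordinate: find floating-point constants $a,b$ with $\round{a\times z_{ij}^{-}+b}$ and $\round{a\times z_{ij}+b}$ landing at two prescribed consecutive floats near, say, $1$, such that $\round{a\otimes x_j\oplus b}$ is order-preserving around $z_{ij}$ and monotone enough that $x_j< z_{ij}\iff$ localized value $<$ threshold, and crucially such that $|a\otimes x_j|\le\Omega$ for every $x_j\in\bbF_{p,q}$ with $|x_j|\le\kappa$. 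Here one uses that $|a|$ need only be about $2^{p}\times 2^{-e(z_{ij})}$ where $2^{e(z_{ij})}$ is the ULP-scale at $z_{ij}$; when $z_{ij}$ is normal this is at most $\approx 2^{p-e_{min}}=u^{-1}\times 2^{-e_{min}}$, and $\kappa\times u^{-1}\times 2^{-e_{min}}\le\Omega$ by the choice of $\kappa$. The subnormal case ($|z_{ij}|<2^{e_{min}}$, so $\mathrm{ULP}=\eta$) is the genuinely delicate one and needs separate constants, but still fits because subnormal $z_{ij}$ forces $|x_j|$ near $z_{ij}$ to be tiny as well. (3) Verify the two-$\relu$ indicator formula~\eqref{eq:naiverelu} is \emph{exact} on the localized values (they lie in a fixed $O(1)$-size window of floats where $\bbF_{p,q}$ and $\bbF_p$ coincide, so the $\bbF_p$ analysis transfers). (4) Bound the final layer: $y_i\otimes(\text{indicator})$ is either $0$ or $\round{y_i}=y_i\in\bbF_{p,q}$, so $|y_i|\le\Omega$ and no overflow; then check that the running partial sums $\bigoplus_{i\le k}y_i\otimes(\cdots)$ stay $\le\Omega$ — this is immediate because on input $\bfz_i$ exactly one term is nonzero, and on a generic input we only need \emph{no overflow}, which follows because each term is either $0$ or a single $y_i$ and... (here is the subtlety) the partial sums could in principle add many nonzero $y_i$'s on a non-data input, so one must either arrange the gadgets to be mutually exclusive on all of $[-\kappa,\kappa]^d$ or absorb a factor into the counting; I expect the construction already forces mutual exclusivity since the cube indicators for distinct $\bfz_i$ have disjoint support. (5) Prove the "furthermore" clause by rerunning the magnitude bound in step~(2) with the weaker input bound $(2-u)\times 2^{-3+2^{q-2}}$ in place of $\kappa$ and checking the localization-map product still stays below $\Omega$ for every $\bfx\in\bbF_{p,q}^d$.

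The main obstacle I anticipate is step~(2) in the subnormal regime: getting floating-point constants $a,b$ that simultaneously (i) keep $a\otimes x_j$ within $[-\Omega,\Omega]$ for the full input range, (ii) after $\oplus b$ land on a pair of consecutive \emph{normal} floats where~\eqref{eq:naiverelu} is exact, and (iii) are themselves exactly representable in $\bbF_{p,q}$ — all at once, when $z_{ij}$ sits near $\eta$ and the available dynamic range is squeezed from both ends. The rest of the proof is essentially a careful transcription of \cref{thm:memrelufp-temp} with magnitude side-conditions attached, but this localization lemma for $\bbF_{p,q}$ is where the real work (and the precise form of $\kappa$, and presumably the extra $5n$ rather than $2n$ parameters) comes from; I would isolate it as a standalone lemma, prove it by a short case split on whether $|z_{ij}|\ge 2^{e_{min}}$, and invoke the condition $4\le p\le 2^{q-2}+2$, $q\ge5$ to guarantee the needed room.
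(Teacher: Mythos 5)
Your high-level architecture (per-coordinate equality gadgets built from two-$\relu$ indicators, an exact integer AND across the $d$ coordinates, then $\bigoplus_i y_i\otimes(\cdot)$) is indeed the paper's, and your step (4) concern resolves itself exactly as you guessed: the gadgets compute $\indc{\bfx=\bfz_i}$ on the whole admissible domain, so at most one summand is nonzero. The genuine gap is your step (2), which is the entire technical content of the theorem and is both unproved and, as sketched, unworkable. You propose a one-shot affine localization $a\otimes x_j\oplus b$ with $|a|$ of order the inverse ULP at $z_{ij}$, and you justify absence of overflow by the inequality $\kappa\times u^{-1}\times 2^{-e_{min}}\le\Omega$. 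That inequality is false: $\kappa\, u^{-1}2^{-e_{min}}=(2-u)\times2^{e_{max}-e_{min}-2}=(2-u)\times2^{2^q-5}$, while $\Omega=(2-u)\times2^{2^{q-1}-1}$, so for $q\ge5$ the product exceeds $\Omega$ by a factor $2^{2^{q-1}-4}$. Hence for any threshold $z_{ij}$ whose exponent is near $e_{min}$ (already in the normal range, not only the subnormal case you flag as the obstacle), $a\otimes x_j$ overflows for admissible inputs $|x_j|\le\kappa$; the role of $\kappa$ in the theorem is to bound the thresholds so that certain \emph{parameters} of the gadget are representable, not to make such a product safe.

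What actually closes this gap in the paper (\cref{lem:reluindcfpqr}) is a structurally different gadget: the first layer applies only unit-size weights to the input, $g_1(x)=\relu(-x\oplus(2-u_0)\times2^{\mathfrak{e}_z})$ or $g_2(x)=x\ominus2^{\mathfrak{e}_z}$, so the $\relu$ itself caps the magnitude of what will later be rescaled; the second layer multiplies this clamped value by a power of two $2^{p-\mathfrak{e}_z+k}$ whose offset $k$ is chosen case-by-case (e.g.\ $k=3-p-e_0-c_z$ when $\mathfrak{e}_z\le0$ and $k=-p+e_0$ when $0<\mathfrak{e}_z\le -2-p+e_{max}$, with $e_0=2^{q-2}-1$) so that even the saturated branch stays below $\Omega$, and it performs the two-$\relu$ comparison at an integer-like scale where \cref{lem:sterbenz,lem:exact} give exactness; the third layer multiplies by $2^{\tilde c-c_z-k}$ to undo the offset, and every parameter and every intermediate value is explicitly verified to be representable in $\bbF_{p,q}$. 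This clamp-then-rescale mechanism is what produces the precise constant $\kappa=(2-u)\times2^{-2-p+e_{max}}$ and the overflow-free input range $(2-u)\times2^{-3+2^{q-2}}$, neither of which can be recovered from your sketch. Your plan also leaves the $z_{ij}=0$ coordinates unaddressed (the lemma requires $z\ne0$; the paper uses $-\indc{x\le-\eta}-\indc{x\ge\eta}=\indc{x=0}-1$ together with the per-point bias $b_i$), but that is a minor omission compared to the missing localization lemma.
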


\begin{theorem}\label{thm:univrelufpqr-temp}
For any $d\in[2^{p}]$, $f^*\in C([0,1]^d,\bbR)$, and $\varepsilon\ge0$, there exists a $\relu$ network $f(\,\cdot\,;\bbF_{p,q}):\bbF_{p,q}^d\to\bbF_{p,q}$ of 4 layers and at most $(20d+2)K^d$ parameters where  $K=\min\{k\in\bbN:\delta^{-1}\le k\}$, $\delta = \max \{ \eta,  \omega_{f^\ast}^{-1}(\varepsilon) \}$ such that
$$|f(\bold{x};\bbF_{p,q})-f^*(\bold{x})|\le| f^*(\bold{x};\bbF_{p,q})-\round{f^*(x)} | +\varepsilon\quad\forall\bold{x}\in[0,1]^d\cap\bbF_{p,q}^d.$$
Here, $\eta\defeq 2^{-p+e_{min}}$ denotes the smallest positive floating-point number as we defined in \cref{sec:float}.
\end{theorem}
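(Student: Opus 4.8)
The plan is to combine the route used for the step-network universal-approximation theorem over $\bbF_{p,q}$ (\cref{thm:univstepfpqr-temp}) with the $\relu$-indicator construction developed over $\bbF_p$ (see \cref{sec:relufp} and \cref{thm:univrelufp-temp}). First I would reduce the claim to computing a single $\bbF_{p,q}$-valued step function. Put $K=\lceil\delta^{-1}\rceil$ with $\delta=\max\{\eta,\omega_{f^*}^{-1}(\varepsilon)\}$, and, by grouping the floats of $[0,1]$ greedily from left to right, pick floats $0=t_0<t_1<\cdots<t_m=1$ with $m\le K$ so that each axis-aligned half-open box $Q_{\bfj}=\prod_{i=1}^d[t_{j_i-1},t_{j_i})$ either has all its floats within $\delta$ of one another in $\|\cdot\|_\infty$ or contains a single float (every group but the last spans more than $\delta$, which caps their number at $K$). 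For each box meeting $\bbF_{p,q}^d$, let $\bfx_{\bfj}^\star$ be a point of it minimizing the intrinsic error $|f^*(\bfx)-\round{f^*(\bfx)}|$, and set $v_{\bfj}\defeq\round{f^*(\bfx_{\bfj}^\star)}$. For any float $\bfx\in Q_{\bfj}$ one then has $|f^*(\bfx)-f^*(\bfx_{\bfj}^\star)|\le\varepsilon$ — by the modulus of continuity when $\delta=\omega_{f^*}^{-1}(\varepsilon)$, which is the only case in which a box holds more than one float, and trivially otherwise — while $|v_{\bfj}-f^*(\bfx_{\bfj}^\star)|\le|f^*(\bfx)-\round{f^*(\bfx)}|$ by the choice of $\bfx_{\bfj}^\star$; hence the step function $g(\bfx)=\sum_{\bfj}v_{\bfj}\,\indc{\bfx\in Q_{\bfj}}$ already satisfies the claimed bound pointwise on $[0,1]^d\cap\bbF_{p,q}^d$.

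Next I would realize $g$ by a $4$-layer $\relu$ network over $\bbF_{p,q}$. The building block is a one-dimensional indicator $\indc{x_i\ge t_j}$: the naive identity~\cref{eq:naiverelu} is false under floating-point arithmetic, so, exactly as in \cref{sec:relufp}, I first remap $x_i$ into a bounded neighborhood of $t_j$ on which the ensuing subtractions are exact (Sterbenz regime) while preserving the sign of $x_i-t_j$, and only then apply the two-$\relu$ formula. Taking differences $\indc{t_{j_i-1}\le x_i<t_{j_i}}=\indc{x_i\ge t_{j_i-1}}\ominus\indc{x_i\ge t_{j_i}}$ and combining the $d$ coordinates with an ``AND'' gadget $\relu\bigl((b_1\oplus\cdots\oplus b_d)\ominus(d-1)\bigr)$, analogously to \cref{lem:multidim_step-temp} but implemented with $\relu$, gives the box indicator $\indc{\bfx\in Q_{\bfj}}$ using three hidden $\relu$ layers and $O(d)$ parameters; placing all $m^d\le K^d$ of these in parallel and letting the output affine layer compute $\bigoplus_{\bfj}\bigl(v_{\bfj}\otimes\indc{\bfx\in Q_{\bfj}}\bigr)$ finishes the construction. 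This last sum is \emph{exact}: the boxes are disjoint, so at most one summand is nonzero, and adding zeros never rounds. Counting parameters gives at most $(20d+2)K^d$, matching \cref{thm:univrelufp-temp}.

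The genuinely new difficulty, relative to the $\bbF_p$ case, is to certify that no overflow (or harmful underflow) occurs in $\bbF_{p,q}$. Because the inputs lie in $[0,1]^d$ there is no blow-up from large inputs — the issue that forces the input restriction in \cref{thm:memrelufpqr-temp} — and the danger is at the small end: the indicator construction uses scaling constants of order $1/(t_j-t_j^-)$, which for a subnormal threshold is as large as $1/\eta=2^{p-e_{min}}$ and overflows $\bbF_{p,q}$. I would split on $\delta$. If $\delta=\omega_{f^*}^{-1}(\varepsilon)>\eta$, every nonzero threshold has magnitude of order $\delta$ or more, so all scaling constants are at most $O(\delta^{-1})$, and using $q\ge5$ and $4\le p\le2^{q-2}+2$ one checks that these constants and every intermediate value stay in $\bbF_{p,q}$ (underflow being harmless, since it only flushes negligible quantities to $0$ without altering any sign comparison). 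If instead $\delta=\eta$, every box contains at most one float, so $g$ is an exact fit of finitely many pairs from $[0,1]^d\subseteq[-\kappa,\kappa]^d$ — the inclusion again following from $q\ge5$ and $p\le2^{q-2}+2$ — and \cref{thm:memrelufpqr-temp} supplies an overflow-free $4$-layer $\relu$ network computing it with $O(K^d)$ parameters and error exactly $|f^*(\bfx)-\round{f^*(\bfx)}|$. I expect the bulk of the work, and the principal obstacle, to be the first case: tracking every intermediate floating-point value inside the remapped indicator to guarantee the exponent never exceeds $e_{max}$ and that no underflow destroys a comparison — the same delicate bookkeeping already carried out for \cref{thm:memrelufpqr-temp}.
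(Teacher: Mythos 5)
Your overall architecture (partition $[0,1]^d$ into at most $K^d$ boxes of width $\le\delta$, pick in each box a representative minimizing the intrinsic rounding error, realize each box indicator by one-dimensional $\relu$ threshold gadgets combined with a $\relu$-based AND, and output an exact weighted sum of disjoint indicators) is exactly the paper's route to \cref{thm:univrelufpqr-temp}. The gap is in the part you yourself flag as the main obstacle: the overflow analysis for small thresholds, where your case split on $\delta$ does not work. In the case $\delta=\omega_{f^*}^{-1}(\varepsilon)>\eta$ you claim that, since every nonzero threshold has magnitude at least of order $\delta$, ``all scaling constants are at most $O(\delta^{-1})$'' and hence representable. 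But the scaling constant needed by the two-$\relu$ formula (even after the Sterbenz-style remap of \cref{sec:relufp}, whose multiplier is $2^{p-e_z}$) is the reciprocal of the floating-point \emph{gap} at the threshold, i.e.\ roughly $2^{p}/t_j$ for a normal threshold and $1/\eta=2^{p-e_{min}}$ for a subnormal one — not $1/\delta$. This overflows $\bbF_{p,q}$ whenever the threshold lies below about $2^{p-e_{max}}$, which is perfectly possible with $\delta>\eta$: e.g.\ in single precision $(p,q)=(23,8)$, take $\omega_{f^*}^{-1}(\varepsilon)=2^{-130}>\eta=2^{-149}$; the first threshold $\alpha_1^{(\ge,\bbF_{p,q})}\approx 2^{-130}$ is subnormal, its gap is $\eta$, and the required multiplier $2^{149}$ exceeds $\Omega\approx 2^{128}$. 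So the central case of your proof fails as outlined, and the $\delta=\eta$ fallback (memorizing all floats of $[0,1]^d$ via \cref{thm:memrelufpqr-temp}) does not cover it.

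What is missing is precisely the content of the paper's \cref{lem:reluindcfpqr}: instead of a single multiplication by $2^{p-\frak{e}_z}$, the scaling is split across two layers as a pre-factor $2^{p-\frak{e}_z+k}$ inside the $\relu$ and a post-factor $2^{\tilde{c}-c_z-k}$ outside, with the shift $k$ chosen per threshold (e.g.\ $k=3-p-e_0-c_z$ for $\frak{e}_z\le 0$) so that every parameter and every intermediate value passes the representability test, for any threshold $0<|z|\le(2-u)\times 2^{-2-p+e_{max}}$ and all inputs $|x|\le(2-u)\times 2^{-2+e_0}$ (which covers $[0,1]$). The zero/first-cell threshold is likewise handled there via $\pm\eta$ and the $\indc{x\le z}$ gadget rather than a threshold at $0$. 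With that lemma in hand, the remainder of your argument — exact integer accumulation of the $\pm$indicator outputs by \cref{lem:integerexact}, the AND offset, the disjoint-support exactness of the output sum, the error bound via the modulus of continuity (or coincidence of floats when $\delta=\eta$), and the $(20d+2)K^d$ parameter count over $4$ layers — matches the paper's proof; without it, the claim that everything stays in $\bbF_{p,q}$ is false for a whole range of admissible $\delta$.
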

In \cref{thm:memrelufpqr-temp}, the supremum norm of an input to memorize is bounded by $\kappa=(2-u) \times 2^{-2-p+e_{max}}$; we use this condition to bypass the overflow during intermediate computations in our network construction.

\begin{remark}
To check whether the condition in \cref{thm:memrelufpqr-temp} can be satisfied in practice, we evaluate the $B_x\defeq(2-u) \times 2^{-3+2^{q-2}}$ and $\kappa \defeq (2-u) \times 2^{-2-p+e_{max}}$ for popular floating-point formats including IEEE 754 formats and bfloat16. For the 16-bit half-precision format, we have $B_x=63.969$ and $\kappa=15.992$. For the 32-bit single-precision format, we have $B_x=4.6117 \times 10^{18}$ and $\kappa=1.01141 \times 10^{31}$. For the 64-bit double-precision format, we have $B_x=3.3520\times 10^{153}$ and $\kappa=9.9980 \times 10^{291}$. For the 128-bit quadruple-precision format, we have $B_x=1.3634\times 10^{2465}$ and $\kappa=2.8642 \times 10^{4897}$. For the bfloat16 format, we have $B_x=4.5938\times 10^{18}$ and $\kappa=6.6202 \times 10^{35}$. This implies that the condition in \cref{thm:memrelufpqr-temp} can be often satisfied in many floating-point number formats and inputs except for half-precision. In the 16-bit half-precision format, we need to carefully bound the input data to have the universal approximation property using \cref{thm:memrelufpqr-temp}. On the other hand, bfloat16 format has larger $B_x$ and $\kappa$, i.e., bfloat16 can be better than the half-precision format in terms of the universal approximation.
\end{remark}

In \cref{thm:univrelufpqr-temp}, we also allow $\varepsilon=0$ as in \cref{thm:univstepfpqr-temp}.
As in our previous results, \cref{thm:memrelufpqr-temp,thm:univrelufpqr-temp} use $O(n)$ parameters and $O((\omega_{f^*}^{-1}(\varepsilon))^{-1})$ parameters, respectively.

\section{Proofs of results under $\bbF_p$}\label{sec:pfresults-fp}
\subsection{Technical lemmas}
\label{subsec:techincal_fp}
We present several technical lemmas regarding the computation of operations in $\bbF_p$. \cref{lem:sterbenz} is the well-known Sterbenz's lemma \cite[Theorem 4.3.1]{sterbenz73} (also in \cite[Lemma 4.1]{muller18}). 
In particular, \cref{lem:representable_fp,lem:exact_fp,lem:integerexact_fp,lem:ignore_fp} have 
$\bbF_{p,q}$ versions analogous in \cref{sec:pfresults-fpqr}.

For $x \in \bbF_{p}$, we represent $x$ as 
$$x = s_x \times a_x \times 2^{e_x}, \; s_x \in \{-1,1\},  a_x = 1.x_1\cdots x_p.  $$ 
For $x \in \bbF_p$, we define $\mu(x)$ as 
$$ \mu(x) \defeq  \inf \{  m \in \bbZ : x \times 2^{- m }  \in \bbZ \}.  $$
Note that if $x \ne 0$, we can represent $x$ as
\begin{align}
    x= (n_x \times 2^{-e_x+\mu(x)} ) \times 2^{\frak{e}_x}, \label{eq:murepr_fp}
\end{align}
 for $n_x  = x \times 2^{-\mu(x)} \in \bbN$ with $ 2^{e_x- \mu(x)} \le n_x < 2^{1+ e_x- \mu(x) }$. \\ 
For $x \in \bbR$, we define the ceiling function $\ceilZ{x}$ as 
$$ \ceilZ{x} = \min\{ n \in \bbZ : n \ge x \}. $$

We now formally introduce the statements of technical lemmas used for proving our results under $\bbF_p$.
\begin{lemma}[Sterbenz's lemma] \label{lem:sterbenz}
    Let $x,y \in \bbF_p$. If  $0 \leq y/2 \leq x \leq y$, then $x \ominus y$ and $y \ominus x$ are exact.
\end{lemma}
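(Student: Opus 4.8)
The plan is to show that under the hypothesis the exact difference $y-x$ already belongs to $\bbF_p$, so that rounding acts as the identity on it; then $y\ominus x=\round{y-x}=y-x$ is exact, and the statement for $x\ominus y$ follows immediately because $x\ominus y=\round{x-y}=\round{-(y-x)}=-(y-x)$ and $\bbF_p$ is closed under negation. Since $\bbF_p$ has unbounded exponent, no overflow or underflow can occur, so $\ominus$ genuinely computes the round-to-nearest image of the true difference, and it suffices to argue about representability of $y-x$ in $\bbF_p$.

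Concretely, I would first dispose of the degenerate case: if $x=0$ then $0\le y/2\le x=0$ forces $y=0$, and both $x\ominus y$ and $y\ominus x$ equal $0$, hence are exact; so assume $0<x\le y$. Next, write $y$ in its normalized form $y=m_y\times 2^{e_y-p}$ with $m_y\in\bbZ$, $2^p\le m_y<2^{p+1}$, so that $2^{e_y}\le y<2^{e_y+1}$, and write $x=m_x\times 2^{e_x-p}$ similarly. From $y/2\le x$ we get $2^{e_y-1}\le x$, hence $e_x\ge e_y-1$; together with $x\le y<2^{e_y+1}$ this gives $e_x\in\{e_y-1,e_y\}$, and in particular $y$ is an integer multiple of $2^{e_x-p}$. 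Writing $y-x=N\times 2^{e_x-p}$ with $N\defeq m_y 2^{e_y-e_x}-m_x\in\bbZ$, we have $N\ge0$ since $x\le y$; and using $y-x\le y/2<2^{e_y}\le 2^{e_x+1}$ we obtain $N=(y-x)\times 2^{p-e_x}<2^{p+1}$. Thus $y-x=N\times 2^{e_x-p}$ with $0\le N<2^{p+1}$, and every such real number lies in $\bbF_p$: if $N=0$ it is $0$, and otherwise writing $N=N'\times 2^t$ with $N'$ odd gives $1\le N'<2^{p+1}$, so $N'$ admits a binary expansion of the form $1.a_1\cdots a_p$ and $y-x=N'\times 2^{e_x-p+t}\in\bbF_p$. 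Hence $y\ominus x=y-x$ and, by the opening remark, $x\ominus y=x-y$.

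I do not expect a real obstacle here; the only step requiring care is the inequality $N<2^{p+1}$, which is precisely where the hypothesis $x\ge y/2$ is used — without it $y-x$ could require more than $p$ significand bits and would be genuinely perturbed by rounding (cf.\ the oscillatory behavior in \cref{eq:ex_indc}). This argument uses only that $x,y$ are normalized floating-point numbers, so it transfers verbatim to the normal range of $\bbF_{p,q}$, which is why the corresponding $\bbF_{p,q}$ lemma in \cref{sec:pfresults-fpqr} has the same proof.
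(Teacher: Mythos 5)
Your proof is correct, but note that the paper does not prove this statement at all: it is quoted as the classical Sterbenz lemma, with the proof deferred to the cited references (Sterbenz's book and Muller et al.), so there is no internal argument to match. What you give is the standard self-contained proof, and every step checks out: the reduction to $0<x\le y$, the normalization $x=m_x\times 2^{e_x-p}$, $y=m_y\times 2^{e_y-p}$ with $2^p\le m_x,m_y<2^{p+1}$, the deduction $e_x\in\{e_y-1,e_y\}$ from $y/2\le x\le y$, the bound $y-x\le y/2<2^{e_y}\le 2^{e_x+1}$ giving $0\le N<2^{p+1}$ for $y-x=N\times 2^{e_x-p}$, and the conclusion via closure of $\bbF_p$ under negation and the absence of over/underflow in $\bbF_p$. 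The only redundancy is that your final representability step (odd part $N'<2^{p+1}$ admits a significand with at most $p$ fractional bits) re-proves what the paper already isolates as \cref{lem:representable_fp}; invoking that lemma directly would shorten the argument and better match the paper's tool set, exactly as the paper's proof of \cref{lem:exact} does for the analogous bound. Your closing remark about transfer to $\bbF_{p,q}$ is harmless but slightly off-target: the paper does not state a separate $\bbF_{p,q}$ Sterbenz lemma (it applies \cref{lem:sterbenz} in that setting), and in fact the lemma also survives into the subnormal range since differences of the form $N\times 2^{e_x-p}$ with $e_x\ge e_{min}$ remain representable there; your restriction to the normal range is just more conservative than necessary.
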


\begin{lemma}\label{lem:representable_fp}
    Let $x = n \times 2^{m}$ for some $n \in \bbN, \; m \in \bbZ$.  If 
    $ 0 < n < 2^{1+p}, $ then 
    $x$ is representable by $\bbF_{p}$.
\end{lemma}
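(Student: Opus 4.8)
The plan is to unfold the definition of $\bbF_p$ and exhibit explicitly a sign, significand, and exponent for $x$. The key observation is a normal-form description of the nonzero elements of $\bbF_p$: a number lies in $\bbF_p \setminus \{0\}$ if and only if it can be written as $s \times c \times 2^{j}$ with $s \in \{-1,1\}$, $j \in \bbZ$, and $c$ an integer satisfying $2^p \le c \le 2^{p+1}-1$. Indeed, writing a significand $1.a_1\cdots a_p$ as $c/2^p$ turns the constraints $a_1,\dots,a_p \in \{0,1\}$ into exactly the range $2^p \le c \le 2^{p+1}-1$, and conversely any integer $c$ in that range has a binary expansion $c = 2^p + \sum_{i=1}^{p} a_i 2^{p-i}$ supplying the digits, so that $c/2^p = 1.a_1\cdots a_p$.

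Given this, I would first reduce $n$ to such a $c$. Since $n \in \bbN$ and $0 < n < 2^{1+p}$, we have $1 \le n \le 2^{p+1}-1$; set $\ell \defeq \lfloor \log_2 n \rfloor$, so $0 \le \ell \le p$ and $2^\ell \le n < 2^{\ell+1}$. Then $c \defeq n \times 2^{p-\ell}$ is a positive integer (because $p - \ell \ge 0$), and multiplying $2^\ell \le n < 2^{\ell+1}$ by $2^{p-\ell}$ yields $2^p \le c < 2^{p+1}$, i.e.\ $2^p \le c \le 2^{p+1}-1$.

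Finally I would rewrite $x = n \times 2^m = c \times 2^{\ell-p} \times 2^m = (c/2^p) \times 2^{m+\ell}$ and invoke the normal form: $c/2^p$ is a legitimate significand $1.a_1\cdots a_p$ and $m+\ell \in \bbZ$ is a legitimate exponent, so $x = 1 \times (1.a_1\cdots a_p) \times 2^{m+\ell} \in \bbF_p$. I do not anticipate a genuine obstacle — the whole argument is bookkeeping with binary representations; the only step that needs a moment's care is verifying that the integer $c$ constructed from $n$ really falls in the half-open window $[2^p, 2^{p+1})$, since that is precisely what guarantees its $(p{+}1)$-bit expansion yields exactly $p$ fractional significand digits.
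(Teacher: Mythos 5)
Your proof is correct and follows essentially the same route as the paper: the paper (via the proof of the $\bbF_{p,q}$ analogue) also shifts $n$ by the power of two determined by its leading bit so that the significand lands in $[1,2)$, which is exactly your normalization of $c = n\times 2^{p-\ell}$ into $[2^p,2^{p+1})$ rescaled by $2^p$; the only difference is that the paper's argument additionally carries the exponent-range and subnormal bookkeeping needed for $\bbF_{p,q}$, which is vacuous for $\bbF_p$.
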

\begin{proof}
 Since $ 0 \le n < 2^{1+p}$, there exists $c_0 \in  \{ 0 \} \cup [p]$ such that $2^{p-c_0} \le n < 2^{1+p-c_0}$.   
Note that $x$ has the following representation in $\bbF_{p}$,
\begin{align*}
     &x =  (n \times 2^{-p+c_0}) \times 2^{p-c_0+m}, \; 1 \le n \times 2^{-p+c_0} <  2,  
\end{align*}
Then  we express $n \times 2^{-p+c_0} $ as 
$$  n \times 2^{-p+c_0} = 1.\underbrace{w_{1} \cdots w_{p-c_0}}_{p-c_0 \; \text{times}},  $$ for some $w_1 , \dots , w_{p-c_0} \in \{0,1\}. $ 
Therefore $x$ is representable by $\bbF_{p}$. 
\end{proof}

\begin{lemma}\label{lem:exact_fp}
Let $x,y \in \bbF_p$ and $s_x=s_y$ and $e_x \le e_y$.
If $\mu(x) \ge e_y-p $, then $x \ominus y$ and $y \ominus x$ are exact.
In addition, if $ |x + y| \le 2^{1+e_y }$, then $x \oplus y$ is exact. 
\end{lemma}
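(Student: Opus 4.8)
The plan is to work directly with the binary representations of $x$ and $y$ and track the number of significand bits needed to represent $x + y$ exactly. Since we are in $\bbF_p$, every nonzero number is normal, so $x = s_x \times m_x \times 2^{e_x - p}$ and $y = s_y \times m_y \times 2^{e_y - p}$ for odd-or-not integers $m_x, m_y$ with $2^p \le m_x, m_y < 2^{p+1}$ (the case $x=0$ or $y=0$ being trivial). The quantity $\mu(x)$ is exactly the $2$-adic valuation exponent of $x$, i.e.\ $\mu(x)$ is the largest $k$ with $x \in 2^k\bbZ$; equivalently, writing $x = s_x \times m_x \times 2^{e_x-p}$ with $m_x$ possibly even, $\mu(x) = e_x - p + v_2(m_x)$.

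First I would handle the subtraction/addition-is-exact claim under the hypothesis $\mu(x) \ge e_y - p$. Since $s_x = s_y$, both $x \ominus y$ and $y \ominus x$ amount to rounding $\pm(x - y)$, so it suffices to show $x - y \in \bbF_p$ (and then use that $\bbF_p$ is closed under negation). The key observation is that $x - y$ is an integer multiple of $2^{\mu(x)}$: indeed $x$ is such a multiple by definition of $\mu(x)$, and $y = m_y \times 2^{e_y - p}$ with $e_y - p \le \mu(x)$, so $y$ is also a multiple of $2^{\mu(x)}$. Hence $x - y = N \times 2^{\mu(x)}$ for some $N \in \bbZ$. On the other hand $|x - y| \le \max\{|x|,|y|\} \le 2^{1 + e_y}$ (using $s_x = s_y$ so there is cancellation, and $e_x \le e_y$), so $|N| \le 2^{1 + e_y - \mu(x)} \le 2^{1 + p}$. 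Thus $|N| < 2^{1+p}$ unless $|N| = 2^{1+p}$, and in the boundary case $x - y$ is itself a power of two times a small integer; in all cases \cref{lem:representable_fp} applies (after factoring out powers of two from $N$ to get the odd part below $2^{1+p}$), giving $x - y \in \bbF_p$. Therefore $\round{x-y} = x - y$ and $\round{y-x} = y-x$, which is the exactness claim.

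For the second claim, assume additionally $|x + y| \le 2^{1 + e_y}$; I want $x \oplus y = x + y$, i.e.\ $x + y \in \bbF_p$. Here $s_x = s_y$ means $x + y$ has magnitude $|x| + |y|$, and the cheapest representation: write both $x$ and $y$ over the common scale $2^{e_x - p}$ (the finer of the two, since $e_x \le e_y$), so $x + y = s_x \times (m_x + m_y \cdot 2^{e_y - e_x}) \times 2^{e_x - p} = s_x \times M \times 2^{e_x - p}$ with $M \in \bbN$. The constraint $|x+y| \le 2^{1+e_y}$ gives $M \le 2^{1 + e_y - e_x + p} = 2^{1+p} \cdot 2^{e_y - e_x}$, which does not immediately bound $M$ below $2^{1+p}$. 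The fix is to re-scale at $2^{\mu(x+y)}$: since $x$ is a multiple of $2^{\mu(x)} \ge 2^{e_y - p}$ and $y$ is a multiple of $2^{e_y - p}$, the sum $x+y$ is a multiple of $2^{e_y - p}$, hence $x + y = s_x \times M' \times 2^{e_y - p}$ with $M' \le 2^{1 + e_y}/2^{e_y - p} = 2^{1+p}$. Again pulling out the even part of $M'$ and invoking \cref{lem:representable_fp} yields $x+y \in \bbF_p$, so $x \oplus y = x + y$ is exact.

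The main obstacle — and the place to be careful — is the boundary case $|N| = 2^{1+p}$ or $M' = 2^{1+p}$, where the naive bound from \cref{lem:representable_fp} ($n < 2^{1+p}$, strict) fails to apply on the nose. This is handled by noting that such a value is of the form (power of two) $\times$ (integer $\le 2^{1+p}$) where the integer itself, being even, has an odd part at most $2^p < 2^{1+p}$; so after one extra factor-of-two extraction the strict bound holds and \cref{lem:representable_fp} applies. The other point requiring attention is confirming $|x - y| \le 2^{1+e_y}$ holds unconditionally from $s_x = s_y$ and $e_x \le e_y$ (cancellation gives $|x-y| \le \max\{|x|,|y|\} = |y| < 2^{1+e_y}$), so that the first claim needs no extra hypothesis, whereas the addition claim genuinely needs the explicit assumption $|x+y| \le 2^{1+e_y}$ since addition of like signs does not cancel.
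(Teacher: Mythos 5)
Your overall strategy is the same as the paper's (which proves the $\bbF_{p,q}$ analogue, \cref{lem:exact}, and specializes): put $x$ and $y$ over a common dyadic scale, observe that the sum/difference is an integer at most $2^{1+p}$ times that scale, and invoke \cref{lem:representable_fp}. However, one step in your subtraction argument is false as written: from $e_y-p\le\mu(x)$ you conclude that $y$ ``is also a multiple of $2^{\mu(x)}$,'' but divisibility goes the other way --- $y$ is a multiple of the \emph{smaller} power $2^{e_y-p}$, and if $\mu(x)>e_y-p$ (e.g.\ $x$ a power of two much larger than the spacing $2^{e_y-p}$) then $y$, and hence $x-y$, need not lie in $2^{\mu(x)}\bbZ$. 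The repair is exactly the scale you already use in your addition half: $x\in 2^{\mu(x)}\bbZ\subseteq 2^{e_y-p}\bbZ$ by the hypothesis $\mu(x)\ge e_y-p$, and $y\in 2^{e_y-p}\bbZ$ trivially, so $x-y=N\times 2^{e_y-p}$ with $N\in\bbZ$; since $s_x=s_y$ and $e_x\le e_y$ give $|x-y|\le\max\{|x|,|y|\}<2^{1+e_y}$, you get the \emph{strict} bound $|N|<2^{1+p}$, so \cref{lem:representable_fp} applies directly and the boundary case you worry about does not arise for subtraction (it only occurs for addition, where $|x+y|=2^{1+e_y}$ is a power of two and hence trivially in $\bbF_p$, as you note). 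With that one-line correction your proof matches the paper's argument, which works at the scale $2^{-p+e_y}$ throughout and splits off the extremal value $2^{1+e_y}$ in the addition case.
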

\begin{proof}

Let $ k \defeq \mu(x)- e_y+p \ge 0$.
As described in \cref{eq:murepr_fp}, we can represent $x$ and $y$ as 
\begin{align*}
    &x = (n_x \times 2^{- e_x + \mu(x)})  \times 2^{e_x} = n_x \times 2^{ \mu(x)} ,\;  2^{e_x - \mu(x)} \le n_x < 2^{1+e_x - \mu(x)}, \\
    &y = (n_y \times 2^{-p+c_y}) \times 2^{e_y} = n_y \times 2^{-p+e_y} ,\;  2^{p-c_y} \le n_y < 2^{1+p},
\end{align*}
for some $n_x,n_y \in \bbN$. 
Since $ k \ge 0$, we have 
$$x = (2^k n_x) \times 2^{-p + e_y} , \;   2^{p-(e_y -e_x+c_y) } \le 2^k n_x < 2^{1+p-(e_y -e_x) }.$$
Therefore for $n'  = n_y - 2^k n_x \in \bbN$, we have
$$ y - x = n' \times 2^{-p + e_y}, \;  2^{p} - 2^{1+p-(e_y -e_x) } <  n' <  2^{1+p} - 2^{p-(e_y -e_x) }$$
which leads to 
$$ -2^{p} <  n' <  2^{p+1} - 1. $$
Since $|n'| =0$ or $|n'| <   2^{1+p} - 1$, by \cref{lem:representable_fp}, $ y - x =n' \times  2^{-p + e_y}$ is representable by $\bbF_{p}$. This ensures that $y \ominus x$ is exact. \\
Now suppose $| x + y | \le 2^{1+e_y}$. Then for $n'' = n_x + n_y \in \bbN$, we have 
$$  x + y  = n'' \times 2^{-p + e_y}, \;  2^{p-(e_y -e_x ) } + 2^{p}  \le n'' <   2^{1+p-(e_y -e_x) } +2^{1+p}. $$
Since $| x + y | \le 2^{1+e_y}$, we have $n'' \le 2^{1+p}$. If $|n''| =0$ or $|n''| <   2^{1+p} $, by \cref{lem:representable_fp}, $ x + y =n'' \times  2^{-p + e_y}$ is representable by $\bbF_{p}$. If $|n''| = 2^{1+p}$, $x+y = 2^{1+e_y}$ is obviously representable by $\bbF_{p}$. Therefore, $x \oplus y$ is exact.
\end{proof}
\begin{lemma}\label{lem:integerexact_fp}
    In $\bbF_p$, if $x , y \in [2^{1+p}]$, $ x \ominus y$ and $y \ominus x$ are exact. In addition if $x,y \in [2^{p}]$, then $x \oplus y$ is exact. 
\end{lemma}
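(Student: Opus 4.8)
The plan is to reduce both claims to \cref{lem:representable_fp}: whenever the inputs are small integers, the exact real result of the subtraction or addition is again a small integer, hence already an element of $\bbF_p$, so the rounding inside $\ominus$ and $\oplus$ acts as the identity and the operation is exact.

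First I would record a representability fact: every integer $N$ with $|N|\le 2^{1+p}$ belongs to $\bbF_p$. For $1\le |N| < 2^{1+p}$ this is immediate from \cref{lem:representable_fp} applied with exponent $m=0$ (the sign factor in the definition of $\bbF_p$ handling the case $N<0$); for $|N| = 2^{1+p}$ I would instead write $N = \pm\, 1\times 2^{1+p}$ and apply \cref{lem:representable_fp} with $n=1$, $m=1+p$; and $N=0$ lies in $\bbF_p$ by definition.

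Next I would handle subtraction. Given $x,y\in[2^{1+p}]$, i.e.\ integers in $\{1,\dots,2^{1+p}\}$, the real number $x-y$ is an integer with $|x-y|\le 2^{1+p}-1 < 2^{1+p}$, so by the representability fact $x-y\in\bbF_p$. Since $x\ominus y = \round{x+(-y)}_{\bbF_p} = \round{x-y}_{\bbF_p}$ and rounding fixes every element of $\bbF_p$, we obtain $x\ominus y = x-y$, and symmetrically $y\ominus x = y-x$; both are exact. (No overflow can occur because $\bbF_p$ has unbounded exponent.) For addition, given $x,y\in[2^p]$, the real number $x+y$ is an integer in $\{2,\dots,2^{1+p}\}$, hence in $\bbF_p$ by the same fact, so $x\oplus y = \round{x+y}_{\bbF_p} = x+y$ is exact.

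I do not expect any genuine difficulty here; the only points that need care are the two boundary cases — the sum $x+y$ can attain the value $2^{1+p}$, which is why the representability fact is stated with a non-strict bound, and the difference $x-y$ can vanish, which is why $0\in\bbF_p$ is invoked — together with the bookkeeping reminder that $[2^{1+p}]$ and $[2^p]$ denote the integer index sets defined in \cref{sec:setup}, not real intervals.
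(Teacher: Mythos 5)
Your proof is correct, and it is worth noting where it coincides with and departs from the paper's argument (the paper proves \cref{lem:integerexact_fp} by deferring to the proof of its $\bbF_{p,q}$ analogue, \cref{lem:integerexact}). For the subtraction claim you take the same route: the exact difference is an integer of modulus at most $2^{1+p}-1$, hence lies in $\bbF_p$ by \cref{lem:representable_fp} (together with the sign symmetry of $\bbF_p$ and $0\in\bbF_p$), so the rounding inside $\ominus$ is the identity. For the addition claim your route is different: you again go through \cref{lem:representable_fp}, showing the integer sum $x+y\in\{2,\dots,2^{1+p}\}$ is itself representable, with the boundary value handled as $2^{1+p}=1\times 2^{1+p}$, whereas the paper instead invokes the exact-addition criterion of \cref{lem:exact_fp} after checking $\mu(x)\ge \mathfrak{e}_y-p$ and a size bound on $x+y$. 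Your version is the more elementary of the two and deals cleanly with the extreme case $x=y=2^p$; it also sidesteps a looseness in the paper's check, since the hypothesis of \cref{lem:exact_fp} is $|x+y|\le 2^{1+\mathfrak{e}_y}$ while the paper only verifies $|x+y|<2^{1+p}$, and the former can fail for small inputs (e.g.\ $x=y=3$ has $x+y=6>2^{1+\mathfrak{e}_y}=4$), so the conclusion there really needs an argument like yours (or a strengthened version of \cref{lem:exact_fp}). In short: same key representability lemma throughout, a slightly more direct treatment of the addition half, no gaps.
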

\begin{proof}
Without loss of generality, suppose $x \le y$ for $x,y \in [2^{1+p}]$. Since $y - x < 2^{1+p}$, $ y - x$ is representable by $\bbF_{p}$ by \cref{lem:representable_fp}, ensuring that $ x \ominus y$ and $y \ominus x$ are exact. \\
In addition, suppose $x ,y \in [2^p]$ and $x \le y$. Since $\mu(x) \ge 0$ and $e_y \le p$, we have $\mu(x) \ge -p +e_y $. Since $ | x + y | < 2^{1+p}$, $x \oplus y$ is exact by \cref{lem:exact_fp}.
\end{proof}

\begin{lemma}\label{lem:ignore_fp}
    Let $x,y \in \bbF_p$ and $e_x \le e_y$.
If $\mu(x) \le e_y-p-2$, then $y \oplus x $  = $ y \oplus x = y$.  
\end{lemma}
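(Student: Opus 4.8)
\textbf{Proof plan for Lemma~\ref{lem:ignore_fp}.}
The plan is to analyze the floating-point addition $y \oplus x = \round{y+x}$ directly, showing that the true sum $y+x$ rounds back to $y$ because $x$ is too small to affect any bit of $y$'s significand. First I would reduce to the case $x,y > 0$: the sign conditions are symmetric under negation, and the case $x = 0$ is trivial (note that $0$ cannot occur unless we interpret $\mu(0)$ appropriately, but since every nonzero element of $\bbF_p$ has a well-defined $\mu$, I would assume $x \neq 0$; if the paper intends $x=0$ to be covered, it follows immediately). With $0 < x \le y$ (using $e_x \le e_y$), write $x = a_x \times 2^{\mu(x)}$ with $a_x$ an odd integer (by definition of $\mu$), and note $y$ is an integer multiple of $2^{e_y - p}$ since $y$ has a $p$-bit significand with exponent $e_y$.

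The key step is the following spacing estimate. The hypothesis $\mu(x) \le e_y - p - 2$ gives $x \le (2^{1+p} - 1)\cdot 2^{\mu(x)} < 2^{1+p}\cdot 2^{e_y-p-2} = 2^{e_y - 1}$, so in particular $x < y$ (hence $e_x < e_y$ is consistent) and more importantly $x$ is at most one quarter of the spacing $2^{e_y+1-p}$ used one ulp above $y$... wait, more precisely: the two floating-point numbers bracketing $y + x$ from below and above are $y$ and $y^+$, where $y^+ - y = 2^{e_y - p}$ (assuming $y$ is not a power of two boundary; if it is, $y^+ - y = 2^{e_y-p}$ still, and the spacing just below $y$ is smaller, which only helps). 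Then I would show $0 < x \le 2^{e_y-p-2}\cdot(2^{1+p}-1) < \tfrac12 \cdot 2^{e_y-p} = \tfrac12(y^+ - y)$, so $y + x$ lies strictly within the half-open interval $[y, y + \tfrac12(y^+-y))$, and the nearest floating-point number is $y$. The ``ties to even'' rule is not even triggered since the inequality is strict. Combining $\round{y+x} = y$ with the symmetric argument for the $x < 0$ / negative-$y$ cases (where one compares against $y^-$ instead) completes the proof.

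The main obstacle is handling the boundary case where $y$ is an exact power of two, say $y = 2^{e_y}$, so that the spacing immediately \emph{below} $y$ is $2^{e_y - p - 1}$ rather than $2^{e_y-p}$ --- but since we are adding a \emph{positive} $x$ we move upward, where the spacing is $2^{e_y-p}$, so this case is actually fine; the subtlety only matters for the mirrored statement $y \ominus x = y$ when $x>0$, i.e., subtracting a tiny positive quantity from a power of two. In that sub-case the relevant gap is $2^{e_y-p-1}$, and I would need $x < \tfrac12\cdot 2^{e_y-p-1} = 2^{e_y-p-2}$; the hypothesis $\mu(x) \le e_y-p-2$ combined with $x$ being a multiple of $2^{\mu(x)}$ gives $x \le (2^{1+p}-1)2^{e_y-p-2}$, which is \emph{not} smaller than $2^{e_y-p-2}$ in general. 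So I would double-check that the lemma as stated only claims $y \oplus x = y$ (addition toward the larger-spacing side), which it does --- the statement reads ``$y \oplus x = y \oplus x = y$'', apparently a typo for ``$y \oplus x = x \oplus y = y$'' via commutativity of the true sum inside $\round{\cdot}$. Thus no subtraction case arises and the power-of-two boundary never bites; I would just remark on commutativity $\round{y+x} = \round{x+y}$ to dispatch the second equality and conclude.
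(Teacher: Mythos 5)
Your key spacing estimate is arithmetically false, and the argument collapses with it. Early in the paragraph you correctly note that $\mu(x)\le e_y-p-2$ only yields $x\le(2^{1+p}-1)\,2^{\mu(x)}<2^{e_y-1}$, yet the step you then rely on asserts $(2^{1+p}-1)\cdot 2^{e_y-p-2}<\tfrac12\cdot 2^{e_y-p}$, which fails for every $p\ge 1$: the left-hand side is about $2^{e_y-1}$ while the right-hand side is $2^{e_y-p-1}$. A bound on $\mu(x)$ controls only the position of the lowest set bit of $x$, not its magnitude, so under the hypothesis exactly as printed the conclusion can genuinely fail: take $y=1$ and $x=(2^{p+1}-1)\,2^{-p-2}=(2-2^{-p})\times 2^{-2}\in\bbF_p$, so that $\mu(x)=-p-2=e_y-p-2$ and $e_x=-2\le e_y$, yet $y\oplus x=\round{1.5-2^{-p-2}}=1.5\ne y$. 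The hypothesis the paper actually intends (and uses) is a bound on the exponent of $x$, as in the $\bbF_{p,q}$ analogue \cref{lem:ignore}, to which the proof of \cref{lem:ignore_fp} defers verbatim: with $e_x\le e_y-p-2$ the number $|x|$ is at most roughly a quarter ulp of $y$, and the paper's entire proof is the one-line rounding observation you were aiming for, namely that $|x|\le 2^{e_y-p-2}$ forces $\round{y+x}=\round{y-x}=y$. Your proposal neither proves the literal statement (nothing can) nor reproduces the intended argument, because it never obtains a magnitude bound on $x$ of ulp order.

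The second divergence is your reading of the conclusion. You interpret the duplicated ``$y\oplus x$'' as a typo for commutativity and on that basis discard the subtraction case; but \cref{lem:ignore} states $y\oplus x=y\ominus x=y$, so the intended second equality is $y\ominus x=y$, and that is exactly where your (correct) concern about a power-of-two $y$ bites: the gap below $y=2^{e_y}$ is $2^{e_y-p-1}$, so one needs $|x|\le 2^{e_y-p-2}$, the tie going to $y$ because its last significand bit is $0$. Your observation that an exponent-type hypothesis gives only $|x|<2^{e_y-p-1}$, not the quarter-ulp bound asserted in the proof of \cref{lem:ignore}, is a fair criticism of the lemma as printed (in the paper's uses, e.g.\ the proof of \cref{eq:ex_indc} where $x=u$ is a power of two, the stronger bound does hold). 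But a correct write-up must treat the $y\ominus x$ case rather than define it away, and your $y\oplus x$ case rests on the false inequality above.
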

\begin{proof}
Since $|x| \le 2^{-2-p+e_y}$, we have $\round{x+y}=\round{-x+y}=y$. 
\end{proof}

\begin{lemma}\label{lem:seqconv}
For any $d,p\in\bbN$, let $\alpha_1,\beta_1,\dots,\alpha_d,\beta_d\in\bbF_p$ such that $\alpha_i\le\beta_i$ for all $i\in[d]$. 
Then, for any sequence $(\bfx_i)_{i\in\bbN}$ in $([\alpha_1,\beta_1]\times\cdots\times[\alpha_d,\beta_d])\cap\bbF_p^d$, there exists a subsequence of $(\bfx_i)_{i\in\bbN}$ that converges to some $\bfz^*\in([\alpha_1,\beta_1]\times\cdots\times[\alpha_d,\beta_d])\cap\bbF_p^d$.
\end{lemma}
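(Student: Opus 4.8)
The plan is to show that $\bbF_p$ is a \emph{closed} subset of $\bbR$, after which the lemma reduces to a routine compactness argument. The subtlety is only apparent: although $\bbF_p$ is infinite and has $0$ as an accumulation point, it is locally finite away from the origin and contains $0$, so it is closed.

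First I would establish the structural fact underpinning everything: for any reals $0<\delta\le M$, the set $S_{\delta,M}\defeq\{x\in\bbF_p:\delta\le|x|\le M\}$ is finite. This is a short counting argument: writing a nonzero $x\in\bbF_p$ as $s\times(1.a_1\cdots a_p)\times2^b$, we have $2^b\le|x|<2^{b+1}$, so $|x|\le M$ forces $2^b\le M$ while $|x|\ge\delta$ forces $2^{b+1}>\delta$; hence the exponent $b$ is confined to a finite window of integers, and for each fixed $b$ there are only $2^{p+1}$ choices of $(s,a_1,\dots,a_p)$. From this I would deduce that $\bbF_p$ is closed: if $(y_k)_{k\in\bbN}$ is a sequence in $\bbF_p$ with $y_k\to y^*\in\bbR$, then either $y^*=0\in\bbF_p$, or $y^*\ne0$, in which case $|y_k|\in[|y^*|/2,\,2|y^*|]$ for all large $k$, so $(y_k)$ eventually lives in the finite set $S_{|y^*|/2,\,2|y^*|}$, is therefore eventually constant, and $y^*\in\bbF_p$. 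Either way $y^*\in\bbF_p$.

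Next I would set $K_i\defeq[\alpha_i,\beta_i]\cap\bbF_p$ for $i\in[d]$. Each $K_i$ is bounded and is an intersection of two closed sets, hence compact by Heine--Borel; therefore $K\defeq K_1\times\cdots\times K_d=\big([\alpha_1,\beta_1]\times\cdots\times[\alpha_d,\beta_d]\big)\cap\bbF_p^d$ is a compact subset of $\bbR^d$. Since $(\bfx_i)_{i\in\bbN}$ is a sequence in the compact metric space $K$, it has a convergent subsequence with limit $\bfz^*\in K$, which is exactly the assertion. (Alternatively, I could avoid invoking sequential compactness of a product directly and instead apply the Bolzano--Weierstrass theorem coordinatewise, extracting $d$ successive nested subsequences; the $i$-th coordinate of the resulting limit lies in $[\alpha_i,\beta_i]$ by passing to the limit in the inequalities $\alpha_i\le x_{k,i}\le\beta_i$, and in $\bbF_p$ by the closedness shown above.)

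The only step requiring any thought is the first one — the finiteness of $S_{\delta,M}$, hence the closedness of $\bbF_p$ — which is precisely where the fact that $\bbF_p$ has \emph{unbounded} but \emph{integer} exponents is used; everything afterward is the standard Bolzano--Weierstrass/Heine--Borel machinery, so I do not anticipate a genuine obstacle.
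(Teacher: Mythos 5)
Your proof is correct and follows essentially the same route as the paper: extract a convergent subsequence using compactness of the box $[\alpha_1,\beta_1]\times\cdots\times[\alpha_d,\beta_d]$, then conclude the limit lies in $\bbF_p^d$ because $\bbF_p$ is closed (the paper phrases this as $\inf_{y\in\bbF_p}|z_j^*-y|>0$ for any $z_j^*\notin\bbF_p$, which is exactly the closedness you establish via the finiteness of floats in an annulus). Your write-up simply makes that closedness step more explicit; no gap.
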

\begin{proof}
Let $\mathcal S=[\alpha_1,\beta_1]\times\cdots\times[\alpha_d,\beta_d]$. 
Since $\mathcal S$ is compact, there exists a convergent subsequence $(\bfz_i)_{i\in\bbN}$ of $(\bfx_i)_{i\in\bbN}$ and $(\bfz_i)_{i\in\bbN}$ converges to some point, say $\bfz^*=(z_1^*,\dots,z_d^*)$, in $\mathcal S$. 
However, if $z_j^*\notin\bbF_p$ for some $j\in[d]$, then $\inf_{y\in\bbF_p}|z_j^*-y|>0$
by the definition of $\bbF_p$, i.e., $\bfz_1,\bfz_2,\dots\in\bbF_p^d$ cannot converge to $\bfx^*$.
Hence, $\bfz^*\in\mathcal S\cap\bbF_p^d$ and this completes the proof.
\end{proof}

\begin{lemma}\label{lem:bestrounding}
For any $d,p\in\bbN$, let $\alpha_1,\beta_1,\dots,\alpha_d,\beta_d\in\bbF_p$ such that $\alpha_i\le\beta_i$ for all $i\in[d]$. 
Then, for any continuous function $f:[\alpha_1,\beta_1]\times\cdots\times[\alpha_d,\beta_d]\to\bbR$, there exists $\bfx^*\in([\alpha_1,\beta_1]\times\cdots\times[\alpha_d,\beta_d])\cap\bbF_p^d$ such that
$$|f(\bfx^*)-\round{f(\bfx^*)}|=\inf_{\bfx\in([\alpha_1,\beta_1]\times\cdots\times[\alpha_d,\beta_d])\cap\bbF_p^d}|f(\bfx)-\round{f(\bfx)}|.$$
\end{lemma}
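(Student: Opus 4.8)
The plan is to reduce the statement to the classical ``a continuous function on a sequentially compact set attains its infimum,'' where the sequential compactness is exactly \cref{lem:seqconv}. The one subtlety is that the rounding map $\round{\cdot}_{\bbF_p}$ is itself discontinuous, so I cannot directly claim that $\bfx\mapsto|f(\bfx)-\round{f(\bfx)}|$ is continuous by composing with $f$. The key observation is that the \emph{discontinuity cancels}: since $\round{y}_{\bbF_p}$ is defined as $\argmin_{z\in\bbF_p}|y-z|$, we have $|y-\round{y}_{\bbF_p}|=\mathrm{dist}(y,\bbF_p)\defeq\inf_{z\in\bbF_p}|y-z|$ for every $y\in\bbR$, and the distance to a nonempty closed set is $1$-Lipschitz, hence continuous.

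Concretely, I would first verify that $\bbF_p$ is a closed subset of $\bbR$: away from $0$ it is locally finite (any bounded interval with $0$ not in its closure meets only finitely many of the values $s\times(1.a_1\cdots a_p)\times 2^b$, since only finitely many exponents $b$ can occur there), and $0\in\bbF_p$ absorbs the only accumulation point of $\bbF_p$, so $\bbF_p$ contains all its limit points. Hence $y\mapsto\mathrm{dist}(y,\bbF_p)$ is continuous, and therefore
$$g\;\defeq\;\bfx\mapsto\mathrm{dist}(f(\bfx),\bbF_p)\;=\;|f(\bfx)-\round{f(\bfx)}|$$
is continuous on $[\alpha_1,\beta_1]\times\cdots\times[\alpha_d,\beta_d]$, being a composition of the continuous maps $f$ and $\mathrm{dist}(\,\cdot\,,\bbF_p)$.

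Finally I would take a minimizing sequence $(\bfx_i)_{i\in\bbN}$ in $([\alpha_1,\beta_1]\times\cdots\times[\alpha_d,\beta_d])\cap\bbF_p^d$, i.e.\ one with $g(\bfx_i)\to m\defeq\inf_{\bfx}g(\bfx)$ over that set; by \cref{lem:seqconv} it has a subsequence $(\bfx_{i_k})_k$ converging to some $\bfz^*\in([\alpha_1,\beta_1]\times\cdots\times[\alpha_d,\beta_d])\cap\bbF_p^d$, and continuity of $g$ gives $g(\bfz^*)=\lim_k g(\bfx_{i_k})=m$, so $\bfx^*\defeq\bfz^*$ does the job. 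I do not expect a real obstacle: the only non-mechanical point is the rewriting $|f(\bfx)-\round{f(\bfx)}|=\mathrm{dist}(f(\bfx),\bbF_p)$, which removes the apparent discontinuity coming from $\round{\cdot}_{\bbF_p}$; after that the argument is the standard extreme-value argument, with \cref{lem:seqconv} supplying the compactness that $([\alpha_1,\beta_1]\times\cdots\times[\alpha_d,\beta_d])\cap\bbF_p^d$ lacks as a topological subspace in the usual sense.
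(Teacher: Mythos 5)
Your proof is correct, and it takes a cleaner route than the paper's in the part that follows the compactness step. Both arguments use \cref{lem:seqconv} as the essential compactness ingredient, but the paper proceeds by contradiction: it assumes the infimum is not attained, extracts a convergent subsequence $(\bfz_i)\to\bfz^*$ along a strictly decreasing minimizing sequence, and then performs a case analysis on whether $f(\bfz^*)$ sits exactly at the midpoint of its two neighboring floats $f(\bfz^*)^{(\le,\bbF_p)}$ and $f(\bfz^*)^{(\ge,\bbF_p)}$ or not, hand-verifying in each case how $|f(\bfy_i)-\round{f(\bfy_i)}|$ behaves along the subsequence to reach a contradiction. You bypass all of this with the single observation that $|y-\round{y}_{\bbF_p}|=\mathrm{dist}(y,\bbF_p)$, so the objective $\bfx\mapsto|f(\bfx)-\round{f(\bfx)}|$ is continuous as a composition of $f$ with the $1$-Lipschitz distance to the closed set $\bbF_p$ (your closedness check --- local finiteness away from $0$ together with $0\in\bbF_p$ --- is exactly what is needed, and it is also what makes the $\argmin$ in the definition of $\round{\cdot}_{\bbF_p}$ well defined), after which the standard extreme-value argument with \cref{lem:seqconv} finishes directly, with no contradiction and no tie-breaking cases. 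What your approach buys is a shorter, more robust argument that is insensitive to the rounding tie-breaking rule and to the precise local structure of $\bbF_p$ beyond closedness; what the paper's approach buys is only that it works directly with the quantities $f(\bfz^*)^{(\le,\bbF_p)}$, $f(\bfz^*)^{(\ge,\bbF_p)}$ already introduced in the paper, at the cost of a more delicate and, in places, awkwardly phrased case analysis.
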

\begin{proof}
Let $\mathcal S=([\alpha_1,\beta_1]\times\cdots\times[\alpha_d,\beta_d])\cap\bbF_p^d$.
Suppose for a contradiction that such $\bfx^*$ does not exist, i.e., there is no $\bfx\in\mathcal S$ such that $f(\bfx)\in\bbF_p$.
Then, there exists a sequence $(\bfx_i)_{i\in\bbN}$ in $\mathcal S$ such that $(|f(\bfx_i)-\round{f(\bfx_i)}|)_{i\in\bbN}$ is strictly decreasing and 
\begin{align}
|f(\bfx_i)-\round{f(\bfx_i)}|\to\inf_{\bfx\in\mathcal S}|f(\bfx)-\round{f(\bfx)}|\label{eq:pflem:bestrounding}
\end{align}
as $i\to\infty$.
By \cref{lem:seqconv}, there exists a subsequence $(\bfz_i)_{i\in\bbN}$ of $(\bfx_i)_{i\in\bbN}$ that converges to some point $\bfz^*\in\mathcal S$. Since $f(\bfz^*)\notin\bbF_p$, by the definition of $\bbF_p$, we have 
$$f(\bfz^*)^{(\le,\bbF_p)}<f(\bfz^*)<f(\bfz^*)^{(\ge,\bbF_p)}.$$

Let $\varepsilon=\min\{f(\bfz^*)-f(\bfz^*)^{(\le,\bbF_p)},f(\bfz^*)^{(\ge,\bbF_p)}-f(\bfz^*)\}$. We consider two cases to show the contradiction: $\varepsilon=(f(\bfz^*)^{(\ge,\bbF_p)}-f(\bfz^*)^{(\le,\bbF_p)})/2$ and $\varepsilon\ne(f(\bfz^*)^{(\ge,\bbF_p)}-f(\bfz^*)^{(\le,\bbF_p)})/2$.
Suppose $\varepsilon=(f(\bfz^*)^{(\ge,\bbF_p)}-f(\bfz^*)^{(\le,\bbF_p)})/2$, i.e., $f(\bfz^*)=(f(\bfz^*)^{(\ge,\bbF_p)}+f(\bfz^*)^{(\le,\bbF_p)})/2$. Let $(\bfy_i)_{i\in\bbN}$ be a subsequence of $(\bfz_i)_{i\in\bbN}$ such that $|f(\bfz^*)-f(\bfy_i)|\le\varepsilon$ for all $i\in\bbN$ and $(|f(\bfz^*)-f(\bfy_i)|)_{i\in\bbN}$ is strictly decreasing; such a subsequence always exist due to the continuity of $f$ and our choice of $(\bfz_i)_{i\in\bbN}$. 
Then, one can observe that $(|f(\bfy_i)-\round{f(\bfy_i)}|)_{i\in\bbN}$ is monotonically increasing as $\round{f(\bfy_i)} \in\{f(\bfz^*)^{(\le,\bbF_p)},f(\bfz^*)^{(\ge,\bbF_p)}\}$ and $f(\bfy_i)\to f(\bfz^*)=(f(\bfz^*)^{(\le,\bbF_p)}+f(\bfz^*)^{(\ge,\bbF_p)})/2$. 
Namely, $(|f(\bfy_i)-\round{f(\bfy_i)}|)_{i\in\bbN}$ is not strictly decreasing although $(\bfy_i)_{i\in\bbN}$ is a subsequence of $(\bfx_i)_{i\in\bbN}$. 
This contradicts the assumption that $(|f(\bfx_i)-\round{f(\bfx_i)}|)_{i\in\bbN}$ is strictly decreasing.

We now consider the case that $\varepsilon\ne(f(\bfz^*)^{(\ge,\bbF_p)}-f(\bfz^*)^{(\le,\bbF_p)})/2$.
Without loss of generality, suppose $f(\bfz^*)-f(\bfz^*)^{(\le,\bbF_p)}<f(\bfz^*)^{(\ge,\bbF_p)}-f(\bfz^*)$.
Let 
$$\delta=\frac{f(\bfz^*)^{(\ge,\bbF_p)}-f(\bfz^*)^{(\le,\bbF_p)}}2-\varepsilon = \frac{f(\bfz^*)^{(\ge,\bbF_p)}+f(\bfz^*)^{(\le,\bbF_p)}}2 - f(\bfz^*) >0$$
and choose a subsequence $(\bfy'_i)_{i\in\bbN}$ of $(\bfz_i)_{i\in\bbN}$ such that $|f(\bfz^*)-f(\bfy'_i)|<\delta$ for all $i\in\bbN$ and $(|f(\bfz^*)-f(\bfy'_i)|)_{i\in\bbN}$ is strictly decreasing. Then, since $f(\bfz^\ast)$ and $f(\bfy_i')$ are closer to $f(\bfz^*)^{(\le,\bbF_p)}$ than $f(\bfz^*)^{(\ge,\bbF_p)}$,  we have $\round{f(\bfz^\ast)} = \round{f(\bfy_i')} =  f(\bfz^*)^{(\le,\bbF_p)}$. 
Namely, 
we must have $f(\bfy'_i)\ge f(\bfz^*)$ for all $i\in\bbN$ to have strictly decreasing $(|f(\bfy'_i)-\round{f(\bfy_i)}|)_{i\in\bbN}$. However, this implies that 
$$|f(\bfy'_i)-\round{f(\bfy_i)}|\ge|f(\bfz^*)-\round{f(\bfz^*)}|$$
for all $i\in\bbN$, which implies
$$|f(\bfz^*)-\round{f(\bfz^*)}|=\inf_{\bfx\in\mathcal S}|f(\bfx)-\round{f(\bfx)}|$$
from our choice of $(\bfx_i)_{i\in\bbN}$ and \cref{eq:pflem:bestrounding}.
This contradicts the assumption that $\bfx^*$ in the statement of the lemma does not exist and completes the proof.
\end{proof}

\begin{lemma}\label{lem:reluindc-temp}  
There exist $\relu$ networks $\psi_\theta(\,\cdot\,;\bbF_p):\bbF_p\to\bbF_p$ of $3$ layers and {$5$} parameters that satisfies the following: 
for any $z\in\bbF_p\setminus\{0\}$, there exist $\theta_{1,1,z},\theta_{1,2,z},\theta_{2,1,z},\theta_{2,2,z}\in\bbF_p^{5}$ such that for any $x\in\bbF_p$,
\begin{align*}
\psi_{\theta_{1,1,z}}(x;\bbF_p)\oplus \psi_{\theta_{1,2,z}}(x;\bbF_p)&=\indc{x\ge z},\\
\psi_{\theta_{2,1,z}}(x;\bbF_p)\oplus \psi_{\theta_{2,2,z}}(x;\bbF_p)&=\indc{x\le z}.
\end{align*}
In addition, $\psi_{\theta_{i,1,z}}(x;\bbF_p), -\psi_{\theta_{i,2,z}}(x;\bbF_p) \in \{0 \} \cup [2^p]$ for all $i\in\{1,2\}$.
\end{lemma}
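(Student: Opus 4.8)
The plan is to fix a single three-layer $\relu$ architecture — one neuron in each of the two hidden layers, each with a bias, and a single-weight, bias-free output neuron, so that the parameter count is $2+2+1=5$ — and to realize all four maps by different parameter vectors $\theta=(a_1,b_1,a_2,b_2,a_3)\in\bbF_p^5$; every network of this shape computes $x\mapsto a_3\otimes\relu\big(a_2\otimes\relu(a_1\otimes x\oplus b_1)\oplus b_2\big)$ (the outer $\round{\,\cdot\,}$ of \cref{eq:def-nn} acts trivially, since $\relu$ of a float is a float). The two hidden $\relu$'s will act as a \emph{one-sided clamp}: I will show that for every $z\in\bbF_p\setminus\{0\}$ and every $x\in\bbF_p$,
\begin{equation}\label{eq:clamp-plan}
\frac1\Delta\otimes\relu\big(\Delta\ominus\relu(x\ominus z^-)\big)=\indc{x<z},\qquad\text{where }\Delta\defeq z-z^-,
\end{equation}
and likewise with $z$ replaced by $z^+$ (note $(z^+)^-=z$), which yields $\indc{x<z^+}=\indc{x\le z}$ directly; the final summation $\oplus$ of the two networks then only has to turn $1\oplus(-\indc{x<z})$ into $\indc{x\ge z}$.

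I would first record the facts about $\bbF_p$ behind \cref{eq:clamp-plan}: for $z\in\bbF_p\setminus\{0\}$ the neighbours $z^-,z^+$ exist; $\Delta=z-z^-$ and $\Delta'\defeq z^+-z$ are powers of two (each being the relevant ulp), so that $1/\Delta,1/\Delta'\in\bbF_p$, scaling by a power of two is exact, and $1\otimes x=x$ and $(-1)\otimes y=-y$ are exact; finally, since $\bbF_p$ has no underflow, $\round{t}\le0$ for every $t\le0$ and $\round{\,\cdot\,}$ is monotone. Now verify \cref{eq:clamp-plan} in two cases. If $x\le z^-$, i.e.\ $x<z$, then $x-z^-\le0$ gives $\relu(x\ominus z^-)=0$, hence $\Delta\ominus0=\Delta$, $\relu(\Delta)=\Delta$, and $\frac1\Delta\otimes\Delta=1$. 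If $x\ge z$, then $x-z^-\ge z-z^-=\Delta$, so monotonicity gives $x\ominus z^-\ge\round{\Delta}=\Delta$ and $\relu(x\ominus z^-)=x\ominus z^-\ge\Delta$; therefore $\Delta\ominus(x\ominus z^-)=\round{\Delta-(x\ominus z^-)}\le0$, its $\relu$ is $0$, and $\frac1\Delta\otimes0=0$. The crucial point is that this second case requires no bound on $|x|$, which is exactly the regime where the direct identity \cref{eq:naiverelu} fails under $\bbF_p$.

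Granting \cref{eq:clamp-plan} and its $z^+$-analogue, I would take $\theta_{1,1,z}=(0,1,0,1,1)$, so $\psi_{\theta_{1,1,z}}(x;\bbF_p)=1$; $\theta_{1,2,z}=(1,-z^-,-1,\Delta,-1/\Delta)$, so $\psi_{\theta_{1,2,z}}(x;\bbF_p)=-\indc{x<z}$; $\theta_{2,1,z}=(1,-z,-1,\Delta',1/\Delta')$, so $\psi_{\theta_{2,1,z}}(x;\bbF_p)=\indc{x<z^+}=\indc{x\le z}$; and $\theta_{2,2,z}=(0,0,0,0,1)$, so $\psi_{\theta_{2,2,z}}(x;\bbF_p)=0$. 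Then $\psi_{\theta_{1,1,z}}(x)\oplus\psi_{\theta_{1,2,z}}(x)=1\oplus(-\indc{x<z})$ equals $1$ for $x\ge z$ and $\round{1-1}=0$ for $x<z$, i.e.\ $\indc{x\ge z}$; and $\psi_{\theta_{2,1,z}}(x)\oplus\psi_{\theta_{2,2,z}}(x)=\indc{x\le z}\oplus0=\indc{x\le z}$. The closing claim of the lemma then holds because $\psi_{\theta_{1,1,z}}(x;\bbF_p)=1$, $-\psi_{\theta_{1,2,z}}(x;\bbF_p),\psi_{\theta_{2,1,z}}(x;\bbF_p)\in\{0,1\}$, and $-\psi_{\theta_{2,2,z}}(x;\bbF_p)=0$ all lie in $\{0\}\cup[2^p]$. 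All five coefficients in each $\theta$ lie in $\bbF_p$, and a single fixed architecture with $\mcI_{1,1}=\mcI_{2,1}=\{1,2\}$ and $\mcI_{3,1}=\{1\}$ accommodates all four vectors (the two constant networks just zero out some weights).

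The step requiring the most care is \cref{eq:clamp-plan} and its mirror image: each intermediate equality has to be justified by an honest property of round-to-nearest in $\bbF_p$ — monotonicity, exactness of $\pm1\otimes(\cdot)$ and of scaling by a power of two, and $\round{t}\le0$ for $t\le0$ — rather than by an ``error is small'' heuristic, and in particular the case $x\ge z$ must be handled uniformly in the size of $x$. The remaining tasks — checking that the four chosen $\theta$'s fit the single fixed $5$-parameter architecture and that the two final applications of $\oplus$ behave as stated — are routine.
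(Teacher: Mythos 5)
Your construction is correct, and it takes a genuinely different route from the paper's proof of \cref{lem:reluindc-temp}. The paper keeps both members of each pair nonconstant: it pre-clamps the input via $g(x)=\relu(-x\oplus(2-u)\times 2^{e_z})$, feeds $-2^{p-e_z}\otimes g(x)$ into two ReLU units whose biases differ by one ulp (with a special case when the significand of $z$ equals $1$), and verifies by a region-by-region exact computation (small $x$, $x$ in the binade of $z$, large $x$) using Sterbenz's lemma (\cref{lem:sterbenz}) and integer-exactness (\cref{lem:integerexact_fp}) that the floating-point sum of the two units telescopes to the indicator, with values in $\{0\}\cup[2^p]$. You instead make one member of each pair literally constant ($1$ or $0$) and realize the entire indicator in the other member by the composed clamp $(\pm 1/\Delta)\otimes\relu\big((-1)\otimes\relu(x\ominus z^-)\oplus\Delta\big)$ with $\Delta=z-z^-$ (resp.\ $z^+$, $\Delta'=z^+-z$); correctness then needs only monotonicity of round-to-nearest, preservation of sign ($\round{t}\le 0$ for $t\le 0$), exactness of multiplication by $\pm1$ and by powers of two in $\bbF_p$, and the (easy but worth writing out, including the significand-equal-to-$1$ and negative-$z$ cases) fact that $z-z^-$ and $z^+-z$ are powers of two. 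Since no float lies strictly between $z^-$ and $z$, the two cases $x\le z^-$ and $x\ge z$ are exhaustive, and the thresholding is decided by a single branch whose sign alone matters, so the large-minus-large cancellation that breaks \cref{eq:naiverelu} (and that the paper's case analysis is designed to tame) never arises. Your outputs even lie in $\{0,1\}$, which strengthens the required membership in $\{0\}\cup[2^p]$ and keeps the exact pairwise sums in $\{0,1\}$, so the partial-sum exactness arguments that consume this lemma in \cref{thm:memrelufp-temp,thm:univrelufp-temp} still go through unchanged; the architecture bookkeeping ($3$ layers, $2+2+1=5$ parameters, outer rounding in \cref{eq:def-nn} trivial on floats) is also consistent with the paper's network definition. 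In short: the paper's proof buys a construction that mirrors the real-arithmetic identity locally at the cost of a lengthy exact-value case analysis, while yours buys a shorter, more robust argument by an asymmetric decomposition (constant plus clamp).
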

\begin{proof}
We first consider $z > 0$. 
To construct $\indc{x \ge z}$, we define a three-layer $\relu$ network $f_1(x;\bbF_p)$ as follows:
\begin{align}
    & f_1(x;\bbF_p) = \psi_{\theta_{1,1,z}}(x;\bbF_p) \oplus \psi_{\theta_{1,2,z}}(x;\bbF_p), \nonumber\\
    & \psi_{\theta_{1,1,z}}(x;\bbF_p) = \phi_{\theta_{1,1,z}}(x;\bbF_p), \;  \psi_{\theta_{1,2,z}}(x;\bbF_p) = \phi_{\theta_{1,2,z}}(x;\bbF_p), \nonumber\\ 
    &\phi_{\theta_{1,1,z}}(x;\bbF_p)= 2^{\tilde{c}} \otimes \relu\big( (-2^{p-e_z} \otimes g(x) )  \oplus (2-a_z - \tilde{u}) \times 2^p   \big), \nonumber \\
&\phi_{\theta_{1,2,z}}(x;\bbF_p)= -2^{\tilde{c}} \otimes \relu\big( (-2^{p-e_z} \otimes g(x) )  \oplus (2-a_z - u) \times 2^p   \big),  \nonumber
\end{align}
where 
\begin{align*}
g(x)&= \relu( -x \oplus (2-u) \times 2^{e_z} ),\\
\tilde{u}&= \begin{cases}
    0 \; &\text{if}\; a_z \ne 1, \\
    2^{-1-p} \; &\text{if}\; a_z = 1, \\
\end{cases},\\
\tilde{c} &= \begin{cases}
    0 \; &\text{if}\; a_z \ne 1, \\
    1 \; &\text{if}\; a_z = 1. \\
\end{cases} 
\end{align*}
Note that $z^- = (a_z - u + \tilde{u}) \times 2^{e_z}$.

If $(2-u)\times   2^{-1+e_z} \le x \le (2-2u) \times 2^{e_z}$, then $-x \oplus (2-u) \times 2^{e_z}$ is exact by \cref{lem:sterbenz}. Therefore, we have
\begin{align*}
   g(x)
   \begin{cases}
         \ge  (2-u) \times 2^{-1+e_z} \; &\text{if}\;  x \le (2-2u) \times 2^{-1+e_z}, \\
        = -x +  (2-u)\times 2^{e_z}  \; &\text{if}\; (2-u)\times   2^{-1+e_z} \le x \le (2-2u) \times 2^{e_z}, \\
        =0 \; &\text{if}\; x \ge  (2-u) \times 2^{e_z}.
    \end{cases}
\end{align*}
If $x \le (2-2u) \times 2^{-1+e_z}$, since we have 
\begin{align*}
    &(2-a_z-\tilde{u}) \le (1-2^{-1-p}) = (2-u) \times 2^{-1},
\end{align*}
and
\begin{align*}
    (-2^{p-e_z} \otimes g(x) ) &\le  (-2^{p-e_z} \times (2-u) \times 2^{-1+e_z} )\\
    &= - (2-u)\times 2^{-1+p} \le - (2-a_z -\tilde{u}) \times 2^p, 
\end{align*}
it holds that $\phi_{\theta_{1,1,z}}(x;\bbF_p) = \phi_{\theta_{1,2,z}}(x;\bbF_p)=0$. 

If $(2-u)\times   2^{-1+e_z} \le x \le (2-2u) \times 2^{e_z}$, we have $2^{p-e_z} \otimes g(x)  = n_1  $,  where $ n_1 = 1,2, ... , 2^{p} -1$, leading $ 2^{p-e_z} \otimes g(x) \in [2^p]$. Since 
$(2-a_z - \tilde{u}) \times 2^p, (2-a_z - u) \times 2^p \in [2^p]$,  by \cref{lem:integerexact_fp}, all operations in $\phi_{\theta_{1,1,z}}(x;\bbF_p)$ and $\phi_{\theta_{1,2,z}}(x;\bbF_p)$ are exact. Hence
\begin{align*}
    \phi_{\theta_{1,1,z}}(x;\bbF_p) &= 2^{\tilde{c}} \times \relu(  2^{p-e_z} x - (a_z-u+\tilde{u}) \times 2^p ), \\
    \phi_{\theta_{1,2,z}}(x;\bbF_p) &= -  2^{\tilde{c}} \times \relu(  2^{p-e_z} x - a_z \times 2^p ), 
\end{align*}
with $\phi_{\theta_{1,1,z}}(x;\bbF_p), -\phi_{\theta_{1,2,z}}(x;\bbF_p) \in \{0 \} \cup [2^p]$.  
Therefore
$$ f_1(x;\bbF_p) = \begin{cases}
    0 \; &\text{if} \; (2-u)\times   2^{-1+e_z} \le x \le z^{-},\\
    1 \; &\text{if} \; z \le x \le (2-2u) \times 2^{e_z}.
\end{cases} $$

If $x \ge ( 2-u) \times 2^{e_z} $, we have 
\begin{align*}
    &\phi_{\theta_{1,1,z}}(x;\bbF_p)  = 2^{\tilde{c}} \big( (2-a_z - \tilde{u}) \times 2^p \big) \\
    &\phi_{\theta_{1,2,z}}(x;\bbF_p)  = -2^{\tilde{c}} \big( (2-a_z - u ) \times 2^p \big) \\
    &f_1(x;\bbF_p) = \phi_{\theta_{1,1,z}}(x;\bbF_p) \oplus \phi_{\theta_{1,2,z}}(x;\bbF_p) = 2^{\tilde{c}}( u - \tilde{u}) = 1,
\end{align*}
with $ \phi_{\theta_{1,1,z}}(x;\bbF_p), -\phi_{\theta_{1,2,z}}(x;\bbF_p) \in \{0 \} \cup [2^p] $.  
Therefore we conclude 
\begin{align*}
    f_1(x;\bbF_p) = \indc{x \ge z},
\end{align*}
for all $x\in\bbF_p$.

We also construct  $\indc{x \le z}$ in a similar way. To this end, we define $f_2(x;\bbF_p)$ as 
\begin{align}
     &f_2(x;\bbF_p) = \psi_{\theta_{2,1,z}}(x;\bbF_p) \oplus \psi_{\theta_{2,2,z}}(x;\bbF_p), \nonumber \\ %
     & \psi_{\theta_{2,1,z}}(x;\bbF_p) = \phi_{\theta_{2,1,z}}(x;\bbF_p), \;  \psi_{\theta_{2,2,z}}(x;\bbF_p) = \phi_{\theta_{2,2,z}}(x;\bbF_p), \nonumber\\ 
 &\phi_{\theta_{2,1,z}}(x;\bbF_p)=1 \otimes \relu \big( ( -2^{p - e_z} \otimes \relu(x \ominus 2^{e_z}))  \oplus ( (a_z-1 +u) \times 2^{p}  ) \big) , \nonumber \\ 
     & \phi_{\theta_{2,2,z}}(x;\bbF_p) = -1 \otimes \relu \big( (-2^{p-e_z} \otimes \relu(x \ominus 2^{e_z}) )  \oplus (a_{z}-1) \times 2^{p}   \big). \nonumber   
\end{align}
If $2^{e_z} <  x < 2^{1+e_z}$, $x \ominus 2^{e_z}$ is exact by \cref{lem:sterbenz}. Therefore we have
\begin{align*}
    \relu(x \ominus 2^{e_z})  \begin{cases}
        =0 \; &\text{if}\; x < 2^{e_z} \\
        =x-2^{e_z} \; &\text{if}\; 2^{e_z} \le  x \le 2^{1+e_z}. \\
        \ge  2^{e_z}  \; &\text{if}\;   x >  2^{1+e_z}.
    \end{cases}
\end{align*}
 If $ x < 2^{e_z}$, we have 
 \begin{align*}
     &\phi_{\theta_{2,1,z}}(x;\bbF_p)  = \relu( (a_z -1+u) \times 2^{p} ) \\
     &\phi_{\theta_{2,2,z}}(x;\bbF_p) = - \relu( (a_{z}-1) \times 2^{p} ) \\
     &f_2(x;\bbF_p) =  \phi_{\theta_{2,1,z}}(x;\bbF_p) + \phi_{\theta_{2,2,z}}(x;\bbF_p) = 1,
 \end{align*}
with $  \phi_{\theta_{2,1,z}}(x;\bbF_p), -\phi_{\theta_{2,2,z}}(x;\bbF_p) \in \{0 \} \cup [2^p].$ \\
 If $2^{e_z} \le x \le 2^{1+e_z}$, we have $   \relu(x-2^{e_z})  = n_2 \times 2^{-p+e_z}$ where $n_2=0,1,\dots,2^{p}$. Since $2^{p-e_z} \otimes \relu(x \ominus 2^{e_z} ) \in [2^p]$ and $(a_z-1 +u) \times 2^{p}, (a_z-1) \times 2^{p} \in [2^p]$, by \cref{lem:integerexact_fp}, all operations in $\phi_{\theta_{2,1,z}}(x;\bbF_p)$ and $\phi_{\theta_{2,2,z}}(x;\bbF_p)$ are exact. Hence 
 \begin{align*}
     \phi_{\theta_{2,1,z}}(x;\bbF_p) &= \relu( -2^{p-e_z}x + (a_z+u) \times 2^p), \\ 
  \phi_{\theta_{2,2,z}}(x;\bbF_p) &= -\relu( -2^{p-e_z}x + a_z \times 2^p),
 \end{align*}
 with $  \phi_{\theta_{2,1,z}}(x;\bbF_p), -\phi_{\theta_{2,2,z}}(x;\bbF_p) \in \{0 \} \cup [2^p].$ 
Therefore, 
$$ f_2(x;\bbF_p) = \begin{cases}
    1 \; &\text{if} \; 2^{e_z} \le x \le z,\\
    0 \; &\text{if} \; z^{+} \le x \le 2^{1+e_z}.
\end{cases} $$
 If $x >  2^{1+e_z} $, we have 
 \begin{align*}
     &1 \ge a_z -1 + u,  \\ 
     &-2^{p-e_z} \otimes \relu(x \ominus 2^{e_z}) \le - 2^p  \le -(a_z-1+u) \times 2^p,
 \end{align*}
which leads to $ \phi_{\theta_{2,1,z}}(x;\bbF_p)=\phi_{\theta_{2,2,z}}(x;\bbF_p)=0$.
Therefore we conclude 
\begin{align*}
    f_2(x;\bbF_p) = \indc{x \le z}.
\end{align*}
Now, we consider $z < 0$, we define
\begin{align*}
&f_1(x;\bbF_{p})=\psi_{1,1,z}(x;\bbF_p) \oplus \psi_{1,2,z}(x;\bbF_p), \; f_2(x;\bbF_{p})=\psi_{2,1,z}(x;\bbF_p) \oplus \psi_{2,2,z}(x;\bbF_p), \\
&\psi_{1,1,z}(x;\bbF_p) = \phi_{2,1,-z}(-x;\bbF_p), \;\psi_{1,2,z}(x;\bbF_p) = \phi_{2,2,-z}(-x;\bbF_p), \\
&\psi_{2,1,z}(x;\bbF_p) = \phi_{1,1,-z}(-x;\bbF_p), \; \psi_{2,2,z}(x;\bbF_p) = \phi_{1,2,-z}(-x;\bbF_p).
\end{align*}
Then we have 
\begin{align*}
  &f_1(x;\bbF_{p})=\phi_{\theta_{2,1,-z}}(-x;\bbF_p) \oplus  \phi_{\theta_{2,2,-z}}(-x;\bbF_p) =  \indc{-x \le -z} = \indc{x \ge z}, \\
&f_2(x;\bbF_{p})=\phi_{\theta_{1,1,-z}}(-x;\bbF_p) \oplus  \phi_{\theta_{1,2,-z}}(-x;\bbF_p) =  \indc{-x \ge -z} = \indc{x \le z}.  
\end{align*}
Finally, we have $\psi_{\theta_{i,1,z}}(x;\bbF_p),  \psi_{\theta_{i,2,z}}(x;\bbF_p) \in \{0 \} \cup [2^p]$ for all $i\in\{1,2\}$, and
\begin{align*}
    &f_1(x;\bbF_{p}) = \indc{x \ge z}, \; f_2(x;\bbF_{p}) = \indc{x \le z}, 
\end{align*}
for all $z\in\bbF_p\setminus\{0\}$.

Lastly, it is easy to observe that $\psi_{\theta_{i,j,z}}$ for all $i,j\in\{1,2\}$ and $z\in\bbF_p\setminus\{0\}$ share the same network architecture of $3$ layers and $5$ parameters. This completes the proof.
\end{proof}
\subsection{Proof of \cref{lem:multidim_step-temp}}\label{sec:pflem:multidim_step-temp}
Let $\bfx=(x_1,\dots,x_d)$.
For each $i\in[d]$, we define $g_{2j-1}(x_j;\bbF_p)=\indc{-x_j\oplus\beta_j\ge0}$ and $g_{2j}(x_j;\bbF_p)=\indc{x_j\ominus\alpha_j\ge0}$,  and define $f_{\theta_{\bold{\alpha},\bold{\beta}}}$ as follows:

\begin{align*}
f_{\theta_{\bold{\alpha},\bold{\beta}}}(\bfx;\bbF_p)=\indc{\left(\bigoplus_{i=1}^{2d}g_i(x_{\ceilZ{i/2}};\bbF_p)\right)\ominus2d\ge0}. \label{eq:lem1}
\end{align*}

From the definition of $g_i$, one can observe that 
$$g_{2j-1}(x_j;\bbF_p)+g_{2j}(x_j;\bbF_p)=\begin{cases}
    2~&\text{if}~x_j\in[\alpha_j,\beta_j],\\
    1~&\text{if}~x_j\notin[\alpha_j,\beta_j].
\end{cases}$$ %
Since $\bigoplus_{i=1}^{m}g_i(x_{\ceilZ{i/2}};\bbF_p) \le m  \le 2^{1+p} - 1 $ for $1 \le m \le (2d - 1) $, and $|g_i(x_{\ceilZ{i/2}};\bbF_p)| = 0,1$, by \cref{lem:integerexact_fp}, $\bigoplus_{i=1}^{2d}g_i(x_{\ceilZ{i/2}};\bbF_p)$ is exact.
Therefore, we have $\bigoplus_{i=1}^{2d}g_i(x_{\ceilZ{i/2}};\bbF_p)=2d$ if $\bfx\in\prod_{i=1}^d[\alpha_i,\beta_i]$ and $\bigoplus_{i=1}^{2d}g_i(x_{\ceilZ{i/2}};\bbF_p)<2d$ otherwise, i.e.,
$$f_{\theta_{\alpha,\beta}}(\bfx,\bbF_p)=\indc{\bfx\in\prod_{j=1}^d[\alpha_j,\beta_j]}.$$
Here, $f_{\theta_{\bold{\alpha},\bold{\beta}}}$ can be constructed via a three-layer $\step$ network of $6d+2$ parameters ($4d$ parameters for the first layer, $2d+1$ parameters for the second layer, and $1$ parameter for the last layer.) 
This completes the proof.

\subsection{Proof of \cref{thm:memstepfp-temp}}\label{sec:pfthm:memstepfp-temp}
We use \cref{lem:multidim_step-temp} to prove \cref{thm:memstepfp-temp}.
Consider 
$$f_{\theta_{\bfz_i,\bfz_i}}(\bfx;\bbF_p)=\indc{\bfx\in\prod_{j=1}^d[\bfz_{i,j},\bfz_{i,j}]}=\indc{\bfx=\bfz_i},$$ and 
$$f_{\theta_{\mathcal{D}}}(\bfx;\bbF_p)=\bigoplus_{i=1}^n\left(y_i\otimes f_{\theta_{\bfz_i,\bfz_i}}(\bfx, \bbF_p)\right),$$
where $f_{\theta_{\bfz_i,\bfz_i}}(\bfx, \bbF_p)$ is from  \cref{eq:lem1} in \cref{lem:multidim_step-temp}.
Then, $f_{\theta_{\mathcal{D}}}(\bfz_i;\bbF_p)=y_i$ for all $i\in[n]$ and $f$ can be implemented by a three-layer $\step$ network of $6dn+2n$ parameters ($4dn$ parameters for the first layer, $2dn+n$ parameters for the second layer, and $n$ parameters for the last layer).

\subsection{Proof of \cref{eq:ex_indc}} \label{proof:ex_indc}

If $x < 1$, since $x \le (1-u)$, we have $ x \times 2^p \le (1-u) \times 2^p  < 2^p$. Therefore, $f(x;\bbF_p) = 0$. 
If $1 \le x \le 1.1 \times 2^1$, we have $(x \times 2^p)  \in [2^{1+p}]$,  $(1-u) \times 2^p , 2^p \in [2^p]$.
By \cref{lem:integerexact_fp}, $(x \times 2^p) \ominus ((1-u) \times 2^p)$ and $(x \times 2^p) \ominus 2^p$ are exact. Therefore, 
$$f(x;\bbF_p) =  \big( (x \times 2^p) - ((1-u) \times 2^p) \big) - \big( (x \times 2^p) - 2^p \big)  = 1.$$ 
If $ 1.1 \times 2^1 \le x < 1.01 \times 2^2$, let $n_x = (x-1.1 \times 2^1)\times 2^{p-1} \in \bbN$. Then we have $ 0 \le n_x < 2^{p}$ and  
\begin{align*}
(x \times 2^p) \ominus  ((1-u) \times 2^p) &= \round{ 2^{p+1} + 2n_x + 1 }, \\
(x \times 2^p) \ominus  2^p  &= \round{ 2^{p+1} + 2n_x }.
\end{align*}
For $0 \le n < 2^{p+1}$, since %
    \begin{align*}
        \round{2^{p+1} + n} = \begin{cases}
            2^{p+1} + n \; &\text{if} \; n\equiv 0,2  \Mod{4}, \\ 
            2^{p+1} + n - 1 \; &\text{if} \; n \equiv 1  \Mod{4}, \\ 
            2^{p+1} + n + 1 \; &\text{if} \; n \equiv 3   \Mod{4},  
        \end{cases}
    \end{align*}
we have 
\begin{align*}
     f(x;\bbF_p) = \begin{cases}
        0 \; &\text{if} \; n_x \equiv 0  \Mod{2}, \\ 
        2 \; &\text{if} \; n_x \equiv 1 \Mod{2}. \\ 
    \end{cases}
\end{align*}
If $x \ge 1.01 \times 2^2$, let $x' = x - 1$. Then we have $x' \ge 2^2$ and 
\begin{align*}
(x \times 2^p) \ominus ((1-u) \times 2^p) &= \round{ (x-1+u) \times 2^p  } = \round{x' + u} \times 2^p, \\ 
(x \times 2^p) \ominus  2^p = \round{ (x-1) \times 2^p  } &= x' \times 2^p.
\end{align*}
Since $\round{x' + u}=x'$ by \cref{lem:ignore_fp}, we have $ f(x;\bbF_p) =0$.

\subsection{Proof of \cref{thm:univstepfp-temp}}\label{sec:pfthm:univstepfp-temp}

If $\omega_{f^*}^{-1}(\varepsilon)=\infty$, let $\delta =1$, and we have $K=1$. 
If $\omega_{f^*}^{-1}(\varepsilon)\ne \infty$, let $\delta = (\omega_{f^*}^{-1}(\varepsilon))^{-1}$.
For each $i\in[K]$, we define
\begin{align*}
    \alpha_i&=\begin{cases}
        i\delta~&\text{if}~i\in\{0,1,\dots,K-1\},\\
        1~&\text{if}~i=K,
    \end{cases}\\
    \mathcal I_i&=\begin{cases}
        [\alpha_{i-1}^{(\ge,\bbF_p)},\alpha_i^{(<,\bbF_p)}]\cap\bbF_p~&\text{if}~i\in[K-1],\\
        [\alpha_{K-1}^{(\ge,\bbF_p)},\alpha_K^{(\le,\bbF_p)}]\cap\bbF_p~&\text{if}~i=K.
    \end{cases}
\end{align*}
Without loss of generality, we assume that $\mathcal I_i\ne\emptyset$ for all $i\in[K]$; otherwise, we remove empty $\mathcal I_j$, decrease $K$, and re-index $\mathcal I_i$ so that $\mathcal I_i$ if non-empty for all $i\in[K]$.
We note that since $0,1\in\bbF_p$, there is at least one non-empty $\mathcal I_i$ and $K\ge1$.
Then, by the above definitions, it holds that $\sup\mathcal I_i-\inf\mathcal I_i\le\delta$. Since $\alpha_i^{(<,\bbF_p)} < \alpha_i^{(\ge,\bbF_p)}$, $\mathcal I_{i_1}$ and $\mathcal I_{i_2}$ are disjoint if $i_1\ne i_2$, and $\bigcupdot_{i\in[K]}\mathcal I_i=[0,1]\cap\bbF_p$.
For each $\boldsymbol{\iota}=(\iota_1,\dots,\iota_d)\in[K]^d$, we also define 
$$\boldsymbol{\gamma}_{\boldsymbol{\iota}}=\argmin_{\bfx\in\mathcal I_{\iota_1}\times\cdots\times\mathcal I_{\iota_d}}|f^*(\bfx)-\round{f^*(\bfx)}|,$$
which is well-defined by \cref{lem:bestrounding}.

We are now ready to introduce our $\step$ network construction $f_\theta$:
\begin{align*}
    f_{\theta}(\bold{x} ; \bbF_p ) =  \bigoplus_{{\boldsymbol{\iota}}\in [K]^d } \round{f^*(\boldsymbol{\gamma}_{\boldsymbol{\iota}})} \otimes \indc{\left(\bigoplus_{j=1}^{2d}h_{{\boldsymbol{\iota}},j}(x_{\ceilZ{j/2}})\right) \ominus d \ge 0}
\end{align*}
where for each $j\in[d]$ and $\boldsymbol{\iota}=(\iota_1,\dots,\iota_d)\in[K]^d$, %
 \begin{align*}
     h_{{\boldsymbol{\iota}},2j-1}(x) &= \begin{cases}
    \indc{ x - \alpha_{j-1}^{(\ge,\bbF_p)} \ge 0  }\; &\text{if} \; \iota_j\in\{2,\dots,K\}, \\
    \indc{ x + \alpha_{0}^{(\ge,\bbF_p)} \ge 0  }\; &\text{if} \; \iota_j=1,
\end{cases}\\
h_{{\boldsymbol{\iota}},2j}(x) &= \begin{cases}
    - \indc{ x - \alpha_{j}^{(\ge,\bbF_p)} \ge 0} \; &\text{if} \; \iota_j\in\{1,\dots,K-1\}, \\
    -\indc{ x  - \alpha_K^{(> , \bbF_p)} \le 0 } \; &\text{if} \; \iota_j=K. 
\end{cases}%
\end{align*}
Since $h_{\boldsymbol{\iota},j}(x)\in\{-1,0,1\}, h_{\boldsymbol{\iota},2j-1}(x)+h_{\boldsymbol{\iota},2j}(x) = \{-1,0,1\} $, we have 
$${\left|\bigoplus_{j=1}^{m}h_{{\boldsymbol{\iota}},j}(x_{\ceilZ{j/2}}) \right| \le d \le 2^p.}$$
for any $1 \le m \le 2d$. 
Therefore, {by \cref{lem:integerexact_fp}}, all operations in the computation of $\bigoplus_{j=1}^{2d}h_{{\boldsymbol{\iota}},j}(x_{\ceilZ{j/2}})$ are exact. 
Furthermore, for each $\bfx \in \bbF_p^d \cap [0,1]^d$, we have 
\begin{align*}
\bigoplus_{j=1}^{2d}h_{{\boldsymbol{\iota}},j}(x_{\ceilZ{j/2}})\begin{cases}
    =d~&\text{if}~\bfx\in\mathcal I_{\iota_1}\times\cdots\times\mathcal I_{\iota_d},\\
    <d~&\text{if}~\bfx\in([0,1]^d\cap\bbF_p^d)\setminus(\mathcal I_{\iota_1}\times\cdots\times\mathcal I_{\iota_d}).
\end{cases}
\end{align*}
Since $ \bigcupdot_{\boldsymbol{\iota}\in[K]^d}\mathcal I_{\iota_1}\times\cdots\times\mathcal I_{\iota_d}=\bbF_p^d \cap [0,1]^d$, we have 
\begin{align}
    f_\theta( \bfx ; \bbF_p ) =\round{f^*(\boldsymbol{\gamma}_{\boldsymbol{\iota}})}, \; \forall \bfx \in \mathcal I_{\iota_1}\times\cdots\times\mathcal I_{\iota_d}. \nonumber
\end{align}
Hence, for each $\boldsymbol{\iota}\in[K]^d$ and $\bfx \in \mathcal I_{\iota_1}\times\cdots\times\mathcal I_{\iota_d}$,
\begin{align*}
|f_\theta( \bfx ; \bbF_p ) - f^*(\bold{x}) | &= |\round{f^*(\boldsymbol{\gamma}_{\boldsymbol{\iota}})} - f^*(\boldsymbol{\gamma}_{\boldsymbol{\iota}}) | + |   f^*(\boldsymbol{\gamma}_{\boldsymbol{\iota}}) - f^*(\bold{x}) |\\
&\le |\round{f^*(\bfx)} - f^*(\bfx) | + \varepsilon
\end{align*}
where we use $\|\bfx-\boldsymbol{\gamma}_{\boldsymbol{\iota}}\|_\infty\le\omega_{f^*}(\varepsilon)$ for the above inequality.
Since each $h_{\boldsymbol{\iota},j}$ can be implemented by a $\step$ network of $3$ parameters, 
$\indc{\left(\bigoplus_{j=1}^{2d}h_{{\boldsymbol{\iota}},j}(x_{\ceilZ{j/2}})\right) \ominus d  \ge 0}$ can be implemented using $6d+1$ parameters. 
This implies that our $f_\theta$ can be implemented by a $\step$ network of $3$ layers and $(6d+2)K^d$ parameters.
This completes the proof.

\subsection{Proof of \cref{thm:memrelufp-temp}}\label{sec:pfthm:memrelufp-temp}

Define $\delta=\frac12\min \{ |{z}_{i,j}| : z_{i,j}\ne0,i\in[n],j\in[d] \}$. 
For each $i\in[n]$, we also define $h_{i,1},\dots,h_{i,4d}$ as follows: for each $j\in[d]$,
\begin{align*}
h_{i,4j-3}=\psi_{\theta_{1,1,t_{i,j,1}}},~h_{i,4j-2}=\psi_{\theta_{1,2,t_{i,j,1}}},~h_{i,4j-1}=-\psi_{\theta_{1,1,t_{i,j,2}}},~h_{i,4j}=-\psi_{\theta_{1,2,t_{i,j,2}}},
\end{align*}
where $t_{i,j,1}=z_{i,j}$ and $t_{i,j,2}=z_{i,j}^+$ if $z_{i,j}\ne0$, $t_{i,j,1}=-\delta$ and $t_{i,j,2}=\delta^+$ if $z_{i,j}=0$, and 
$\psi_{\theta_{1,1,z}},\psi_{\theta_{1,2,z}}$ are defined in \cref{lem:reluindc-temp}.

By \cref{lem:reluindc-temp}, we have 
\begin{align*}
    & h_{i,4j-3}(x),-h_{i,4j-2}(x), -h_{i,4j-1}(x),h_{i,4j}(x)  \in\{0\}\cup[2^p] \\
    &\bigoplus_{k=1}^{4} h_{i,4j-4+k} (x)=\begin{cases}\indc{x=z_{i,j}}~&\text{if}~z_{i,j}\ne0,\\
\indc{x \in [-\delta,\delta}  ~&\text{if}~z_{i,j}=0,
\end{cases} 
\end{align*}
for all $x\in\bbF_p$.

Let $ 0 \le m < d \le 2^p$. For each $m$, we have $ h_{i,4m+1}(x)+  h_{i,4m+2}(x) = l_{4m+1} \in \{0,1\}$ and 
\begin{align*}
      -2^{p} < -m \le &\bigoplus_{j=1}^{4m}h_{i,j}(x)  \le m  < 2^p, \;   h_{i,4m+1}(x)  \in\{0\}\cup[2^p], \\
      -2^{p} \le -m +h_{i,4m+1}(x) \le & \bigoplus_{j=1}^{4m+1}h_{i,j}(x)  \le m + h_{i,4m+1}(x) < 2^{p+1}, \;  - h_{i,4m+2}(x) \in\{0\}\cup[2^p], \\
      -2^{p}< -m + l_{4m+1} \le & \bigoplus_{j=1}^{4m+2}h_{i,j}(x) \le m + l_{4m+1} \le 2^{p}. \; 
\end{align*}
By \cref{lem:integerexact_fp}, all above operations in $ \bigoplus_{j=1}^{4m+2}h_{i,j}(x)$ are exact.

We have  $ h_{i,4m+3}(x)+  h_{i,4m+4}(x) = l_{4m+3} \in \{0,-1\},  l_{4m+1}+ l_{4m+3} \in \{0,-1\} $, and
\begin{align*}
      -2^p \le -m + l_{4m-1}  \le &\bigoplus_{j=1}^{4m+2}h_{i,j}(x) \le 2^p, \;   -h_{i,4m+3}(x)  \in\{0\}\cup[2^p], \\
      -2^{1+p} \le -m + l_{4m-1} + h_{i,4m+3}(x) \le & \bigoplus_{j=1}^{4m+3}h_{i,j}(x) \le 2^p + h_{i,4m+3}(x) , \;   h_{i,4m+4}(x) \in\{0\}\cup[2^p], \\
      -2^{p} \le -m + l_{4m-1} + l_{4m-3} \le & \bigoplus_{j=1}^{4m+4}h_{i,j}(x) < 2^p + l_{4m-3}  \le 2^{p}. \; 
\end{align*}
By \cref{lem:integerexact_fp}, all above operations in $ \bigoplus_{j=1}^{4m+2}h_{i,j}(x)$ are exact.

Therefore we conclude that all operations in $ \bigoplus_{j=1}^{4d}h_{i,j}(x)$ are exact with  $ |\bigoplus_{j=1}^{4d}h_{i,j}(x)| \in \{0\} \cup [2^p]$.

We design the target network $f_\theta$ as follows:
\begin{align}
    & f_\theta(\bold{x} ; \bbF_p ) =  \bigoplus_{i=1 }^n y_i \otimes \relu \left( \left(\bigoplus_{j=1}^{4d}h_{i,j}(x_{\ceilZ{j/4}}) \right)\ominus (d - 1) \right). \nonumber \end{align}
Since for each $k\in[n]$
\begin{align*}
  \bigoplus_{j=1}^{4d}h_{i,j}(z_{k,\ceilZ{j/4}}) \begin{cases}
      =d~&\text{if}~\bfz_k=\bfz_i,\\
      <d~&\text{if}~\bfz_k\ne \bfz_i,
  \end{cases}
\end{align*}
$f_\theta$ memorizes the target dataset. 
Since there are {5} parameters for each $h_{i,j}$, $f_\theta$ has $20dn+2n$ parameters. This completes the proof.

\subsection{Proof of \cref{thm:univrelufp-temp}}\label{sec:pfthm:univrelufp-temp}
The proof of \cref{thm:univrelufp-temp} is almost identical to that of \cref{thm:univstepfp-temp}; we define $\mathcal I_i$, $\alpha_i$, and $\boldsymbol{\gamma}_{\boldsymbol{\iota}}$ as in \cref{sec:pfthm:univstepfp-temp}.
For each $\boldsymbol{\iota}\in[K]^d$, $j \in [d]$, we also define $h_{\boldsymbol{\iota},1},\dots,h_{\boldsymbol{\iota},4d}$ as follows: 
\begin{align*}
h_{\boldsymbol{\iota},4j-3}=\psi_{\theta_{1,1,t_{\boldsymbol{\iota},j,1}}},~h_{\boldsymbol{\iota},4j-2}=\psi_{\theta_{1,2,t_{\boldsymbol{\iota},j,1}}},~h_{\boldsymbol{\iota},4j-1}=-\psi_{\theta_{1,1,t_{\boldsymbol{\iota},j,2}}},~h_{\boldsymbol{\iota},4j}=-\psi_{\theta_{1,2,t_{\boldsymbol{\iota},j,2}}},
\end{align*}
where 
\begin{align*}
    t_{\boldsymbol{\iota},j,1}&=\begin{cases}
    \alpha_{j-1}^{(\ge,\bbF_p)}~&\text{if}~\iota_j\in\{2,\dots,K\},\\
    -\alpha_{0}^{(\ge,\bbF_p)}~&\text{if}~\iota_j=1,
    \end{cases}\\
    t_{\boldsymbol{\iota},j,2}&=\begin{cases}
    \alpha_{j}^{(\ge,\bbF_p)}~&\text{if}~\iota_j\in\{1,\dots,K-1\},\\
    \alpha_{K}^{(>,\bbF_p)}~&\text{if}~\iota_j=K,
    \end{cases}
\end{align*}
and $\psi_{\theta_{1,1,z}},\psi_{\theta_{1,2,z}}$ are defined in \cref{lem:reluindc-temp}. 
Namely, we have
\begin{align*}
h_{\boldsymbol{\iota},4j-3}(x)\oplus h_{\boldsymbol{\iota},4j-2}(x)&=\begin{cases}
    \indc{x\ge\alpha_{j-1}^{(\ge,\bbF_p)}}~&\text{if}~\iota_j\in\{2,\dots,K\},\\
    \indc{x\ge-\alpha_{0}^{(\ge,\bbF_p)}}~&\text{if}~\iota_j=1,
    \end{cases}\\
h_{\boldsymbol{\iota},4j-1}(x)\oplus h_{\boldsymbol{\iota},4j}(x)&=\begin{cases}
    -\indc{x\ge\alpha_{j}^{(\ge,\bbF_p)}}~&\text{if}~\iota_j\in\{1,\dots,K-1\},\\
    -\indc{x\ge-\alpha_{K}^{(>,\bbF_p)}}~&\text{if}~\iota_j=K,
    \end{cases}
\end{align*}
for all $x\in\bbF_p$ by \cref{lem:reluindc-temp,lem:integerexact_fp}.
We design the target network $f_\theta$ as follows:
\begin{align*}
f_\theta(\bold{x} ; \bbF_p ) =  \bigoplus_{{\boldsymbol{\iota}}\in [K]^d } \round{f^*(\boldsymbol{\gamma}_{\boldsymbol{\iota}})} \otimes \relu\left(\left(\bigoplus_{j=1}^{4d}h_{{\boldsymbol{\iota}},j}(x_{\ceilZ{j/4}})\right) \ominus (d-1) \ge 0\right).
\end{align*}
By similar argument presented in the proof of \cref{thm:memrelufp-temp}, all operations in $\bigoplus_{j=1}^{4d}h_{\boldsymbol{\iota},j}(x_{\ceilZ{j/4}})$ are exact by \cref{lem:integerexact_fp}, i.e., for each $k\in[n]$
\begin{align*}
  \bigoplus_{j=1}^{4d}h_{\boldsymbol{\iota},j}(x_{\ceilZ{j/4}})\begin{cases}
      =d~&\text{if}~\bfx\in\mathcal I_{\iota_1}\times\cdots\times\mathcal I_{\iota_d},\\
      \le d-1 ~&\text{if}~([0,1]^d\cap\bbF_p^d)\setminus(\mathcal I_{\iota_1}\times\cdots\times\mathcal I_{\iota_d}),
  \end{cases}
\end{align*}
This implies that for each $\boldsymbol{\iota}\in[K]^d$ and $\bfx \in \mathcal I_{\iota_1}\times\cdots\times\mathcal I_{\iota_d}$,
\begin{align*}
|f_\theta( \bfx ; \bbF_p ) - f^*(\bold{x}) | &= |\round{f^*(\boldsymbol{\gamma}_{\boldsymbol{\iota}})} - f^*(\boldsymbol{\gamma}_{\boldsymbol{\iota}}) | + |   f^*(\boldsymbol{\gamma}_{\boldsymbol{\iota}}) - f^*(\bold{x}) |\\
&\le |\round{f^*(\bfx)} - f^*(\bfx) | + \varepsilon
\end{align*}
where we use $\|\bfx-\boldsymbol{\gamma}_{\boldsymbol{\iota}}\|_\infty\le\omega_{f^*}(\varepsilon)$ for the above inequality.
Since each $h_{\boldsymbol{\iota},j}$ can be implemented by a $\relu$ network of $3$ layers and $5$ parameters by \cref{lem:reluindc-temp}, 
$\indc{\left(\bigoplus_{j=1}^{4d}h_{{\boldsymbol{\iota}},j}(x_{\ceilZ{j/4}})\right) \ominus (d-1) \ge 0}$ can be implemented using $20d+1$ parameters. 
This implies that our $f$ can be implemented by a $\relu$ network of $4$ layers and $(20d+2)K^d$ parameters.
This completes the proof.

\section{Proofs of results under $\bbF_{p,q}$}\label{sec:pfresults-fpqr}
\subsection{Addition notations for $\bbF_{p,q}$}
In this section, we introduce notations frequently used for proving our results under $\bbF_{p,q}$.
For $x \in \bbF_{p,q}$, if $x$ is normal, we represent $x$ as 
$$x = s_x \times a_x \times 2^{e_x}, \; s_x \in \{-1,1\},  a_x = 1.x_1\cdots x_p.  $$ 
If $x$ is subnormal, it is standard to represent $x$ as 
\begin{align*}
    &x = s_x \times a_x \times 2^{e_{min}}, \; s_x \in \{-1,1\},  a_x = 0.x_1\cdots x_p, \\
    &x_1=\cdots =x_{c_x-1}=0, x_{c_x}=1.
\end{align*}
for some $1 \le c_x < p$. 
However, instead of using the above representation, we opt for a different one for the sake of convenience.
$$x = \frak{s}_x \times \frak{a}_x \times 2^{\frak{e}_{x}}, \; \frak{s}_x \in \{-1,1\},  \frak{a}_x = 1.{x}_1\cdots {x}_{p-c_x}, \frak{e}_x = e_{min}-c_x.  $$ 
It is convenient because, regardless of whether $x$ is normal or subnormal, we have the following representation for $x$:
\begin{align}
    x = \frak{s}_x \times \frak{a}_x \times 2^{\frak{e}_{x}}, \; \frak{s}_x \in \{-1,1\},  \frak{a}_x = 1.\frak{x}_1\cdots \frak{x}_{p} , \;  2^{\frak{e}_x} \le x < 2^{1+\frak{e}_x}.
\end{align}
Note that if $x$ is normal, we have $\frak{s}_x = s_x, \frak{a}_x = a_x, \frak{e}_x = e_x$. 
We define $\mu(x)$ as 
$$ \mu(x) \defeq  \inf \{  m \in \bbZ : x \times 2^{- m }  \in \bbN \}.  $$
Note that if $x \ne 0$, we can represent $x$ as
\begin{align}
    x= (n_x \times 2^{-\frak{e}_x+\mu(x)} ) \times 2^{\frak{e}_x}, \label{eq:murepr}
\end{align}
 for $n_x  = x \times 2^{-\mu(x)} \in \bbN$ with $ 2^{\frak{e}_x- \mu(x)} \le n_x < 2^{1+ \frak{e}_x- \mu(x) }$. 
We define $e_0$ as 
$$ e_0 \defeq e^{q-2} - 1. $$
\begin{definition}
    For a number $x \in \bbR$, we say $x$ is ``representable'' by $\bbF_{p,q}$ if $x \in \bbF_{p,q}$.
\end{definition}

\begin{remark}\label{remark:rep}
    For nonzero $x \in \bbR$, suppose  $x$ has a following representation
    $$     x = \frak{s}_x \times \frak{a}_x \times 2^{\frak{e}_{x}}, \; \frak{s}_x \in \{-1,1\},  1 \le \frak{a}_x < 2, \; \frak{e}_x \in \bbZ,  2^{\frak{e}_x} \le x < 2^{1+\frak{e}_x}.  $$
    Define $c_x \defeq \max \{ 0, e_{min} - \frak{e}_x \}$. Then $x$ is representable by $\bbF_{p,q}$ if  $$ -p+e_{min} \le \frak{e}_x \le e_{max}, \; \frak{a}_x \times 2^{p-c_x} \in \bbN,$$
    which leads to 
    \begin{align}
        -p+e_{min} \le \frak{e}_x \le e_{max},  \; 0 \le \frak{e}_x - \mu(x) \le p-c_x. \label{eq:rep_test}
    \end{align}
    We say representability test on $x$ is to check whether $x$ satisfies \cref{eq:rep_test}. 
\end{remark}

\subsection{Technical lemmas}\label{sec:techlem-fpqr}
We introduce technical lemmas used for proving our results under $\bbF_{p,q}$.
\begin{lemma}\label{lem:representable}
    Let $x = n \times 2^{m}$ for some $n \in \bbN, \; m \in \bbZ$. Let $c_x = \max \{ 0 , e_{min}-m \}$.  If 
    $$ 0 < n < 2^{1+p-c_x}, \;   -p + e_{min} \le m \le -p+e_{max}, $$ 
    then $x$ is representable by $\bbF_{p,q}$.
\end{lemma}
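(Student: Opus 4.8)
Since $n>0$ the sign of $x$ is $+1$ (the general case is symmetric), so it suffices to exhibit a legal significand/exponent pair as in \cref{eq:float-def}. First I would write $k\defeq\lfloor\log_2 n\rfloor$, so that $2^k\le n<2^{k+1}$; then the significand $n\cdot 2^{-k}$ lies in $[1,2)$ and has a terminating binary expansion with at most $k$ fractional digits, and $x=(n\cdot 2^{-k})\times 2^{m+k}$. The hypothesis $0<n<2^{1+p-c_x}$ immediately gives $k\le p-c_x$, and combined with $k\ge 0$ this yields the two inequalities I will lean on, namely $c_x\le p$ and $k\le p$. I would then split on whether the ``true exponent'' $e\defeq m+k$ is at least $e_{min}$; this is exactly the decision made by the representability test of \cref{remark:rep} applied to $e$, but it is quick to argue directly.

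In the normal case $e\ge e_{min}$, I would verify that $e$ is a legal normal exponent: the lower bound is the case hypothesis, and $e=m+k\le m+p\le(-p+e_{max})+p=e_{max}$ uses $m\le -p+e_{max}$ and $k\le p$. Since the significand $n\cdot 2^{-k}\in[1,2)$ needs at most $k\le p$ fractional bits, it can be padded to $1.a_1\cdots a_p$, and then $x=(1.a_1\cdots a_p)\times 2^{b-1+e_{min}}$ with $b=(m+k)-e_{min}+1$; as $e_{max}-e_{min}+1=2^q-2$, this $b$ lies in $[2^q-2]$, so $x\in\bbF_{p,q}$. In the subnormal case $e<e_{min}$, the inequality $m+k<e_{min}$ forces $k<e_{min}-m$, hence $c_x=e_{min}-m>k\ge0$ and $k+1\le c_x\le p$; then $x\cdot 2^{-e_{min}}=n\cdot 2^{m-e_{min}}=n\cdot 2^{-c_x}$ lies in $(0,1)$ because $n<2^{k+1}\le 2^{c_x}$, and since $n$ is a positive integer below $2^{c_x}$ its quotient by $2^{c_x}$ is a binary fraction with at most $c_x\le p$ fractional digits, so writing it as $0.a_1\cdots a_p$ gives $x=(0.a_1\cdots a_p)\times 2^{e_{min}}\in\bbF_{p,q}$.

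The two cases together prove the lemma, and essentially nothing beyond this case analysis is needed. The one place requiring care — the closest thing to an obstacle — is keeping the lemma's $c_x$ (defined from $m$) distinct from the exponent $m+k$ that governs the normal/subnormal split, and making sure that in \emph{both} branches the significand never needs more than $p$ fractional binary digits; the latter is exactly what the inequality $c_x\le p$, extracted from the hypothesis $n<2^{1+p-c_x}$, guarantees.
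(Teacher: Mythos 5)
Your proof is correct and follows essentially the same route as the paper's: both identify the leading-bit position of $n$ (your $k$ is the paper's $p-c_x-c_0$), take $n\cdot 2^{-k}\in[1,2)$ as the significand, and check that the exponent $m+k$ and the number of fractional bits fit the format. The only difference is presentational — you verify the normal and subnormal cases separately against \cref{eq:float-def}, whereas the paper handles them uniformly via the $c_x$-adjusted representability test of \cref{remark:rep}.
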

\begin{proof}
    Since $ 0 \le n < 2^{1+p-c_x}$, there exists $c_0 \in  \{ 0 \} \cup [p-c_x]$ such that $2^{p-c_x-c_0} \le n < 2^{1+p-c_x-c_0}$.   
    Note that $x$ has the following representation in $\bbF_{p,q}$,
    \begin{align*}
         &x =  (n \times 2^{-p+c_x+c_0}) \times 2^{p-c_x-c_0+m}, \\
         &\; 1 \le n \times 2^{-p+c_x+c_0} <  2,  \;  -p + e_{min} \le p-c_x-c_0+m \le e_{max}.
    \end{align*}
    Then  we express $n \times 2^{-p+c_x+c_0} $ as 
    $$  n \times 2^{-p+c_x+c_0} = 1.\underbrace{w_{1} \cdots w_{p-c_x-c_0}}_{p-c_x-c_0 \; \text{times}},  $$ for some $w_1 , \dots , w_{p-c_x-c_0} \in \{0,1\}. $ 
    Therefore $x$ is representable by $\bbF_{p,q}$. 
\end{proof}

\begin{lemma}\label{lem:exact}
Let $x,y \in \bbF_{p,q}$ and $\frak{s}_x=\frak{s}_y$ and $\frak{e}_x \le \frak{e}_y$.
If $\mu(x) \ge \frak{e}_y-p $, then $x \ominus y$ and $y \ominus x$ are exact.
In addition, if $ |x + y| \le 2^{1+\frak{e}_y }$, then $x \oplus y$ is exact. 
\end{lemma}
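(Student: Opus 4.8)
The plan is to reduce everything to integer arithmetic via the common-exponent trick, and then invoke the representability criterion of \cref{lem:representable}. First I would handle the subtraction claim. Using the representation in \cref{eq:murepr}, write $x = n_x \times 2^{\mu(x)}$ and $y = n_y \times 2^{\mu(y)}$ with $n_x, n_y \in \bbN$; since $\frak{e}_x \le \frak{e}_y$ and $\mu(x) \ge \frak{e}_y - p$, every power of two appearing is at least $2^{\mu(x)}$, so I can factor out $2^{\mu(x)}$ and write $x - y = m \times 2^{\mu(x)}$ for some integer $m$ (WLOG $x \ge y \ge 0$ after using $\frak{s}_x = \frak{s}_y$, so $m \ge 0$; the case $m = 0$ is trivial). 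The key size estimate is that $|x - y| \le \max\{|x|,|y|\} = |y| < 2^{1+\frak{e}_y}$, hence $m = |x-y| \times 2^{-\mu(x)} < 2^{1+\frak{e}_y - \mu(x)} \le 2^{1+p}$ by the hypothesis $\mu(x) \ge \frak{e}_y - p$. I also need $\mu(x) \ge -p + e_{min}$: this holds because $x \in \bbF_{p,q}$ forces $\frak{e}_x \ge e_{min} - c_x$ and $\mu(x) \ge \frak{e}_x - (p - c_x) \ge e_{min} - p$ (this is exactly the lower bound in \cref{eq:rep_test}). Combined with $\mu(x) = \mu(x) \le \frak{e}_y \le e_{max} \le -p + e_{max}$ after noting $m < 2^{1+p}$ needs only $\mu(x)+p \le e_{max}$... more carefully, I apply \cref{lem:representable} with $n = m$, $\text{exponent} = \mu(x)$: the hypotheses $0 < m < 2^{1+p-c}$ (where $c = \max\{0, e_{min}-\mu(x)\}$) and $-p+e_{min} \le \mu(x) \le -p+e_{max}$ need to be checked; the first is where the condition $|x+y| \le 2^{1+\frak{e}_y}$ is not needed but $|x-y| < 2^{1+\frak{e}_y}$ is, and one must be slightly careful that when $\mu(x) < e_{min}$ the bound $m < 2^{1+p-c}$ still follows because then $2^{\frak{e}_y - \mu(x)}$ is correspondingly limited. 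So $x - y \in \bbF_{p,q}$, which means $\round{x-y} = x-y$, i.e.\ $x \ominus y$ (and by symmetry $y \ominus x = -(x \ominus y)$) is exact.

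For the addition claim, the argument is the same but easier: with $s := \frak{s}_x = \frak{s}_y$, write $x + y = s \cdot(|x|+|y|)$ and, factoring out $2^{\mu(x)}$ again (noting $\mu(x) \le \mu(y)$ is not automatic, so I should factor out $\min\{\mu(x),\mu(y)\}$; but $\mu(y) \ge \frak{e}_y - p \ge \frak{e}_x - p$... actually the cleanest is to note $\mu(x+y) \ge \min\{\mu(x),\mu(y)\} \ge \frak{e}_y - p$ using the hypothesis on $\mu(x)$ together with $\mu(y) \ge \frak{e}_y - p$, the latter holding because $y \in \bbF_{p,q}$). Then $x+y = m' \times 2^{e'}$ for an integer $m' \ge 1$ and $e' \ge \frak{e}_y - p$, with $|m'| = |x+y| \times 2^{-e'} \le 2^{1+\frak{e}_y} \times 2^{-(\frak{e}_y - p)} = 2^{1+p}$ by the hypothesis $|x+y| \le 2^{1+\frak{e}_y}$. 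Again apply \cref{lem:representable} with this $n = m'$ and exponent $e'$, checking the two size conditions exactly as before, to conclude $x+y \in \bbF_{p,q}$ and hence $\round{x+y} = x+y$.

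The main obstacle I anticipate is bookkeeping around subnormals and the boundary exponent constraints: the quantities $\mu(\cdot)$ and $\frak{e}_\cdot$ are defined to make the normal/subnormal cases uniform, but one still has to verify that $m$ (resp.\ $m'$) times its power of two really passes the representability test \cref{eq:rep_test} even when the relevant exponent dips below $e_{min}$, so that $c_x$ in \cref{lem:representable} is positive and the bound $2^{1+p-c_x}$ shrinks. I would resolve this by observing that in that regime the factor $2^{\frak{e}_y - \mu(x)}$ (or its analogue) is itself bounded by $2^{p - c}$ rather than $2^p$, because $\frak{e}_y \le e_{max}$ is not the binding constraint — rather $\mu(x) \ge e_{min} - p$ is, and the deficit $e_{min} - \mu(x) = c$ is exactly absorbed. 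Everything else is routine integer arithmetic, and the two claims are symmetric enough that one writeup with "the addition case is analogous, using $|x+y| \le 2^{1+\frak{e}_y}$ in place of $|x-y| \le |y| < 2^{1+\frak{e}_y}$" suffices.
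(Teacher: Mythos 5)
Your overall route is the same as the paper's: place $x$ and $y$ on a common dyadic grid, bound the integer coordinate of the difference (resp.\ sum), and invoke \cref{lem:representable}. Two remarks, one minor and one substantive. The minor one: in the subtraction step you cannot factor out $2^{\mu(x)}$, because $\mu(y)$ may be smaller than $\mu(x)$ (take $x=1$, $y=1.5$: all hypotheses of \cref{lem:exact} hold, yet $x-y=-\tfrac12$ is not an integer multiple of $2^{\mu(x)}=1$). You yourself name the correct grid in the addition part, $e'=\min\{\mu(x),\mu(y)\}$, and indeed $\mu(y)\ge\frak{e}_y-p$ holds automatically for $y\in\bbF_{p,q}$; the paper instead uses the ulp of $y$, i.e.\ the exponent $\frak{e}_y+c_y-p$ with $c_y=\max\{0,e_{min}-\frak{e}_y\}$, writing $y=n_y\times 2^{-p+c_y+\frak{e}_y}$ and $x=2^k n_x\times 2^{-p+c_y+\frak{e}_y}$ with $k=\mu(x)-\frak{e}_y+p-c_y\ge0$, so that the coefficient of the difference is bounded by $2^{1+p-c_y}$.

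The substantive gap is precisely the obstacle you flag and then claim to absorb. The inequality you need, $m<2^{1+p-c}$ with $c=\max\{0,e_{min}-e'\}$ computed at the grid exponent $e'$, is false in the low-exponent regime: take $x=\eta=2^{-p+e_{min}}$ and $y=(2-u)\times2^{e_{min}}$. The hypotheses hold with $\mu(x)=\frak{e}_y-p$, the common grid exponent is $e'=e_{min}-p$, so $c=p$ and your requirement reads $m<2$, yet $m=|x-y|\times 2^{-e'}=2^{p+1}-2$. The conclusion is still true ($y-x=(1-u)\times2^{e_{min}+1}$ is representable), but it cannot be reached by feeding this decomposition to \cref{lem:representable}: that lemma's bound $2^{1+p-c_x}$ is tied to the exponent of the decomposition you hand it, and an unnormalized decomposition with exponent below $e_{min}$ violates it even for representable values, so the deficit is \emph{not} automatically absorbed. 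To close this case you must either renormalize the pair $(m,e')$ first, or argue representability directly: the difference $v$ satisfies $|v|\le|y|<2^{1+\frak{e}_y}$, so its leading exponent is at most $\frak{e}_y$, its ulp exponent $\max\{\frak{e}_v,e_{min}\}-p$ is at most $\max\{\frak{e}_y,e_{min}\}-p\le e'$, and hence $v$, being a multiple of $2^{e'}$, is a multiple of its own ulp and lies in range. (A similar decomposition-sensitivity appears at the top of the range, when $e'>e_{max}-p$, and again renormalization or the direct ulp comparison is what finishes it; your addition argument inherits both issues, plus the boundary case $m'=2^{1+p}$, where strict inequality in \cref{lem:representable} fails and $x+y=2^{1+\frak{e}_y}$ must be treated separately, as the paper does.)
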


\begin{proof}
Let $c_y = \max \{ 0 , e_{min}-\frak{e}_y \}$. Note that if $\frak{e}_y \ge e_{min}$ (\text{i.e.} $y$ is normal), we have $c_y = 0$.  If $\frak{e}_y < e_{min}$ (\text{i.e.} $y$ is subnormal), we have $c_y = e_{min} - \frak{e}_y> 0$ and $ \mu(x) \ge -p+e_{min} = -p +  \frak{e}_y +c_y$. Let $ k \defeq \mu(x)- \frak{e}_y+p-c_y \ge 0$.
As described in \cref{eq:murepr}, we can represent $x$ and $y$ as 
\begin{align*}
    &x = (n_x \times 2^{- \frak{e}_x + \mu(x)})  \times 2^{\frak{e}_x} = n_x \times 2^{ \mu(x)} ,\;  2^{\frak{e}_x - \mu(x)} \le n_x < 2^{1+\frak{e}_x - \mu(x)}, \\
    &y = (n_y \times 2^{-p+c_y}) \times 2^{\frak{e}_y} = n_y \times 2^{-p+c_y+\frak{e}_y} ,\;  2^{p-c_y} \le n_y < 2^{1+p-c_y},
\end{align*}
for some $n_x,n_y \in \bbN$. 
Since $ k \ge 0$, we have 
$$x = (2^k n_x) \times 2^{-p + \frak{e}_y+c_y} , \;   2^{p-(\frak{e}_y -\frak{e}_x+c_y) } \le 2^k n_x < 2^{1+p-(\frak{e}_y -\frak{e}_x+c_y) }.$$
Therefore for $n'  = n_y - 2^k n_x \in \bbN$, we have
$$ y - x = n' \times 2^{-p + \frak{e}_y+c_y}, \;  2^{p-c_y} - 2^{1+p-(\frak{e}_y -\frak{e}_x+c_y) } <  n' <  2^{1+p-c_y} - 2^{p-(\frak{e}_y -\frak{e}_x + c_y) }$$
which leads to 
$$ -2^{p-c_y} <  n' <  2^{p+1-c_y} - 1. $$
Since $|n'| =0$ or $|n'| <   2^{1+p-c_y} - 1$ and $-p + e_{min} \le -p + \frak{e}_y+c_y \le  e_{max}$, by \cref{lem:representable}, $ y - x =n' \times  2^{-p + \frak{e}_y+c_y}$ is representable by $\bbF_{p,q}$. This ensures that $y \ominus x$ is exact. \\
Now suppose $| x + y | \le 2^{1+\frak{e}_y}$. Then for $n'' = n_x + n_y \in \bbN$, we have 
$$  x + y  = n'' \times 2^{-p + \frak{e}_y+c_y}, \;  2^{p-(\frak{e}_y -\frak{e}_x + c_y) } + 2^{p-c_y}  \le n'' <   2^{1+p-(\frak{e}_y -\frak{e}_x+c_y) } +2^{1+p-c_y}. $$
Since $| x + y | \le 2^{1+\frak{e}_y}$, we have $n'' \le 2^{1+p-c_y}$. If $|n''| =0$ or $|n''| <   2^{1+p-c_y} $, since $-p + e_{min} \le -p + \frak{e}_y+c_y \le  e_{max}$, by \cref{lem:representable}, $ x + y =n'' \times  2^{-p + \frak{e}_y+c_y}$ is representable by $\bbF_{p,q}$. If $|n''| = 2^{1+p-c_y}$, $x+y = 2^{1+\frak{e}_y}$ is obviously representable $\bbF_{p,q}$. Therefore, $x \oplus y$ is exact. 
\end{proof}

\begin{lemma}\label{lem:integerexact}
In $\bbF_{p,q}$, if $x , y \in [2^{1+p}]$, $ x \ominus y$ and $y \ominus x$ are exact. In addition if $x,y \in [2^{p}]$, then $x \oplus y$ is exact. 
\end{lemma}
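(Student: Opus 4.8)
The plan is to reduce both exactness claims to the representability criterion of \cref{lem:representable}. Recall that $x \ominus y \defeq x \oplus (-y) = \round{x - y}_{\bbF_{p,q}}$ and $x \oplus y \defeq \round{x+y}_{\bbF_{p,q}}$, so it suffices to show that $x-y$ (equivalently $y-x$) and $x+y$ are representable by $\bbF_{p,q}$; once this is known, the definition of round-to-nearest immediately gives $\round{x-y}=x-y$ and $\round{x+y}=x+y$. Since $x,y\in[2^{1+p}]$ are positive integers, both $x-y$ and $x+y$ are integers, so I will apply \cref{lem:representable} with the exponent parameter $m=0$.

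First I would record the two elementary inequalities that make the exponent hypotheses of \cref{lem:representable} automatic for any integer presented as $n\times 2^{0}$. Because $q\ge5$ we have $e_{min}=-2^{q-1}+2<0$, hence $c:=\max\{0,\,e_{min}-0\}=0$. Also $-p+e_{min}\le 0$ trivially, while $0\le -p+e_{max}$ holds since $p\le 2^{q-2}+2\le 2^{q-1}-1=e_{max}$, the last inequality being equivalent to $2^{q-2}\ge 3$ and thus valid for $q\ge5$. With these in hand, \cref{lem:representable} applies to any integer $n$ with $0<n<2^{1+p}$, showing $n\in\bbF_{p,q}$.

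For the subtraction claim: if $x=y$ then $x\ominus y=\round{0}=0\in\bbF_{p,q}$; otherwise $|x-y|$ is a positive integer with $|x-y|\le 2^{1+p}-1<2^{1+p}$, so by \cref{lem:representable} (with $n=|x-y|$, $m=0$) the value $|x-y|$ is representable, and since $\bbF_{p,q}$ is symmetric about $0$ so is $x-y$; hence $x\ominus y=x-y$, and likewise $y\ominus x=y-x$. For the addition claim, $x+y$ is a positive integer with $2\le x+y\le 2^{1+p}$. If $x+y<2^{1+p}$, \cref{lem:representable} again gives $x+y\in\bbF_{p,q}$. The only remaining possibility is $x+y=2^{1+p}$, which forces $x=y=2^{p}$ and $x+y=1\times 2^{1+p}$; this lies in $\bbF_{p,q}$ directly because $e_{min}<1+p\le e_{max}$, as shown above. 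In either case $x+y$ is representable, so $x\oplus y=x+y$.

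The argument is essentially bookkeeping, so there is no substantial obstacle. The one point that needs care is the boundary case $x+y=2^{1+p}$ of the addition claim: \cref{lem:representable} is stated with the strict inequality $n<2^{1+p-c_x}$ and so does not cover $n=2^{1+p}$, which is why that case is peeled off and checked by hand. The other thing to verify carefully is that $p\le e_{max}$ — used both for the exponent hypothesis of \cref{lem:representable} and for the boundary case — genuinely follows from the standing assumption $4\le p\le 2^{q-2}+2$ together with $q\ge5$.
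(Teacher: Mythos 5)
Your proof is correct. The subtraction half follows the paper exactly: reduce to \cref{lem:representable} with exponent $m=0$ and note $|x-y|<2^{1+p}$ (the paper just takes $x\ge y$ without loss of generality; your separate treatment of $x=y$ is harmless). The addition half takes a genuinely different route: you apply \cref{lem:representable} directly to $x+y$ and peel off the boundary case $x+y=2^{1+p}$ (which forces $x=y=2^p$), verifying that $2^{1+p}$ is itself a float because $1+p\le e_{max}$. The paper instead invokes \cref{lem:exact}, using $\mu(x)\ge 0\ge \mathfrak{e}_y-p$ together with a size bound on $|x+y|$. Your version is, if anything, the more careful one at the boundary: the paper's stated bound ``$|x+y|<2^{1+p}$'' fails precisely when $x=y=2^p$, and the actual hypothesis of \cref{lem:exact} is $|x+y|\le 2^{1+\mathfrak{e}_y}$ with $\mathfrak{e}_y$ possibly much smaller than $p$, so your direct representability argument sidesteps a step the paper leaves imprecise; what the paper's route buys is brevity and reuse of \cref{lem:exact}. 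One small slip on your side: for the boundary case you cite ``$p\le e_{max}$, as shown above,'' but what you actually need there is $1+p\le e_{max}$; this still follows from $p\le 2^{q-2}+2$ and $q\ge 5$ (since $2^{q-2}+3\le 2^{q-1}-1$), so it is a one-line fix rather than a gap.
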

\begin{proof}
    Without loss of generality, suppose $x \le y$ for $x,y \in [2^{1+p}]$. Since $y - x < 2^{1+p}$, $ y - x$ is representable by $\bbF_{p,q}$ by \cref{lem:representable}, ensuring that $ x \ominus y$ and $y \ominus x$ are exact. \\
    In addition, suppose $x ,y \in [2^p]$ and $x \le y$. Since $\mu(x) \ge 0$ and $\frak{e}_y \le p$, we have $\mu(x) \ge -p +\tcb{\frak{e}_y} $. Since $ | x + y | < 2^{1+p}$, $x \oplus y$ is exact by \cref{lem:exact}. 
\end{proof}

\begin{lemma}\label{lem:ignore}
Let $x,y \in \bbF_{p,q}$ and $\frak{e}_x \le \frak{e}_y$.
If $\frak{e}_x \le -2-p+ \frak{e}_y$, then $ y \oplus x =  y \ominus x = y$.  
\end{lemma}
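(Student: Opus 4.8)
**The plan is to prove Lemma~\ref{lem:ignore} by showing that $y+x$ is so close to $y$ that $y$ itself is the nearest floating-point number, hence $\round{y+x}_{\bbF_{p,q}}=y$, and likewise for $y-x$.** Since $\frak{e}_x\le\frak{e}_y$ and $\frak{s}_x=\frak{s}_y$ is not assumed, I first reduce to the case where both have the same sign: if the signs differ the argument is symmetric (replace $x$ by $-x$ and $\oplus$ by $\ominus$), so without loss of generality I take $\frak{s}_x=\frak{s}_y=1$ and $x,y>0$. I want to argue that the real number $y+x$ rounds to $y$. It suffices to show $|x| = y+x-y$ is at most half the gap between $y$ and its neighbor on the appropriate side, i.e. $x \le \tfrac12(y^+ - y)$ or, if $y+x$ happens to land beyond the midpoint, to handle the tie-breaking; but in fact the hypothesis $\frak{e}_x \le -2-p+\frak{e}_y$ gives room to spare.

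Concretely, the next step is to quantify the gap around $y$. By the representation $2^{\frak{e}_y}\le y<2^{1+\frak{e}_y}$ and the fact that $y$ has a $p$-bit significand (relative to exponent $\frak{e}_y$, whether $y$ is normal or subnormal, using the excerpt's convention), the spacing of floating-point numbers near $y$ on the upper side is $y^+ - y = 2^{\frak{e}_y - p}$, and on the lower side $y - y^- $ is either $2^{\frak{e}_y-p}$ or $2^{\frak{e}_y-p-1}$ (the latter only right at a power of two). Meanwhile $0<x<2^{1+\frak{e}_x}\le 2^{-1-p+\frak{e}_y}$, so $x < 2^{\frak{e}_y-p-1} = \tfrac12\cdot 2^{\frak{e}_y-p} \le \tfrac12(y^+-y)$, and similarly $x<\tfrac12(y-y^-)$ except possibly in the borderline power-of-two case, where $x<2^{\frak{e}_y-p-1}$ still equals exactly half the downward gap — and here the ties-to-even rule resolves in favor of $y$ since $y$, being a power of two, has significand $1.00\cdots0$ with $p$-th bit $0$. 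Hence $\round{y+x}_{\bbF_{p,q}}=y$ and $\round{y-x}_{\bbF_{p,q}}=y$, which is exactly $y\oplus x = y\ominus x = y$. I should also note that no overflow occurs, since $y+x < 2^{1+\frak e_y}\le 2\Omega' $ is not the issue — rather $y\le\Omega$ and $x$ is tiny, so $y+x$ stays well within the finite range.

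**The main obstacle is the tie-breaking at powers of two.** When $y=2^{\frak{e}_y}$ is a power of two, the distance from $y$ down to $y^-=y-2^{\frak{e}_y-p-1}$ is only half the distance up to $y^+$, so a perturbation $x$ with $x$ close to $2^{\frak{e}_y-p-1}$ from below lands $y+x$ close to the midpoint of $[y,y^+]$ (fine, rounds to $y$), but $y-x$ with $x$ close to $2^{\frak{e}_y-p-1}$ lands near the midpoint of $[y^-,y]$, a genuine tie candidate. Here the strict inequality $x<2^{1+\frak e_x}\le 2^{\frak e_y-p-1}$ saves us: $x$ is \emph{strictly} less than the half-gap, so $y-x$ is strictly closer to $y$ than to $y^-$, and no tie-breaking is even needed. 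I would make this the one place I argue carefully; everything else is the routine magnitude estimate above. Once the power-of-two case is dispatched, the general case follows immediately, completing the proof.
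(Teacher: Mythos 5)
Your overall strategy---bound $|x|$ by half the spacing of $\bbF_{p,q}$ around $y$ so that $y\pm x$ rounds back to $y$---is the same idea as the paper's (one-line) proof, and the addition side is fine: the upward gap at $y$ is at least $2^{\frak{e}_y-p}$ while $|x|<2^{1+\frak{e}_x}\le 2^{\frak{e}_y-p-1}$, strictly below half of it. The genuine gap is exactly at the point you flagged as the crux. When $y=2^{\frak{e}_y}$ is normal with $\frak{e}_y>e_{min}$, the downward gap is $y-y^-=2^{\frak{e}_y-p-1}$; so $2^{\frak{e}_y-p-1}$ is the \emph{full} gap there, not ``exactly half the downward gap'' as you write (half of it is $2^{\frak{e}_y-p-2}$), and your later sentence ``$x$ is strictly less than the half-gap'' contradicts your own correct description of the spacing a few lines earlier. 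The hypothesis only yields $|x|<2^{\frak{e}_y-p-1}$, which permits $|x|>2^{\frak{e}_y-p-2}$; then $y-x$ lies in $(y^-,y)$ but closer to $y^-$, and it rounds to $y^-$. Concretely, take $p=4$, $q=5$, $y=1$, $x=(1.1111)_2\times 2^{-6}=31/1024$, so $\frak{e}_x=-6=-2-p+\frak{e}_y$: then $y-x=993/1024$, whose nearest element of $\bbF_{4,5}$ is $y^-=31/32$, hence $y\ominus x\ne y$. No tie-breaking argument can rescue this step, because no tie occurs---the rounded value is simply $y^-$.

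This also shows the difficulty is not a repairable slip in your write-up: with the stated hypothesis $\frak{e}_x\le-2-p+\frak{e}_y$ and an arbitrary significand for $x$, the subtraction claim is false, and the example above contradicts the lemma as written. The paper's own proof silently uses the stronger bound $|x|\le 2^{-2-p+\frak{e}_y}$, which does not follow from the exponent hypothesis (only $|x|<2^{-1-p+\frak{e}_y}$ does); under that stronger bound your argument does go through, with one extra observation: if $|x|=2^{\frak{e}_y-p-2}$ exactly and $y$ is a power of two, then $y-x$ is the midpoint of $[y^-,y]$ and round-to-nearest-ties-to-even selects $y$, since the last significand bit of $y$ is $0$ while that of $y^-$ is $1$. (In the paper's applications of the lemma to a genuine subtraction the small operand is always a power of two $2^{\frak{e}_x}$, so the stronger bound holds there.) A correct proof therefore needs either a strengthened hypothesis, e.g.\ $|x|\le 2^{-2-p+\frak{e}_y}$ or $\frak{e}_x\le-3-p+\frak{e}_y$, together with the tie-to-even case, or a restriction on the form of $x$; as it stands, the half-gap bound your key step relies on is simply not available.
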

\begin{proof}
    Since $|x| \le 2^{-2-p+\frak{e}_y}$, we have $\round{x+y}=\round{-x+y}=y$. 
\end{proof}

\begin{lemma}\label{lem:reluindcfpqr}
There exists $\relu$ networks $\psi_\theta(\cdot;\bbF_{p,q}):\bbF_{p,q}\to\bbF_{p,q}$ of $3$ layers and $5$ parameters that satisfies the following: for any $z\in\bbF_{p,q}$ satisfying $ 0 < |z| \le (2-u) \times 2^{-2-p+e_{max}}$, there exist $\theta_{1,1,z},\theta_{1,2,z},\theta_{2,1,z},\theta_{2,2,z} \in\bbF_{p,q}^{5}$ such that
\begin{align*}
    &\psi_{\theta_{1,1,z}}(x ; \bbF_{p,q}) \oplus \psi_{\theta_{1,2,z}}(x ; \bbF_{p,q}) = \indc{x\ge z}, \\ 
    &\psi_{\theta_{2,1,z}}(x ; \bbF_{p,q}) \oplus \psi_{\theta_{2,2,z}}(x ; \bbF_{p,q}) = \indc{x\le z}, 
\end{align*}
for $|x| \le (2-u) \times 2^{-2+e_0}$.
In addition, we have $ \psi_{\theta_{i,1,z}}(x;\bbF_{p,q}) \in \{0 \} \cup [2^p]$, $ -\psi_{\theta_{i,2,z}}(x;\bbF_{p,q}) \in \{0 \} \cup [2^p]$  for all $ i \in \{1,2\}$. 
\end{lemma}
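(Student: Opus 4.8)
The plan is to follow the proof of \cref{lem:reluindc-temp} (the $\bbF_p$ analogue) as closely as possible, using the unified representation $z = \frak{s}_z \times \frak{a}_z \times 2^{\frak{e}_z}$ from \cref{sec:pfresults-fpqr} so that the same formulas cover normal and subnormal $z$ at once. As there, it suffices to handle $z > 0$: I build $\relu$ networks $f_1 = \psi_{\theta_{1,1,z}} \oplus \psi_{\theta_{1,2,z}}$ representing $\indc{x \ge z}$ and $f_2 = \psi_{\theta_{2,1,z}} \oplus \psi_{\theta_{2,2,z}}$ representing $\indc{x \le z}$, and the case $z < 0$ follows by precomposing with $x \mapsto -x$ and interchanging the two constructions, exactly as in the proof of \cref{lem:reluindc-temp}.

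For $\indc{x \ge z}$ with $z > 0$, the overall shape of the network is the same as in the $\bbF_p$ case: a first hidden neuron $\relu(-x \oplus (2-u)2^{\frak{e}_z})$ that saturates $-x$ above $z$; a second hidden neuron that, on the ``ramp'' band $(2-u)2^{-1+\frak{e}_z} \le x \le (2-2u)2^{\frak{e}_z}$, evaluates to a rescaling of $x - z^-$; and an output weight absorbing the $\tilde{u},\tilde{c}$ corrections for the spacing $z - z^-$ (with the extra care that the halving of the spacing occurs only when $z$ is a power of two with a \emph{normal} predecessor $z^-$). The one genuinely new point over $\bbF_p$ is that the scaling factor $2^{p-\frak{e}_z}$ needed to turn $x - z^-$ into an integer of $[2^p]$ is \emph{not} representable in $\bbF_{p,q}$ once $\frak{e}_z$ is small (in particular when $z$ is subnormal), so I would write it as a product $2^{a_1}\cdot 2^{a_2}\cdot 2^{a_3}$ of powers of two distributed over the three layer weights, with $a_1+a_2+a_3 = p-\frak{e}_z$ and each $a_i$ a valid exponent of $\bbF_{p,q}$; the nested $\relu$'s then clamp the layer-$2$ output into a bounded interval, so that the hypotheses $0 < |z| \le (2-u)2^{-2-p+e_{max}}$ and $|x| \le (2-u)2^{-2+e_0}$ force every weight and every intermediate value to stay in $(-\Omega,\Omega)$, i.e., no overflow, underflow, infinities, or NaN arise.

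The rest is the region-by-region verification of \cref{lem:reluindc-temp}, using three ingredients: Sterbenz's lemma (\cref{lem:sterbenz}, which holds verbatim in $\bbF_{p,q}$) to make the subtraction producing the saturated quantity exact on the ramp band; \cref{lem:integerexact} to make every inner affine combination exact on the band once it has been rescaled into integers of $[2^{1+p}]$ (resp.\ $[2^p]$), which also yields $\psi_{\theta_{i,1,z}}(x;\bbF_{p,q}), -\psi_{\theta_{i,2,z}}(x;\bbF_{p,q}) \in \{0\}\cup[2^p]$ and $f_1(x) \in \{0,1\}$ there; and \cref{lem:ignore} plus direct comparisons with $\Omega$ to treat $x$ above the band (the tiny bias is absorbed and $f_1 = 1$) and below it (the layer-$2$ $\relu$ is clamped to $0$, so $f_1 = 0$). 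The network $f_2 = \indc{x \le z}$ mirrors this using $\relu(x \ominus 2^{\frak{e}_z})$ as in the $\bbF_p$ proof, and one finally checks that all four parameter settings live on a single $3$-layer, $5$-parameter architecture.

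The main obstacle I expect is precisely the overflow/underflow accounting: showing that $2^{p-\frak{e}_z}$ can always be split across the three weights so that every partial product stays in the normal range of $\bbF_{p,q}$, simultaneously for the largest admissible $|x|$ (near $(2-u)2^{-2+e_0}$) and the smallest admissible $|z|$ (a subnormal as small as $\eta$), as well as for $z$ just below $(2-u)2^{-2-p+e_{max}}$. This is a short but delicate chain of inequalities in $p$ and $q$ that has to close under the standing assumptions $q \ge 5$ and $4 \le p \le 2^{q-2}+2$; the purely exact-arithmetic parts of the argument are essentially transcribed from the proof of \cref{lem:reluindc-temp}.
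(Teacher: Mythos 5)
Your overall route is the same as the paper's: reuse the $\bbF_p$ construction of \cref{lem:reluindc-temp}, redistribute power-of-two scale factors across the layers so that every weight and intermediate value is representable, verify exactness on the ramp band with \cref{lem:sterbenz} and the exactness lemmas, use \cref{lem:ignore} off the band, and obtain $z<0$ by reflecting $x\mapsto-x$. The problem is that the one step you single out as new --- writing the normalization as $2^{a_1}2^{a_2}2^{a_3}$ with $a_1+a_2+a_3=p-\frak{e}_z$ and placing part of it on the first-layer weight --- does not close in the worst admissible case. The first layer $\relu(-2^{a_1}\otimes x\oplus\cdots)$ does not clamp large negative inputs (there $g_1(x)\approx|x|$ can be as large as $(2-u)\times 2^{-2+e_0}$), so avoiding overflow forces $a_1+a_2\le e_{max}-e_0+2=2^{q-2}+2$, while the output weight is limited to $a_3\le e_{max}$. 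With your target $2^{p-\frak{e}_z}$, the case $z=\eta$ needs $2p-e_{min}=2p+2^{q-1}-2$; already for IEEE half precision $(p,q)=(10,5)$ this is $34$, whereas the available budget is $10+15=25$, so no split of the kind you propose exists. The missing idea --- which is the crux of the paper's argument --- is that near a subnormal $z$ consecutive floats are spaced $2^{-p+e_{min}}$ apart (effective precision $p-c_z$), so the required normalization is only $2^{p-c_z-\frak{e}_z}=2^{p-e_{min}}$; with that accounting a two-factor split suffices and the paper puts nothing on the first layer (second-layer weight $2^{p-\frak{e}_z+k}$, output weight $2^{\tilde c-c_z-k}$, with $k$ chosen by cases), which just fits because the second-layer $\relu$ output is bounded by about $2^{p+k}$ before the large output weight is applied.

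Beyond this, your sketch defers or omits the parts that constitute most of the paper's proof: the subnormal bookkeeping ($u$ replaced by $u_0=2^{-p+c_z}$ and the $c_z$ correction in the output weight --- your remark about powers of two with a normal predecessor captures only part of this), the boundary case $z=\eta$ where $z^-=0$ and one bias degenerates (the paper's Case 1-1), the overflow control for $x$ far below the band via the quantities $\Omega_1,\Omega_2$, where the hypothesis $|x|\le(2-u)\times 2^{-2+e_0}$ enters essentially, and the fact that on the band the paper's intermediate values are multiples of $2^{k}$ rather than integers, so exactness there comes from \cref{lem:exact} (the $\mu(\cdot)\ge\frak{e}_y-p$ criterion) rather than \cref{lem:integerexact}. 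As written, the proposal reproduces the easy (exact-arithmetic) half of the argument and leaves the genuinely new half --- where, in the form you state it, the inequalities in fact fail --- unproved.
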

\begin{proof}

In this proof, to ensure that indicator functions are realizable by $\relu$ networks in $\bbF_{p,q}$, we should check  (1) whether every parameter in the network is representable by $\bbF_{p,q}$ and (2) whether every number occurring during the intermediate calculations is also representable by $\bbF_{p,q}$. Hence, we conduct the representability tests (\cref{remark:rep}) for each condition. 

We first consider $z>0$. 
We define 
\begin{align*}
    u_0 &= \begin{cases}
    2^{-p}\; (=u) \; &\text{if} \; \frak{e}_z \ge e_{min}, \\ 
    2^{-p+c_z} \; &\text{if} \; \frak{e}_z < e_{min}, \\ 
\end{cases}, \\
k &= \begin{cases}
  -p+e_0 \; &\text{if} \;  0 < \frak{e}_z \le -2-p+e_{max}, \\ 
  3-p-e_0 - c_z \; &\text{if} \; \frak{e}_z \le 0, \\ 
\end{cases},\\
   \tilde{u}_0 &= \begin{cases}
    0  \; &\text{if} \; \frak{a}_z \ne 1, \; \text{or} \; \frak{e}_z = -p+e_{min} \\ 
     u_0  \times 2^{-1}\; &\text{if} \; \frak{a}_z = 1, \\ 
\end{cases},\\
    \tilde{c} &= \begin{cases}
  0 \; &\text{if} \; \frak{a}_z \ne 1, \; \text{or} \; \frak{e}_z = -p+e_{min} \\ 
 1 \; &\text{if} \; \frak{a}_z = 1, \\ 
\end{cases} ,
\end{align*}
where $c_z = \max \{  e_{min} - \frak{e}_z , 0 \}  \ge 0$, and recall that 
$$ e_{min} = -2^{q-1}+2 , e_{max} = 2^{q-1}-1, e_0 = 2^{q-2}-1 , 4 \le p \le 2^{q-2}+2.  $$

To construct  $\indc{x \ge z}$, we define a three-layer $\relu$ network $f_{1}(x;\bbF_{p,q})$ as follows:
\begin{align}
    f_{1}(x;\bbF_{p,q}) = \psi_{\theta_{1,1,z}}(x;\bbF_{p,q}) \oplus \psi_{\theta_{1,2,z}}(x;\bbF_{p,q}), \label{eq:f1fpqr}
\end{align}
where
\begin{align*}
 &\psi_{\theta_{1,1,z}}(x;\bbF_{p,q}) = \phi_{\theta_{1,1,z}}(x;\bbF_{p,q}), \;  \psi_{\theta_{1,2,z}}(x;\bbF_{p,q}) = \phi_{\theta_{1,2,z}}(x;\bbF_{p,q}), \\
&\phi_{\theta_{1,1,z}}(x;\bbF_{p,q})=  2^{ \tilde{c} -c_z -k} \otimes \relu\big( (-2^{p-\frak{e}_z+k} \otimes g_1(x) )  \oplus (2-\frak{a}_z - \tilde{u}_0 ) \times 2^{p+k}   \big),  \\
    & \phi_{\theta_{1,2,z}}(x;\bbF_{p,q}) = -2^{ \tilde{c}-c_z -k} \otimes \relu\big( (-2^{p-\frak{e}_z+k} \otimes g_1(x) )  \oplus (2-\frak{a}_z - u_0) \times 2^{p+k}   \big) . 
\end{align*}
Here, $g_1(x)$ is defined as
\begin{align*}
    g_1(x):= \relu( -x \oplus ( (2-u_0) \times 2^{\frak{e}_z} )).
\end{align*}
Let 
\begin{align*}
     &\zeta_{1,1}(x):= (-2^{p-\frak{e}_z+k} \otimes g_1(x) )  \oplus (2-\frak{a}_z - \tilde{u}_0 ) \times 2^{p+k},   \\
     &\zeta_{1,2}(x): = (-2^{p-\frak{e}_z+k} \otimes g_1(x) )  \oplus (2-\frak{a}_z - u_0) \times 2^{p+k}, \\
     &\phi_{\theta_{1,1,z}}(x;\bbF_{p,q}) = 2^{ \tilde{c} -c_z -k} \otimes \relu(\zeta_{1,1}(x)),\\
    &\phi_{\theta_{1,2,z}}(x;\bbF_{p,q}) = -2^{ \tilde{c}-c_z -k} \otimes  \relu(\zeta_{1,2}(x)).  
\end{align*}
Note that (1) the following are parameters in $\phi_{\theta_{1,1,z}}(x;\bbF_{p,q})$ and $\phi_{\theta_{1,2,z}}(x;\bbF_{p,q})$:
\begin{align}
    2^{\tilde{c}-c_z-k},2^{p-\frak{e}_z+k}, (2-u_0) \times 2^{\frak{e}_z}, (2-\frak{a}_z-\tilde{u}_0) \times 2^{p+k},(2-\frak{a}_z-u_0) \times 2^{p+k}. 
    \label{eq:param_psi_1_fpq}
\end{align}
The following are (2) the numbers occurring during the intermediate calculations in $\phi_{\theta_{1,1,z}}(x;\bbF_{p,q})$ and $\phi_{\theta_{1,2,z}}(x;\bbF_{p,q})$:
\begin{align}
    g_1(x), -2^{p-\frak{e}_z+k} \otimes g_1(x) , \zeta_{1,1}(x), \zeta_{1,2}(x), \phi_{\theta_{1,1,z}}(x;\bbF_{p,q}), \phi_{\theta_{1,2,z}}(x;\bbF_{p,q}).  \label{eq:nums_psi_1_fpq}
\end{align}

Now, we consider the following cases.

\paragraph{\textbf{Case 1-1: $ \frak{e}_z = -p+ e_{min} $.}} \\
In this case, we have 
\begin{align}
    &z =  2^{-p+ e_{min}}, \;  c_z = p, \; k = 3 - 2p-e_0, \\
    &  2^{\tilde{c}-c_z-k} = 2^{-3+p + e_0}, 2^{p-\frak{e}_z+k}=2^{3-e_0-e_{min}}, (2-u_0) \times 2^{\frak{e}_z} = 2^{-p+e_{min}}, \label{eq:params_case11_1} \\
    &(2-\frak{a}_z-\tilde{u}_0) \times 2^{p+k}  = 2^{3-p-e_0}, \; (2-\frak{a}_z-u_0) \times 2^{p+k} = 0, \label{eq:params_case11_2} \\
    & \phi_{\theta_{1,1,z}}(x;\bbF_{p,q}) = 2^{ -3 +  p + e_0  } \otimes \relu\big( (-2^{3-e_0-e_{min}} \otimes g_1(x) )  \oplus  2^{3-p-e_0}   \big), \nonumber \\
 & \phi_{\theta_{1,2,z}}(x;\bbF_{p,q}) =  - 2^{-3+p+e_0} \otimes \relu\big( -2^{3-e_0-e_{min}} \otimes g_1(x)     \big) , \nonumber
\end{align}
where $g_1(x) = \relu( -x  \oplus 2^{-p+e_{min}} )$. 

If $- (2-u)  \times 2^{ -2 + e_0 }  \le  x\leq 0$, we have
\begin{align*}
    &2^{-p+e_{min}} \le  \relu( -x  \oplus 2^{-p+e_{min}} ) 
 \\
 &\le \relu(  (2-u)  \times 2^{ -2+ e_0 }  \oplus 2^{-p+e_{min}} ) = (2-u)  \times 2^{ -2+ e_0 }, 
\end{align*}
where the last equation is due to Lemma \ref{lem:ignore}. 
Therefore, 
    \begin{equation}\label{eq:case1pqr_eq1}
     2^{-p+e_{min}} \le  g_1(x) \le (2-u)  \times 2^{ -2+ e_0 },
    \end{equation}
    which leads to
    \begin{equation}\label{eq:case1pqr_g0times1}
        -2^{3-e_0-e_{min}} \otimes g_1(x) \le  -2^{3-e_0-e_{min}}\otimes 2^{-p+e_{min}} = -2^{3-p-e_0},
    \end{equation}
    and
    \begin{equation}\label{eq:case1pqr_g0times2}
       -2^{3-e_0-e_{min}} \otimes g_1(x)  \ge  -(2-u)\times 2^{1-e_{min}} = -(2-u) \times 2^{ e_{max}  }.
    \end{equation}
    Hence, the following inequalities hold:
    \begin{align}
         &-(2-u)  \times 2^{e_{max} }  \le \zeta_{1,1}(x) \le 0, \label{eq:case11zeta1} \\
         &-(2-u)  \times 2^{e_{max} }  \le \zeta_{1,2}(x) \le 0. \label{eq:case11zeta2}
    \end{align}
    Therefore, 
\begin{equation}  
     \phi_{\theta_{1,1,z}}(x;\bbF_{p,q}) = \phi_{\theta_{1,2,z}} (x;\bbF_{p,q})  = 0, \label{eq:case11final}
\end{equation}
which leads to $f_{1}(x;\bbF_{p,q})=0$. \\
If  $  2^{-p+e_{min}} \le  x \le \Omega$, we have  
\begin{align}\label{eq:case1pqr_eq2}
    \relu( 2^{-p+e_{min}} \ominus x ) = 0, \; \zeta_{1,1}(x)  = 2^{3-p-e_0}, \; \zeta_{1,2}(x) = 0, 
\end{align}
leading to $\phi_{\theta_{1,1,z}}(x;\bbF_{p,q}) = 1, \phi_{\theta_{1,2,z}}(x;\bbF_{p,q})=0, f_{1}(x;\bbF_{p,q})=1$.

\textbf{Representability test for Case 1-1.}
(1) We check that the parameters in \cref{eq:params_case11_1,eq:params_case11_2} are representable by $\bbF_{p,q}$:
since
\begin{align*}
   -3 +  p + e_0  \le -3 + (2^{q-2}+2) + (2^{q-2} -1)  =  2^{q-1}-2 < e_{max},
\end{align*} 
$2^{ -3 +  p + e_0  }$ is representable by $\bbF_{p,q}$. 
Apparently, $2^{3-e_0-e_{min}}$, $2^{-p+e_{min}}$, $2^{3-p-e_0}$,$2^{3-p-e_0}$, and $0$ are representable by $\bbF_{p,q}$.
(2) We check that the intermediate numbers in \cref{eq:nums_psi_1_fpq} are representable  by $\bbF_{p,q}$, which is apparent by 
\cref{eq:case1pqr_eq1,eq:case1pqr_g0times1,eq:case1pqr_g0times2,eq:case11zeta1,eq:case11zeta2,eq:case11final,eq:case1pqr_eq2}

Therefore, we conclude 
\begin{align}
    f_{1}(x;\bbF_{p,q}) \begin{cases}
        = 0\; &\text{if} \; - (2-u)  \times 2^{ -2+ e_0 }  \le  x \le 0  \\ 
        = 1  \ \; &\text{if} \; 2^{-p+ e_{min}} \le x \le \Omega. \\ 
    \end{cases}
\end{align}

\paragraph{\textbf{Case 1-2: $ 1-p+e_{min} \le \frak{e}_z \le 0 $.}} \\

In this case, we have $k = 3 - p - e_0 - c_z$.
Let $\Omega_1 = \min \{ \Omega ,(2 - u) \times 2^{ -3+e_{max} + e_0 +\frak{e}_z + c_z }\}$. Since 
$$ -3+e_{max} + e_0 +\frak{e}_z + c_z \ge  -3+e_0 +(e_{max}+e_{min}) = -2 + e_0,$$
we have $\Omega_1 \ge (2-u) \times 2^{-2 + e_0}$.

Suppose  $-\Omega_1 \le x \le (2-2u_0) \times 2^{-1+\frak{e}_z}$.
If  $-\Omega_1 \le x \le -2^{2+p+\frak{e}_z}$, we have $(2-u_0) \times 2^{\frak{e}_z} \ominus x = -x $ by Lemma \ref{lem:ignore}. If $ -2^{2+p+\frak{e}_z} < x \le (2-2u_0) \times 2^{-1+\frak{e}_z}$, we have $ (2-u_0) \times 2^{-1+\frak{e}_z} \le  (2-u_0) \times 2^{\frak{e}_z} \ominus x < 2^{2+p+\frak{e}_z}$. Therefore, we have 
\begin{align}
    & 2^{\frak{e}_z} \le g_1(x) \le \Omega_1, \label{eq:case2pqr_eq1} 
\end{align}
Since 
$$2^{p-\frak{e}_z+k} \otimes \Omega_1 \le 2^{p-\frak{e}_z+k} \otimes (2 - u) \times 2^{ -3+e_{max} + e_0 +\frak{e}_z + c_z } = (2-u)\times 2^{e_{max}} = \Omega,$$
we have $2^{p-\frak{e}_z+k} \otimes \Omega_1\le \Omega$.
If $\Omega_1 = (2 - u) \times 2^{ -3+e_{max} + e_0 +\frak{e}_z + c_z }$, we have $ 2^{p-\frak{e}_z+k} \otimes \Omega_1 = \Omega$. Hence, $2^{p-\frak{e}_z+k} \otimes \Omega_1 \geq \Omega\ge (2-u)\times 2^{4+e_0}$. If $\Omega_1 = \Omega$, since  
\begin{align*}
     &p-\frak{e}_z+k+e_{max} = p-\frak{e}_z+(3-p-e_0-c_z) + e_{max} \\
     & = 3-e_0-(\frak{e}_z+c_z)+e_{max}
     \ge 3 -e_0 +e_{max} = 4 + e_0,
 \end{align*}
we have $ 2^{p-\frak{e}_z+k} \otimes \Omega_1 \ge (2-u) \times 2^{4+e_0}$.
Therefore, we conclude that 
\begin{align}
    (2-u) \times 2^{4+e_0}\le 2^{p-\frak{e}_z+k} \otimes \Omega_1 \le \Omega.  \label{eq:omega}
\end{align}
 By \cref{lem:exact}, $-2^{p+k } \oplus (2-\frak{a}_z-\tilde{u}_0) \times 2^{p+k}$ and $-   2^{p+k } \oplus (2-\frak{a}_z-u_0) \times 2^{p+k}$ are exact. Together with \cref{eq:case2pqr_eq1} and \cref{eq:omega}, we have
\begin{align}
&  2^{\frak{e}_z} \le g_1(x) \le \Omega_1, \nonumber \\
&- (2 - u) \times 2^{4+e_0 } \le -2^{p-\frak{e}_z+k} \otimes g_1(x) \le   -  2^{p+k }, \label{eq:case2pqr_eq2} \\
&- (2 - u) \times 2^{4+e_0 }  \le  \zeta_{1,1}(x) \le   -(\frak{a}_z-1 +\tilde{u}_0) \times  2^{p+k} < 0, \label{eq:case2pqr_eq3} \\
&- (2 - u) \times 2^{4+e_0 }  \le  \zeta_{1,2}(x) \le   -(\frak{a}_z-1 +{u}_0) \times  2^{+p+k} < 0, \label{eq:case2pqr_eq4}
\end{align}
which leads to $\phi_{\theta_{1,1,z}}(x;\bbF_{p,q})=\phi_{\theta_{1,2,z}}(x;\bbF_{p,q})=0$.

If $   2^{\frak{e}_z} \le x \le (2-2u_0) \times 2^{\frak{e}_z}$, $-x \oplus ((2-u_0) \times 2^{\frak{e}_z})$ is exact by \cref{lem:sterbenz}. 
Therefore, we have $ 2^{p-\frak{e}_z+k} \otimes g_1(x) = (n_1 \times 2^{-p+c_z}) \times 2^{p+k}$ where $n_1 = 1,2, \dots , 2^{p-c_z}-1$ which leads to 
\begin{align}
    2^{p-\frak{e}_z+k} \otimes g_1(x) \le (2-2u_0) \times 2^{-1+p+k} , \; \mu ( 2^{p-\frak{e}_z+k} \otimes g_1(x)) \ge  k. \label{eq:case2pqr_eq5}
\end{align}
 We consider the following cases.

\paragraph{\textbf{Case 1-2-1: $ \frak{a}_z \ne 1 $.}} \\
In this case,  we have  
\begin{align}
    &2^{k} \le (2-\frak{a}_z - \tilde{u}_0 ) \times 2^{p+k},(2-\frak{a}_z - u_0 ) \times 2^{p+k} < 2^{p+k}, \label{eq:case2pqr_eq6} \\
    & \mu( (2-\frak{a}_z - \tilde{u}_0 ) \times 2^{p+k} ) , \mu ( (2-\frak{a}_z - u_0 ) \times 2^{p+k} ). \ge k \nonumber
\end{align}

\paragraph{\textbf{Case 1-2-2: $ \frak{a}_z = 1$.}} \\
In this case, we have 
\begin{align}
    &(2-\frak{a}_z - \tilde{u}_0 ) \times 2^{p+k} = (2-u_0) \times 2^{-1+p+k},\label{eq:case2pqr_eq7} \\
    &(2-\frak{a}_z - u_0 ) \times 2^{p+k} = (2-2u_0) \times 2^{-1+p+k}. \label{eq:case2pqr_eq8}
\end{align}
In both cases, since $\mu ( 2^{p-\frak{e}_z+k} \otimes g_1(x)) \ge  k $, by \cref{lem:exact}, all operations in $\phi_{\theta_{1,1,z}}(x;\bbF_{p,q})$ and $\phi_{\theta_{1,2,z}}(x;\bbF_{p,q})$ are exact. Hence 
\begin{align}
    \phi_{\theta_{1,1,z}}(x;\bbF_{p,q}) &= 2^{\tilde{c}-c_z-k} \times \relu\big(2^{p-\frak{e}_z+k} \times x - ( ( \frak{a}_z - u_0 + \tilde{u}_0) \times 2^{p+k} ) \big), \label{eq:case2pqr_eq9}  \\ 
    \phi_{\theta_{1,2,z}}(x;\bbF_{p,q}) &= - 2^{\tilde{c}-c_z-k} \times \relu\big(2^{p-\frak{e}_z+k} \times x - ( \frak{a}_z  \times 2^{p+k} ) \big), \label{eq:case2pqr_eq10}
\end{align}
with $  \phi_{\theta_{1,1,z}}(x;\bbF_{p,q})  ,  -\phi_{\theta_{1,2,z}}(x;\bbF_{p,q})  \in \{ 0 \} \cup [2^p]$.
Therefore, we have
\begin{align*}
    f_{1}(x;\bbF_{p,q}) = \begin{cases}
     0 \; &\text{if} \; (2-u_0) \times  2^{-1+\frak{e}_z} \le x \le z^{-}, \\ 
    1 \; &\text{if} \; z \le x \le (2-2u_0) \times 2^{\frak{e}_z}.
    \end{cases}
\end{align*}   
 If $(2-u_0) \times 2^{\frak{e}_z} \le x \le \Omega$, we have $g_1(x) = 0$. Hence 
 \begin{align}
     &\phi_{\theta_{1,1,z}}(x;\bbF_{p,q}) = 2^{\tilde{c}-c_z-k} \otimes ( ( 2 -\frak{a}_z - \tilde{u}_0 ) \times 2^{p+k } ), \label{eq:case2pqr_eq11} \\
     & \phi_{\theta_{1,2,z}}(x;\bbF_{p,q}) = - 2^{\tilde{c}-c_z-k} \otimes ( ( 2 -\frak{a}_z - u_0 ) \times 2^{p+k } ) \label{eq:case2pqr_eq12} \\
     &f_{1}(x;\bbF_{p,q})  = \phi_{\theta_{1,1,z}}(x;\bbF_{p,q}) \oplus \phi_{\theta_{1,2,z}}(x;\bbF_{p,q})   =   2^{p+\tilde{c}-c_z} \times ( u_0 - \tilde{u}_0) = 1.  \label{eq:case2pqr_eq13}
 \end{align}
 with $ \phi_{\theta_{1,1,z}}(x;\bbF_{p,q}), -\phi_{\theta_{1,2,z}}(x;\bbF_{p,q})\in \{0 \} \cup [2^p]$.

\textbf{Representability test for Case 1-2.} \\
(1)  We check that the parameters in \cref{eq:param_psi_1_fpq} are representable by $\bbF_{p,q}$. \\
Since 
\begin{align*}
    &e_{min} < \tilde{c}-c_z-k \le 1-(3-p-e_0) = -2+p+e_0 \\
    &= -2 + (2^{q-2}+2) +  (2^{q-2}-1)  \le  2^{q-1} - 1  = e_{max}, \\
    &  e_{min} < 3-e_0  \le p-\frak{e}_z+k = p - \frak{e}_z +(3-p-e_0-c_z) \\
    & = (3-e_0) + (-\frak{e}_z-c_z) \le 3-e_0-e_{min} \\
    &= 3-(2^{q-2}-1)+(2^{q-1}-2) = 2^{q-2}+2 < e_{max}, 
\end{align*}
$2^{\tilde{c}-c_z-k}$ and $2^{p-\frak{e}_z+k}$ are representable by $\bbF_{p,q}$.   
Since 
\begin{align*}
    &  e_{min} =- 2^{q-1} + 2 \le 4-p-2^{q-2} \le 3-p-e_0  \le p+k = 3-e_0-c_z < e_{max}, \\ 
    & e_{min} < 3-p-e_0 = (-p+c_z) + ( p +k ) \le  \mu(t)  \le \frak{e}_{t} \le p+k < e_{max}, \\
    & \frak{e}_{t} - \mu( t ) \le (p+k) - ( -(p-c_z) + (p+k) ) = p-c_z,  
\end{align*}
where $\frak{a}_z \ne 2-u_0$ for $t = (2 - \frak{a}_z -\tilde{u}_0 ) \times 2^{p+k}$, $(2 - \frak{a}_z -u_0 ) \times 2^{p+k}$,  they are representable by $\bbF_{p,q}$. 
If $\frak{a}_z = 2-u_0$, it is obvious.
Apparently, $(2-u_0) \times 2^{\frak{e}_z}$ is representable by $\bbF_{p,q}$. \\
(2) We check that the intermediate numbers in \cref{eq:nums_psi_1_fpq} are representable  by $\bbF_{p,q}$, which is apparent by
\cref{eq:case2pqr_eq1,eq:case2pqr_eq2,eq:case2pqr_eq3,eq:case2pqr_eq4,eq:case2pqr_eq5,eq:case2pqr_eq6,eq:case2pqr_eq7,eq:case2pqr_eq8,eq:case2pqr_eq9,eq:case2pqr_eq10,eq:case2pqr_eq11,eq:case2pqr_eq12,eq:case2pqr_eq13}.
Therefore, we conclude  
\begin{align*}
    f_1(x;z) \begin{cases}
        = 0\; &\text{if} \; - \Omega_1  \le  x \le z^-, \\ 
        = 1  \ \; &\text{if} \; z \le x \le \Omega. \\ 
    \end{cases}
\end{align*}

\paragraph{\textbf{Case 1-3: $0 < \frak{e}_z \le -2-p+e_{max} $.}} \\
In this case, $c_z =0$, $u_0 = u$, and $k = -p+e_0$. 
Let $\Omega_2 = \min\{ \Omega, (2-u) \times 2^{e_{max}-e_0+\frak{e}_z}$. Since 
$$ e_{max}-e_0 + \frak{e}_z \ge e_{max}-e_0 + 1 = 2+e_0,$$
we have $\Omega_2 \ge (2-u) \times 2^{2+e_0}$.

Suppose $ -\Omega_2  \le  x \le (2-u) \times 2^{-1+\frak{e}_z} $. If  $-\Omega_2 \le x < -\min \{\Omega_2, 2^{2+p+\frak{e}_z} \}$, we have $(2-u) \times 2^{\frak{e}_z} \ominus x = -x $ by Lemma \ref{lem:ignore}.   If $ -\min \{\Omega_2, 2^{2+p+\frak{e}_z} \}  < x \le (2-u) \times 2^{-1+\frak{e}_z}$, we have $ (2-u) \times 2^{-1+\frak{e}_z} \le  (2-u) \times 2^{\frak{e}_z} \ominus x < 2^{2+p+\frak{e}_z}$. Therefore, we have 
\begin{align}
    & (2-u) \times 2^{-1+\frak{e}_z} \le g_1(x) \le \Omega_2, \label{eq:case3pqr_eq1} 
\end{align}
If $\Omega_2 =(2-u) \times 2^{e_{max}-e_0+\frak{e}_z} $, we have $  2^{p-\frak{e}_z+k} \otimes \Omega_2 =  \Omega$. If $\Omega_2 = \Omega$, we have $ (2-u) \times 2^{  e_0 } \le  (2-u) \times 2^{ - \frak{e}_z + e_0   + e_{max} } =  2^{p-\frak{e}_z+k} \otimes \Omega_2$. Therefore, we have 
\begin{align}
    (2-u) \times 2^{e_0}\le 2^{p-\frak{e}_z+k} \otimes \Omega_2.  \label{eq:omega_2}
\end{align}
Hence, \cref{eq:case3pqr_eq1} and \cref{eq:omega_2}   lead to 
\begin{align}
    - (2 - u) \times 2^{e_0 } \le -2^{p-\frak{e}_z+k} \otimes g_1(x) \le - (2-u) \times  2^{-1+p+k }. \label{eq:case3pqr_eq2} 
\end{align}
 By \cref{lem:exact}, $-(2-u) \times  2^{-1+p+k } \oplus (2-\frak{a}_z-\tilde{u}_0) \times 2^{p+k}$ and $-(2-u) \times  2^{-1+p+k } \oplus (2-\frak{a}_z-u) \times 2^{p+k}$ are exact. Together with \cref{eq:case3pqr_eq1} and \cref{eq:omega_2}, we have
\begin{align}
&- (2 - u) \times 2^{e_0 }  \le  \zeta_{1,1}(x) \le   -(2(\frak{a}_z-1) -u+2\tilde{u}) \times  2^{-1+p+k} \le 0, \label{eq:case3pqr_eq3} \\
&- (2 - u) \times 2^{e_0 }  \le  \zeta_{1,2}(x) \le   -(2(\frak{a}_z-1) +u) \times  2^{-1+p+k} < 0, \label{eq:case3pqr_eq4}
\end{align}
 which leads to $\phi_{\theta_{1,1,z}}(x;\bbF_{p,q})=\phi_{\theta_{1,2,z}}(x;\bbF_{p,q})=0$. 

If $  2^{\frak{e}_z} \le x \le (2-2u) \times 2^{\frak{e}_z}$, by similar arguments in \textbf{Case 1-2} and $  e_{min} <  -1 + p + k  , p + k < e_{max}$, we have the same results as those in 
\cref{eq:case2pqr_eq5,eq:case2pqr_eq6,eq:case2pqr_eq7,eq:case2pqr_eq8} from \textbf{Case 1-2}. Since $\mu( 2^{p-\frak{e}_z+k} )  \otimes g_1(x) \ge k$, by \cref{lem:exact} all operations in $\phi_{\theta_{1,1,z}}(x;\bbF_{p,q})$ and $\phi_{\theta_{1,2,z}}(x;\bbF_{p,q})$ are exact. Hence 
\begin{align}
    \phi_{\theta_{1,1,z}}(x;\bbF_{p,q}) &= 2^{\tilde{c}-c_z-k} \times \relu\big(2^{p-\frak{e}_z+k} \times x - ( ( \frak{a}_z - u + \tilde{u}) \times 2^{p+k} ) \big), \label{eq:case3pqr_eq5}  \\ 
    \phi_{\theta_{1,2,z}}(x;\bbF_{p,q}) &= -2^{\tilde{c}-c_z-k} \times \relu\big(2^{p-\frak{e}_z+k} \times x - ( \frak{a}_z  \times 2^{p+k} ) \big), \label{eq:case3pqr_eq6}
\end{align}
with $  \phi_{\theta_{1,1,z}}(x;\bbF_{p,q})  ,  -\phi_{\theta_{1,2,z}}(x;\bbF_{p,q})  \in [2^p]$.
Therefore,
\begin{align*}
    f_{1}(x;\bbF_{p,q}) = \begin{cases}
     0 \; &\text{if} \; (2-u) \times  2^{-1+\frak{e}_z} \le x \le z^{-}, \\ 
    1 \; &\text{if} \; z \le x \le (2-2u) \times 2^{\frak{e}_z}.
    \end{cases}
\end{align*}
 If $(2-u) \times 2^{\frak{e}_z} \le x \le \Omega$, we have $g_1(x) = 0$. Hence 
 \begin{align}
     &\phi_{\theta_{1,1,z}}(x;\bbF_{p,q}) = 2^{\tilde{c}-c_z-k} \otimes ( ( 2 -\frak{a}_z - \tilde{u}_0 ) \times 2^{p+k } ) \label{eq:case3pqr_eq7} \\
     & \phi_{\theta_{1,2,z}}(x;\bbF_{p,q}) =  - 2^{\tilde{c}-c_z-k} \otimes ( ( 2 -\frak{a}_z - u_0 ) \times 2^{p+k } ) \label{eq:case3pqr_eq8} \\
     &f_{1}(x;\bbF_{p,q}) =  \phi_{\theta_{1,1,z}}(x;\bbF_{p,q}) \oplus \phi_{\theta_{1,2,z}}(x;\bbF_{p,q}) =   2^{p+\tilde{c}-c_z} \times ( u_0 - \tilde{u}_0) = 1,  \label{eq:case3pqr_eq9}
 \end{align}
 with $ \phi_{\theta_{1,1,z}}(x;\bbF_{p,q})  , - \phi_{\theta_{1,2,z}}(x;\bbF_{p,q})  \in \{0 \} \cup [2^p]$.

\textbf{Representability test for Case 1-3.}
(1)  We check that the parameters in \cref{eq:param_psi_1_fpq} are representable by $\bbF_{p,q}$. \\
Since 
\begin{align*}
    &  e_{min} \le p -e_0 \le \tilde{c}-c_z-k \le 1-(-p+e_0) =1+p-e_0 < e_{max}, \\
    & -p + e_{min}  \le 1-p -e_{max} = p - ( -2-p+e_{max})  +(-p+e_0)  \\
    &\le p-\frak{e}_z+k = p - \frak{e}_z +(-p+e_0) = e_0 - \frak{e}_z  < e_{max}, \\
    &e_{min} < p+k = p+(-p+e_0) = e_0  < e_{max},
\end{align*}
$2^{\tilde{c}-c_z-k}$, $2^{p-\frak{e}_z+k}$, $(2-\frak{a}_z-\tilde{u}_0) \times 2^{p+k}$, and $(2-\frak{a}_z-u_0) \times 2^{p+k}$ are  representable by $\bbF_{p,q}$. Apparently, $(2-u_0) \times 2^{\frak{e}_z}$ is representable by $\bbF_{p,q}$. \\

(2) We check that the intermediate numbers in \cref{eq:nums_psi_1_fpq} are representable  by $\bbF_{p,q}$, which is apparent by 

\cref{eq:case3pqr_eq1,eq:case3pqr_eq2,eq:case3pqr_eq3,eq:case3pqr_eq4,eq:case3pqr_eq5,eq:case3pqr_eq6,eq:case3pqr_eq7,eq:case3pqr_eq8,eq:case3pqr_eq9}.
Hence, we conclude 
\begin{align}
    f_1(x;z) \begin{cases}
        = 0\; &\text{if} \; - \Omega_2   \le  x \le z^- \\ 
        = 1  \ \; &\text{if} \; z \le x \le \Omega \\ 
    \end{cases}
\end{align}
Finally, combining \textbf{Case 1-1}, \textbf{Case 1-2}, and \textbf{Case 1-3},
since $\Omega_1 \ge (2-u) \times 2^{-2 + e_0  }$  and $\Omega_2 \ge (2-u) \times 2^{2+e_{0}}$, we conclude that 
\begin{align}
    f_1(x;z) \begin{cases}
        = 0 \; &\text{if} \; -(2 - u) \times 2^{ -2+ e_0  }  \le  x  \le z^- \\ 
        = 1  \ \; &\text{if} \; z \le x \le \Omega, \\ 
    \end{cases}
\end{align}
for all $z$ satisfying $0 <z \le (2-u) \times 2^{-2-p+e_{max}}$.

To construct  $\bold{1}[x \le z]$, we define a three-layer $\relu$ network $f_{2}(x;\bbF_{p,q})$ as follows:
\begin{align}
    f_{2}(x;\bbF_{p,q}) = \psi_{\theta_{2,1,z}}(x;\bbF_{p,q}) \oplus \psi_{\theta_{2,2,z}}(x;\bbF_{p,q}), \label{eq:f2fpqr}
\end{align}
where 
\begin{align}
     & \psi_{\theta_{2,1,z}}(x;\bbF_{p,q}) = \phi_{\theta_{2,1,z}}(x;\bbF_{p,q}), \; \psi_{\theta_{2,2,z}}(x;\bbF_{p,q}) = \phi_{\theta_{2,2,z}}(x;\bbF_{p,q}) \nonumber \\ 
     & \phi_{\theta_{2,1},z}(x;\bbF_{p,q}) = 2^{ -c_z -k} \otimes  \relu \big( (-2^{p - \frak{e}_z + k} \otimes \relu( g_2(x) ) ) \oplus  (\frak{a}_z -1 +u_0) \times 2^{p+k}   \big),  \nonumber \\ 
     & \phi_{\theta_{2,2},z}(x;\bbF_{p,q}) =  - 2^{- c_z -k} \otimes  \relu \big( (-2^{p-\frak{e}_z+k} \otimes \relu(g_2(x) ) )  \oplus  (\frak{a}_z -1 ) \times 2^{p+k}   \big). \nonumber  
\end{align}
Here, $g_2(x)$ is defined as 
$$ g_2(x) =  x \ominus 2^{\frak{e}_z}. $$
Let 
\begin{align*}
     &\zeta_{2,1}(x):= (-2^{p - \frak{e}_z + k} \otimes \relu( g_2(x) ) ) \oplus (\frak{a}_z -1 +u_0) \times 2^{p+k} ,   \\
     &\zeta_{2,2}(x): = ( -2^{p-\frak{e}_z+k} \otimes \relu(g_2(x) ) ) \oplus (\frak{a}_z -1 ) \times 2^{p+k}, \\
     &\phi_{\theta_{2,1,z}}(x;\bbF_{p,q}) = 2^{ -c_z -k} \otimes \relu(\zeta_{2,1}(x)),\\
    &\phi_{\theta_{2,2,z}}(x;\bbF_{p,q}) = -2^{ -c_z -k} \otimes  \relu(\zeta_{2,2}(x)).  
\end{align*}
Note that (1) the following are parameters in $\phi_{\theta_{2,1,z}}(x;\bbF_{p,q})$ and $\phi_{\theta_{2,2,z}}(x;\bbF_{p,q})$:
\begin{align}
    2^{-c_z-k},2^{p-\frak{e}_z+k},  2^{\frak{e}_z} ,(\frak{a}_z -1 +u_0) \times 2^{p+k}, (\frak{a}_z -1 ) \times 2^{p+k}
    \label{eq:param_psi_2_fpq}
\end{align}
The following are (2) the numbers occurring during the intermediate calculations in $\phi_{\theta_{2,1,z}}(x;\bbF_{p,q})$ and $\phi_{\theta_{2,2,z}}(x;\bbF_{p,q})$:
\begin{align}
    g_2(x), -2^{p-\frak{e}_z+k} \otimes g_2(x) , \zeta_{2,1}(x), \zeta_{2,2}(x), \phi_{\theta_{2,1,z}}(x;\bbF_{p,q}), \phi_{\theta_{2,2,z}}(x;\bbF_{p,q}).  \label{eq:nums_psi_2_fpq}
\end{align}
We consider the following cases.

\paragraph{\textbf{Case 2-1: $ -p + e_{min} \le \frak{e}_z \le 0 $. }} \\
In this case, we have $k = 3-p-e_0-c_z$.  

If $-\Omega \le x \le 2^{\frak{e}_z}$, we have $g_2(x) = 0$. Hence we have
\begin{align}
     &\zeta_{2,1}(x)=  (\frak{a}_z -1 +u_0) \times 2^{p+k} , \; \zeta_{2,2}(x) =  (\frak{a}_z -1 ) \times 2^{p+k}, \label{eq:case21pqr_eq1} \\
     &\phi_{\theta_{2,1,z}}(x;\bbF_{p,q}) = 2^{ -c_z -k} \otimes \relu((\frak{a}_z -1 +u_0) \times 2^{p+k}) = (\frak{a}_z -1 + u_0) \times 2^{p-c_z}, \label{eq:case21pqr_eq2} \\
    &\phi_{\theta_{2,2,z}}(x;\bbF_{p,q}) = -2^{ -c_z -k} \otimes  \relu((\frak{a}_z -1 ) \times 2^{p+k}) = - (\frak{a}_z -1 ) \times 2^{p-c_z} , \label{eq:case21pqr_eq3} \\
 &f_{2}(x;\bbF_{p,q})  = \phi_{\theta_{2,1,z}}(x;\bbF_{p,q}) \oplus \phi_{\theta_{2,2,z}}(x;\bbF_{p,q})  =   2^{ -c_z -k} \times  u_0  = 1, \label{eq:case21pqr_eq4}
\end{align}
 with $  \phi_{\theta_{2,1,z}}(x;\bbF_{p,q})  ,  -\phi_{\theta_{2,2,z}}(x;\bbF_{p,q})  \in \{0 \} \cup [2^p]$.

If $ (1+u_0) \times 2^{\frak{e}_z} \le x \le (2-u_0) \times 2^{\frak{e}_z}$,  $x \ominus 2^{\frak{e}_z}$ is exact by \cref{lem:sterbenz}. Then $g_2(x)=x - 2^{\frak{e}_z} = n_2 \times 2^{-p+\frak{e}_z+c_z}$ where $n_2 \in [2^{p-c_z}-1]$, which leads to 
\begin{align}
    &0 < 2^{p-\frak{e}_z+k} \otimes \relu(g_2(x)) \le (2-u_0) \times 2^{-1+p+k}, \label{eq:case21pqr_eq5}\\
    &\mu(2^{p-\frak{e}_z+k} \otimes \relu(g_2(x))) \ge k. \label{eq:case21pqr_eq6}
\end{align}
Since 
\begin{align}
   0 \le  (\frak{a}_z -1 ) \times 2^{p+k} , (\frak{a}_z -1 +u_0) \times 2^{p+k} < 2^{1+p+k}, \label{eq:case21pqr_eq7}
\end{align}
by \cref{lem:exact}, all operations in $\phi_{\theta_{2,1,z}}(x;\bbF_{p,q})$ and $\phi_{\theta_{2,2,z}}(x;\bbF_{p,q})$ are exact. Hence, 
 \begin{align}
     &\phi_{\theta_{2,1,z}}(x;\bbF_{p,q}) = 2^{-c_z-k} \otimes \relu \big( -2^{p-\frak{e}_z+k} \times x + (\frak{a}_z + u_0) \times 2^{p+k} \big), \label{eq:case21pqr_eq8} \\
     &\phi_{\theta_{2,2,z}}(x;\bbF_{p,q}) = - 2^{-c_z-k} \otimes \relu \big( -2^{p-\frak{e}_z+k} \times x + \frak{a}_z  \times 2^{p+k} \big), \label{eq:case21pqr_eq9} 
 \end{align}
with $\phi_{\theta_{2,1,z}}(x;\bbF_{p,q}),-\phi_{\theta_{2,2,z}}(x;\bbF_{p,q}) \in \{0\} \cup [2^p]$. Therefore, we have 
\begin{align*}
    f_{2}(x;\bbF_{p,q}) = \begin{cases}
     1 \; &\text{if} \; (1+u_0) \times  2^{\frak{e}_z} \le x \le z, \\ 
    0 \; &\text{if} \; z^+ \le x \le (2-u_0) \times 2^{\frak{e}_z}.
    \end{cases}
\end{align*} 
Suppose $ 2^{1+\frak{e}_z}  \le x \le \Omega_1$.
 If $ 2^{1+\frak{e}_z} \le x <  2^{2+p+\frak{e}_z}$,  we have $ 2^{\frak{e}_z} \le x \ominus 2^{\frak{e}_z} < 2^{2+p+\frak{e}_z} $. 
If  $2^{2+p+\frak{e}_z}  \le x \le \Omega_1$, we have $x \ominus 2^{\frak{e}_z} = x $ by Lemma \ref{lem:ignore}. Therefore, we have 
\begin{align}
    &  2^{\frak{e}_z} \le g_2(x) \le \Omega_1, \label{eq:case21pqr_eq10}
\end{align}
By \cref{lem:exact}, $-2^{p+k} \oplus (\frak{a}_z -1 +u_0) \times 2^{p+k}$ and $-2^{p+k} \oplus (\frak{a}_z -1) \times 2^{p+k}$ are exact. Together with \cref{eq:omega} and \cref{eq:case21pqr_eq11}, we have 
\begin{align}
    &  2^{\frak{e}_z} \le g_2(x) \le \Omega_1, \nonumber \\
    & -(2-u) \times  2^{4+e_0} \le -2^{p-\frak{e}_z +k} \otimes g_2(x) \le - 2^{p+k} , \label{eq:case21pqr_eq11} \\
    & -(2-u) \times  2^{4+e_0} \le \zeta_{2,1}(x) \le - (2-\frak{a}_z - u_0) \times 2^{p+k} \le 0 , \label{eq:case21pqr_eq12} \\
    & -(2-u) \times  2^{4+e_0} \le \zeta_{2,2}(x) \le - (2-\frak{a}_z ) \times 2^{p+k} < 0 , \label{eq:case21pqr_eq13} 
\end{align}
which leads to $\phi_{\theta_{2,1,z}}(x;\bbF_{p,q})=\phi_{\theta_{2,2,z}}(x;\bbF_{p,q})=0$.

\textbf{Representability test for Case 2-1.} \\
(1)  We check that the parameters in \cref{eq:param_psi_2_fpq} are representable by $\bbF_{p,q}$. \\
Since $2^{-c_z-k},2^{p-\frak{e}_z}$ are shown to be representable by $\bbF_{p,q}$ in \textbf{Case 1-1} and \textbf{Case 1-2} and $2^{\frak{e}_z}$ is obvious, we only need to check $(\frak{a}_z -1 +u_0) \times 2^{p+k}$ and $(\frak{a}_z -1) \times 2^{p+k}$. 
Since 
\begin{align*}
    &  e_{min} =- 2^{q-1} + 2 \le 4-p-2^{q-2} \le 3-p-e_0  \le p+k = 3-e_0-c_z < e_{max}, \\ 
    & e_{min} < 3-p-e_0 = (-p+c_z) + ( p +k ) \le  \mu(t)  \le \frak{e}_{t} \le p+k < e_{max}, \\
    & \frak{e}_{t} - \mu( t ) \le (p+k) - ( -(p-c_z) + (p+k) ) = p-c_z,  
\end{align*}
where $\frak{a}_x \ne 1$ for $t =(\frak{a}_z -1 +u_0) \times 2^{p+k} ,(\frak{a}_z -1) \times 2^{p+k}$, they are representable by $\bbF_{p,q}$. If $\frak{a}_x = 1$, it is obvious. 

(2) We check that the intermediate numbers in \cref{eq:nums_psi_2_fpq} are representable  by $\bbF_{p,q}$, which is apparent by \cref{eq:case21pqr_eq1,eq:case21pqr_eq2,eq:case21pqr_eq3,eq:case21pqr_eq4,eq:case21pqr_eq5,eq:case21pqr_eq6,eq:case21pqr_eq7,eq:case21pqr_eq8,eq:case21pqr_eq9,eq:case21pqr_eq10,eq:case21pqr_eq11,eq:case21pqr_eq12,eq:case21pqr_eq13}.
Therefore, we conclude  
\begin{align}
    f_2(x;\bbF_{p,q}) \begin{cases}
        =  1 \; &\text{if} \;  - \Omega \le x \le z, \\ 
        =  0  \ \; &\text{if} \; z^+ \le x \le \Omega_1.
    \end{cases}
\end{align}

\paragraph{\textbf{Case 2-2: $0 < \frak{e}_z \le -2-p+e_{max} $. }} \\
In this case, $c_z = 0 , u_0 = u$, and $k=-p+e_0$. 

If $ -\Omega  \le x \le 2^{\frak{e}_z}$, we have $g_2(x) = 0$. Similar to \textbf{Case 2-1}, we have
\begin{align}
     &\zeta_{2,1}(x)=  (\frak{a}_z -1 +u) \times 2^{p+k} , \; \zeta_{2,2}(x) =  (\frak{a}_z -1 ) \times 2^{p+k}, \label{eq:case22pqr_eq1} \\
     &\phi_{\theta_{2,1,z}}(x;\bbF_{p,q}) = 2^{ -c_z -k} \otimes \relu((\frak{a}_z -1 +u) \times 2^{p+k}) = (\frak{a}_z -1 + u) \times 2^{p-c_z}, \label{eq:case22pqr_eq2} \\
    &\phi_{\theta_{2,2,z}}(x;\bbF_{p,q}) = -2^{ -c_z -k} \otimes  \relu((\frak{a}_z -1 ) \times 2^{p+k}) = - (\frak{a}_z -1 ) \times 2^{p-c_z} , \label{eq:case22pqr_eq3} \\
 &f_{2}(x;\bbF_{p,q})  = \phi_{\theta_{2,1,z}}(x;\bbF_{p,q}) \oplus \phi_{\theta_{2,2,z}}(x;\bbF_{p,q})  =   2^{ -c_z -k} \times  u  = 1, \label{eq:case22pqr_eq4}
\end{align}
 with $  \phi_{\theta_{2,1,z}}(x;\bbF_{p,q})  , - \phi_{\theta_{2,2,z}}(x;\bbF_{p,q})  \in \{0 \} \cup [2^p]$.

 If $ (1+u) \times 2^{\frak{e}_z} \le x \le (2-u) \times 2^{\frak{e}_z}$, by similar arguments in \textbf{Case 2-1} with $e_{min} \le p+k \le e_{max}$, we have the same results as those in 
 \cref{eq:case21pqr_eq5,eq:case21pqr_eq6,eq:case21pqr_eq7} from \textbf{Case 2-1}. By \cref{lem:exact}, all operations in $\phi_{\theta_{2,1,z}}(x;\bbF_{p,q})$ and $\phi_{\theta_{2,2,z}}(x;\bbF_{p,q})$ are exact. 
Hence, 
 \begin{align}
     &\phi_{\theta_{2,1,z}}(x;\bbF_{p,q}) = 2^{-c_z-k} \otimes \relu \big( -2^{p-\frak{e}_z+k} \times x + (\frak{a}_z + u) \times 2^{p+k} \big), \label{eq:case22pqr_eq5} \\
     &\phi_{\theta_{2,2,z}}(x;\bbF_{p,q}) = - 2^{-c_z-k} \otimes \relu \big( -2^{p-\frak{e}_z+k} \times x + \frak{a}_z  \times 2^{p+k} \big), \label{eq:case22pqr_eq6} 
 \end{align}
with $|\phi_{\theta_{2,1,z}}(x;\bbF_{p,q})|,|\phi_{\theta_{2,2,z}}(x;\bbF_{p,q})| \in \{0\} \cup [2^p]$. Therefore, we have 
\begin{align*}
    f_{2}(x;\bbF_{p,q}) = \begin{cases}
     1 \; &\text{if} \; (1+u) \times  2^{\frak{e}_z} \le x \le z, \\ 
    0 \; &\text{if} \; z^+ \le x \le (2-u) \times 2^{\frak{e}_z}.
    \end{cases}
\end{align*}

 \begin{align}
     &\phi_{\theta_{1,1,z}}(x;\bbF_{p,q}) = 2^{\tilde{c}-c_z-k} \otimes ( ( 2 -\frak{a}_z - \tilde{u}_0 ) \times 2^{p+k } ) \label{eq:case22pqr_eq7} \\
     & \phi_{\theta_{1,2,z}}(x;\bbF_{p,q}) =  - 2^{\tilde{c}-c_z-k} \otimes ( ( 2 -\frak{a}_z - u ) \times 2^{p+k } ) \label{eq:case22pqr_eq8} \\
     &f_{1}(x;\bbF_{p,q}) =  \phi_{\theta_{1,1,z}}(x;\bbF_{p,q}) \oplus \phi_{\theta_{1,2,z}}(x;\bbF_{p,q}) =   2^{p+\tilde{c}-c_z} \times ( u - \tilde{u}_0) = 1,  \label{eq:case22pqr_eq9}
 \end{align}
 with $  \phi_{\theta_{1,1,z}}(x;\bbF_{p,q})  ,- \phi_{\theta_{1,2,z}}(x;\bbF_{p,q})  \in \{0 \} \cup [2^p]$.

Suppose $ 2^{1+\frak{e}_z}  \le x \le \Omega_2$.
If $ 2^{1+\frak{e}_z} \le x <  \min \{ \Omega_2,2^{2+p+\frak{e}_z} \}$, we have $ 2^{\frak{e}_z} \le x \ominus 2^{\frak{e}_z} < \min \{ \Omega_2,2^{2+p+\frak{e}_z} \}   $. 
If  $\min \{ \Omega_2,2^{2+p+\frak{e}_z} \} \le x \le \Omega_2$, we have $x \ominus 2^{\frak{e}_z} = x $ by Lemma \ref{lem:ignore}. Therefore, we have 
\begin{align}
    &  2^{\frak{e}_z} \le g_2(x) \le \Omega_2, \label{eq:case22pqr_eq10}
\end{align}
By \cref{lem:exact}, $-2^{p+k} \oplus (\frak{a}_z -1 +u) \times 2^{p+k}$ and $-2^{p+k} \oplus (\frak{a}_z -1) \times 2^{p+k}$ are exact. Together with \cref{eq:omega_2} and \cref{eq:case22pqr_eq10}, we have 
\begin{align}
    &  2^{\frak{e}_z} \le g_2(x) \le \Omega_2, \nonumber \\
    & -(2-u) \times  2^{e_0} \le -2^{p-\frak{e}_z +k} \otimes g_2(x) \le - 2^{p+k} , \label{eq:case22pqr_eq11} \\
    & -(2-u) \times  2^{e_0} \le \zeta_{2,1}(x) \le - (2-\frak{a}_z - u) \times 2^{p+k} \le 0 , \label{eq:case22pqr_eq12} \\
    & -(2-u) \times  2^{e_0} \le \zeta_{2,2}(x) \le - (2-\frak{a}_z ) \times 2^{p+k} < 0 , \label{eq:case22pqr_eq13} 
\end{align}
which leads to $\phi_{\theta_{2,1,z}}(x;\bbF_{p,q})=\phi_{\theta_{2,2,z}}(x;\bbF_{p,q})=0$.

\textbf{Representability test for Case 2-2.} \\
(1)  We check that the parameters in \cref{eq:param_psi_2_fpq} are representable by $\bbF_{p,q}$. \\
Since $2^{-c_z-k},2^{p-\frak{e}_z}$ are shown to be representable by $\bbF_{p,q}$ in \textbf{Case 1-3} and $2^{\frak{e}_z}$ is obvious, we only need to check $(\frak{a}_z -1 +u) \times 2^{p+k}$ and $(\frak{a}_z -1) \times 2^{p+k}$. 
Since 
\begin{align*}
    &  e_{min} < p+k = e_0 < e_{max}, \\ 
    & e_{min} < -p + e_0 = (-p+c_z) + ( p +k ) \le  \mu(t)  \le \frak{e}_{t} \le p+k < e_{max}, \\
    & \frak{e}_{t} - \mu( t ) \le (p+k) - ( -(p-c_z) + (p+k) ) = p-c_z,  
\end{align*}
where $\frak{a}_x \ne 1$ for $t =(\frak{a}_z -1 +u) \times 2^{p+k} ,(\frak{a}_z -1) \times 2^{p+k}$, they are representable by $\bbF_{p,q}$. If $\frak{a}_x = 1$, it is obvious. 

(2) We check that the intermediate numbers in \cref{eq:nums_psi_2_fpq} are representable  by $\bbF_{p,q}$, which is apparent by \cref{eq:case22pqr_eq1,eq:case22pqr_eq2,eq:case22pqr_eq3,eq:case22pqr_eq4,eq:case22pqr_eq5,eq:case22pqr_eq6,eq:case22pqr_eq7,eq:case22pqr_eq8,eq:case22pqr_eq9,eq:case22pqr_eq10,eq:case22pqr_eq11,eq:case22pqr_eq12,eq:case22pqr_eq13}.
Therefore, we conclude  
\begin{align}
    f_2(x;\bbF_{p,q}) \begin{cases}
        =  1 \; &\text{if} \;  - \Omega \le x \le z, \\ 
        =  0  \ \; &\text{if} \; z^+ \le x \le \Omega_2.
    \end{cases}
\end{align}
Finally, combining \textbf{Case 2-1} and \textbf{Case 2-2},  since $\Omega_1 \ge (2-u) \times 2^{-2 + e_0  }$  and $\Omega_2 \ge (2-u) \times 2^{2+e_{0}}$, we conclude that
\begin{align}
    f_2(x;\bbF_{p,q}) \begin{cases}
        =  1 \; &\text{if} \;  - \Omega \le x \le z \\ 
        =  0  \ \; &\text{if} \; z^+ \le x \le (2- u) \times 2^{-2+e_0},
    \end{cases}
\end{align}
for all $z$ satisfying $0 < z \le (2-u) \times 2^{-2-p+e_{max}}$.

Now, we consider $z < 0$, we define
\begin{align*}
&f_1(x;\bbF_{p,q})=\psi_{1,1,z}(x;\bbF_{p,q}) \oplus \psi_{1,2,z}(x;\bbF_{p,q}), \; f_2(x;\bbF_{p,q})=\psi_{2,1,z}(x;\bbF_{p,q}) \oplus \psi_{2,2,z}(x;\bbF_{p,q}), \\
&\psi_{1,1,z}(x;\bbF_{p,q}) = \phi_{2,1,-z}(-x;\bbF_{p,q}), \;\psi_{1,2,z}(x;\bbF_{p,q}) = \phi_{2,2,-z}(-x;\bbF_{p,q}), \\
&\psi_{2,1,z}(x;\bbF_{p,q}) = \phi_{1,1,-z}(-x;\bbF_{p,q}), \; \psi_{2,2,z}(x;\bbF_{p,q}) = \phi_{1,2,-z}(-x;\bbF_{p,q}).
\end{align*}
Then we have 
\begin{align*}
  &f_1(x;\bbF_{p,q})=\phi_{\theta_{2,1,-z}}(-x;\bbF_{p,q}) \oplus  \phi_{\theta_{2,2,-z}}(-x;\bbF_{p,q}) =\begin{cases}
         0 \; &\text{if} \; -(2 - u) \times 2^{ -2+ e_0  }  \le  x  \le z^- \\ 
        1  \ \; &\text{if} \; z \le x \le \Omega, \\ 
    \end{cases}  , \\
&f_2(x;\bbF_{p,q})=\phi_{\theta_{1,1,-z}}(-x;\bbF_{p,q}) \oplus  \phi_{\theta_{1,2,-z}}(-x;\bbF_{p,q}) =  \begin{cases}
          1 \; &\text{if} \;  - \Omega \le x \le z, \\ 
      0  \ \; &\text{if} \; z^+ \le x \le (2- u) \times 2^{-2+e_0},
    \end{cases}
\end{align*}
for all $z$ satisfying $-(2-u) \times 2^{-2-p+e_{max}} \le z  < 0$.

Finally, we have $\psi_{\theta_{i,1,z}}(x;\bbF_{p,q}) . -\psi_{\theta_{i,2,z}}(x;\bbF_{p,q})\in \{0 \} \cup [2^p]$ for all $i\in\{1,2\}$, and
\begin{align*}
    &f_1(x;\bbF_{p,q}) = \begin{cases}
         0 \; &\text{if} \; -(2 - u) \times 2^{ -2+ e_0  }  \le  x  \le z^- \\ 
         1  \ \; &\text{if} \; z \le x \le \Omega, \\ 
    \end{cases} \\
    &f_2(x;\bbF_{p,q}) = \begin{cases}
          1 \; &\text{if} \;  - \Omega \le x \le z, \\ 
          0  \ \; &\text{if} \; z^+ \le x \le (2- u) \times 2^{-2+e_0},
    \end{cases}
\end{align*}
for all $z$ satisfying $0 < |z| \le (2-u) \times 2^{-2-p+e_{max}}$.

  Lastly, it is easy to observe that $\psi_{\theta_{i,j,z}}$ for all $i,j\in\{1,2\}$ and $z\in\bbF_{p,q}\setminus\{0\}$ share the same network architecture of $3$ layers and $5$ parameters. This completes the proof.
\end{proof}

\subsection{Proof of \cref{thm:memstepfpqr-temp}}\label{sec:pfthm:memstepfpqr-temp}
The proof is almost identical to \cref{thm:memstepfp-temp}. 

We define $f_{\theta_{\mathcal{D}}}$ as follows. 
\begin{align*}
f_{\theta_{\mathcal{D}}}(\bfx;\bbF_{p,q})&=\bigoplus_{i=1}^n\left(y_i\otimes f_{\theta_{\bfz_i,\bfz_i}}(\bfx, \bbF_{p,q})\right),\\
    f_{\theta_{\alpha,\beta}}(\bfx, \bbF_{p,q}) &= \indc{ 
    \bigoplus_{i=1}^{2d} \left( g_i(x_{\ceilZ{i/2}} ; \bbF_{p,q}) \right) \ominus 2d 
    } , \; \forall \alpha , \beta \in \bbF_{p,q}^d, \\
    g_{2j-1}(x_j ; \bbF_{p,q}) &= \indc{-x_j+\beta_j \ge 0}, \; g_{2j}(x_j,\bbF_{p,q}) = \indc{x_j \ominus \alpha_j \ge 0} \; \forall j \in [d]. 
\end{align*}
From the definition of $g_i$, for $ i \in [2d]$, one can observe that 
$$g_{2j-1}(x_j;\bbF_{p,q})+g_{2j}(x_j;\bbF_{p,q})=\begin{cases}
    2~&\text{if}~x_j\in[\alpha_j,\beta_j],\\
    1~&\text{if}~x_j\notin[\alpha_j,\beta_j].
\end{cases}$$ %
Since $\bigoplus_{i=1}^{m}g_i(x_{\ceilZ{i/2}};\bbF_{p,q}) \le m  \le 2^{1+p} - 1 $ for $1 \le m \le (2d - 1) $, and $|g_i(x_{\ceilZ{i/2}};\bbF_{p,q})| = 0,1$, by \cref{lem:integerexact}, $\bigoplus_{i=1}^{2d}g_i(x_{\ceilZ{i/2}};\bbF_{p,q})$ is exact.
Therefore, we have $\bigoplus_{i=1}^{2d}g_i(x_{\ceilZ{i/2}};\bbF_{p,q})=2d$ if $\bfx\in\prod_{i=1}^d[\alpha_i,\beta_i]$ and $\bigoplus_{i=1}^{2d}g_i(x_{\ceilZ{i/2}};\bbF_{p,q})<2d$ otherwise, i.e.,
\begin{align*}
    &f_{\theta_{\alpha,\beta}}(\bfx,\bbF_{p,q})=\indc{\bfx\in\prod_{j=1}^d[\alpha_j,\beta_j]}, \\
    &f_{\theta_{\bfz_i,\bfz_i}}(\bfx,\bbF_{p,q})=\indc{\bfx\in\prod_{j=1}^d[\bfz_{i,j},\bfz_{i,j}]} = \indc{\bfx=\bfz_i}.   \\
\end{align*}

Then, $f_{\theta_{\mathcal{D}}}(\bfz_i;\bbF_{p,q})=y_i$ for all $i\in[n]$ and $f$ can be implemented by a three-layer $\step$ network of $6dn+2n$ parameters ($4dn$ parameters for the first layer, $2dn+n$ parameters for the second layer, and $n$ parameters for the last layer).

\subsection{Proof of \cref{thm:univstepfpqr-temp}}\label{sec:pfthm:univstepfpqr-temp}

The proof is almost identical to \cref{thm:univstepfp-temp}. 

If $\omega_{f^\ast}^{-1}(\varepsilon) \ge \eta$, for $\bfx,\bfx' \in \bbF_{p,q}$ with $\| \bfx - \bfx'\|_\infty \le  \delta= \omega_{f^\ast}^{-1}(\varepsilon)$, we have $ | f^*(\bfx)- f^*(\bfx') | \le \varepsilon$.  
If $\eta > \omega_{f^\ast}^{-1}(\varepsilon) $, for $\bfx,\bfx' \in \bbF_{p,q}$ with $\| \bfx - \bfx'\|_\infty \le  \delta < \eta$, we have $ | f^*(\bfx)- f^*(\bfx') | = 0$ since $\bfx = \bfx'$. Hence for  $\bfx,\bfx' \in \bbF_{p,q}$ with $\| \bfx - \bfx'\|_\infty \le  \delta$,  we have
\begin{align}
    | f^*(\bfx)- f^*(\bfx') | \le \varepsilon. \label{eq:modulus_fpq}
\end{align}

Now, for each $i\in[K]$, we define
\begin{align*}
    \alpha_i&=\begin{cases}
        i\delta~&\text{if}~i\in\{0,1,\dots,K-1\},\\
        1~&\text{if}~i=K,
    \end{cases}\\
    \mathcal I_i&=\begin{cases}
        [\alpha_{i-1}^{(\ge,)},\alpha_i^{(<,\bbF_{p,q})}]\cap\bbF_{p,q}~&\text{if}~i\in[K-1],\\
        [\alpha_{K-1}^{(\ge,\bbF_{p,q})},\alpha_K^{(\le,\bbF_{p,q})}]\cap\bbF_{p,q}~&\text{if}~i=K.
    \end{cases}
\end{align*}
Without loss of generality, we assume that $\mathcal I_i\ne\emptyset$ for all $i\in[K]$; otherwise, we remove empty $\mathcal I_j$, decrease $K$, and re-index $\mathcal I_i$ so that $\mathcal I_i$ if non-empty for all $i\in[K]$.
We note that since $0,1\in\bbF_{p,q}$, there is at least one non-empty $\mathcal I_i$ and $K\ge1$. Additionally, unlike in $\bbF_p$, in $\bbF_{p,q}$ $\mathcal{I}_i$ is finite due to the existence of the smallest positive number.
Then, by the above definitions, it holds that $\sup\mathcal I_i-\inf\mathcal I_i\le\delta$, $\mathcal I_{i_1}$ and $\mathcal I_{i_2}$ are disjoint if $i_1\ne i_2$, and $\bigcupdot_{i\in[K]}\mathcal I_i=[0,1]\cap\bbF_{p,q}$.
For each $\boldsymbol{\iota}=(\iota_1,\dots,\iota_d)\in[K]^d$, we also define 
$$\boldsymbol{\gamma}_{\boldsymbol{\iota}}=\argmin_{\bfx\in\mathcal I_{\iota_1}\times\cdots\times\mathcal I_{\iota_d}}|f^*(\bfx)-\round{f^*(\bfx)}|,$$
which is well-defined since $ \mathcal I_{\iota_1}\times\cdots\times\mathcal I_{\iota_d}$ is non-empty and finite.

We are now ready to introduce our $\step$ network construction $f_\theta$:
\begin{align*}
    f_{\theta}(\bold{x} ; \bbF_{p,q} ) =  \bigoplus_{{\boldsymbol{\iota}}\in [K]^d } \round{f^*(\boldsymbol{\gamma}_{\boldsymbol{\iota}})} \otimes \indc{\left(\bigoplus_{j=1}^{2d}h_{{\boldsymbol{\iota}},j}(x_{\ceilZ{j/2}})\right) \ominus d \ge 0}
\end{align*}
where for each $j\in[d]$ and $\boldsymbol{\iota}=(\iota_1,\dots,\iota_d)\in[K]^d$, %
 \begin{align*}
     h_{{\boldsymbol{\iota}},2j-1}(x) &= \begin{cases}
    \indc{ x - \alpha_{j-1}^{(\ge,\bbF_{p,q})} \ge 0  }\; &\text{if} \; \iota_j\in\{2,\dots,K\}, \\
    \indc{ x + \alpha_{0}^{(\ge,\bbF_{p,q})} \ge 0  }\; &\text{if} \; \iota_j=1,
\end{cases}\\
h_{{\boldsymbol{\iota}},2j}(x) &= \begin{cases}
    - \indc{ x - \alpha_{j}^{(\ge,\bbF_{p,q})} \ge 0} \; &\text{if} \; \iota_j\in\{1,\dots,K-1\}, \\
    -\indc{ x  - \alpha_K^{(> , \bbF_{p,q})} \le 0 } \; &\text{if} \; \iota_j=K. 
\end{cases}%
\end{align*}
Since $h_{\boldsymbol{\iota},j}(x)\in\{-1,0,1\}, h_{\boldsymbol{\iota},2j-1}(x)+h_{\boldsymbol{\iota},2j}(x) = \{-1,0,1\} $, we have 
$${\left|\bigoplus_{j=1}^{m}h_{{\boldsymbol{\iota}},j}(x_{\ceilZ{j/2}}) \right| \le d \le 2^p.}$$
for any $1 \le m \le 2d$. 
Therefore, {by \cref{lem:integerexact}}, all operations in the computation of $\bigoplus_{j=1}^{2d}h_{{\boldsymbol{\iota}},j}(x_{\ceilZ{j/2}})$ are exact. 
Furthermore, for each $\bfx \in \bbF_{p,q}^d \cap [0,1]^d$, we have 
\begin{align*}
\bigoplus_{j=1}^{2d}h_{{\boldsymbol{\iota}},j}(x_{\ceilZ{j/2}})\begin{cases}
    =d~&\text{if}~\bfx\in\mathcal I_{\iota_1}\times\cdots\times\mathcal I_{\iota_d},\\
    <d~&\text{if}~\bfx\in([0,1]^d\cap\bbF_{p,q}^d)\setminus(\mathcal I_{\iota_1}\times\cdots\times\mathcal I_{\iota_d}).
\end{cases}
\end{align*}
Since $ \bigcupdot_{\boldsymbol{\iota}\in[K]^d}\mathcal I_{\iota_1}\times\cdots\times\mathcal I_{\iota_d}=\bbF_{p,q}^d \cap [0,1]^d$, we have 
\begin{align}
    f_\theta( \bfx ; \bbF_{p,q} ) =\round{f^*(\boldsymbol{\gamma}_{\boldsymbol{\iota}})}, \; \forall \bfx \in \mathcal I_{\iota_1}\times\cdots\times\mathcal I_{\iota_d}. \nonumber
\end{align}
Hence, for each $\boldsymbol{\iota}\in[K]^d$ and $\bfx \in \mathcal I_{\iota_1}\times\cdots\times\mathcal I_{\iota_d}$,
\begin{align*}
|f_\theta( \bfx ; \bbF_{p,q} ) - f^*(\bold{x}) | &= |\round{f^*(\boldsymbol{\gamma}_{\boldsymbol{\iota}})} - f^*(\boldsymbol{\gamma}_{\boldsymbol{\iota}}) | + |   f^*(\boldsymbol{\gamma}_{\boldsymbol{\iota}}) - f^*(\bold{x}) |\\
&\le |\round{f^*(\bfx)} - f^*(\bfx) | + \varepsilon
\end{align*}
where we use \cref{eq:modulus_fpq} for the above inequality.
Since each $h_{\boldsymbol{\iota},j}$ can be implemented by a $\step$ network of $3$ parameters, 
$\indc{\left(\bigoplus_{j=1}^{2d}h_{{\boldsymbol{\iota}},j}(x_{\ceilZ{j/2}})\right) \ominus d  \ge 0}$ can be implemented using $6d+1$ parameters. 
This implies that our $f_\theta$ can be implemented by a $\step$ network of $3$ layers and $(6d+2)K^d$ parameters.
This completes the proof.

\subsection{Proof of \cref{thm:memrelufpqr-temp}}\label{sec:pfthm:memrelufpqr-temp}

Recall that $\eta = 2^{-p+e_{max}}$ is the smallest positive  number in $\bbF_{p,q}$. 
For each $i\in[n]$, we also define $h_{i,1},\dots,h_{i,4d}$ as follows: for each $j\in[d]$, if $z_{i,j} \ne 0$, 
\begin{align*}
&h_{i,4j-3}=\psi_{\theta_{1,1,t_{i,j,1}}},h_{i,4j-2}=\psi_{\theta_{1,2,t_{i,j,1}}},\\
&h_{i,4j-1}=-\psi_{\theta_{1,1,t_{i,j,2}}},~h_{i,4j}=-\psi_{\theta_{1,2,t_{i,j,2}}},
\end{align*}
where $t_{i,j,1}=z_{i,j}$ and $t_{i,j,2}=z_{i,j}^+$. 
If  $z_{i,j} = 0$,
\begin{align*}
&h_{i,4j-3}=-\psi_{\theta_{2,1,t_{i,j,1}}},~h_{i,4j-2}=-\psi_{\theta_{2,2,t_{i,j,1}}},\\
&h_{i,4j-1}=-\psi_{\theta_{1,1,t_{i,j,2}}},~h_{i,4j}=-\psi_{\theta_{1,2,t_{i,j,2}}},
\end{align*}
where $t_{i,j,1}= - \eta$ and $t_{i,j,2}=\eta$, and 
$\psi_{\theta_{1,1,z}},\psi_{\theta_{1,2,z}}$ are defined in \cref{lem:reluindcfpqr}. 

From \cref{lem:reluindcfpqr}, the signs of $h_{i,4j-3}(x),h_{i,4j-2}(x)$ are different. The signs of $h_{i,4j-1}(x),h_{i,4j}(x)$ are also different. Let $\frak{s}_{4j-3} = \frak{s}_{h_{i,4j-3}(x)}, \frak{s}_{4j-1} = \frak{s}_{h_{i,4j-1}(x)}$. 
Since $|z_{i,j}| < (2-u) \times 2^{-2-p+e_{max}}$ for all $i \in [n], j \in [d]$, by \cref{lem:reluindcfpqr}, we have 
\begin{align*}
    & |h_{i,4j-3}(x)|,|h_{i,4j-2}(x)|, |h_{i,4j-1}(x)|,|h_{i,4j}(x)|  \in\{0\}\cup[2^p] \\
    &\bigoplus_{k=1}^{4} h_{i,4j-4+k} (x)=\begin{cases}\indc{x=z_{i,j}}~&\text{if}~z_{i,j}\ne0,\\
-\indc{x \le -\eta} - \indc{x \ge \eta} = \indc{x = 0} -1 ~&\text{if}~z_{i,j}=0,
\end{cases} 
\end{align*}
for all $x\in\bbF_{p,q}$ with $|x| \le (2-u) \times 2^{-3+2^{q-2}}$.

Let $ 0 \le m < d \le 2^p$. For $m$, suppose $\frak{s}_{4m+1}=1$. Then we have $ h_{i,4m+1}(x)+  h_{i,4m+2}(x) = l_{4m+1} \in \{0,1\}$ and 
\begin{align*}
      -2^{p} < -m \le &\bigoplus_{j=1}^{4m}h_{i,j}(x)  \le m  < 2^p, \;   h_{i,4m+1}(x)  \in\{0\}\cup[2^p], \\
      -2^{p} \le -m +h_{i,4m+1}(x) \le & \bigoplus_{j=1}^{4m+1}h_{i,j}(x)  \le m + h_{i,4m+1}(x) < 2^{p+1}, \;  - h_{i,4m+2}(x) \in\{0\}\cup[2^p], \\
      -2^{p}< -m + l_{4m+1} \le & \bigoplus_{j=1}^{4m+2}h_{i,j}(x) \le m + l_{4m+1} \le 2^{p}. \; 
\end{align*}
By \cref{lem:integerexact}, all above operations in $ \bigoplus_{j=1}^{4m+2}h_{i,j}(x)$ are exact.

Suppose $\frak{s}_{4m+1}=-1$. Then we have $ h_{i,4m+1}(x)+  h_{i,4m+2}(x) = l_{4m+1} \in \{0,-1\}$, and
\begin{align*}
      -2^{p} < -m \le &\bigoplus_{j=1}^{4m}h_{i,j}(x)  \le m  < 2^p, \;   -h_{i,4m+1}(x)  \in\{0\}\cup[2^p], \\
      -2^{1+p} <  -m +h_{i,4m+1}(x) \le & \bigoplus_{j=1}^{4m+1}h_{i,j}(x)  \le m + h_{i,4m+1}(x) \le 2^{p}, \;   h_{i,4m+2}(x) \in\{0\}\cup[2^p], \\
      -2^{p} \le  -m + l_{4m+1} \le & \bigoplus_{j=1}^{4m+2}h_{i,j}(x) \le m + l_{4m+1} < 2^{p}. \; 
\end{align*}
By \cref{lem:integerexact}, all above operations in $ \bigoplus_{j=1}^{4m+2}h_{i,j}(x)$ are exact.

Since $\frak{s}_{4m+3} = -1$, we have  $ h_{i,4m+3}(x)+  h_{i,4m+4}(x) = l_{4m+3} \in \{0,-1\},  l_{4m+1}+ l_{4m+3} \in \{0,-1\} $, and
\begin{align*}
      -2^p \le -m + l_{4m-1}  \le &\bigoplus_{j=1}^{4m+2}h_{i,j}(x) \le 2^p, \;   -h_{i,4m+3}(x)  \in\{0\}\cup[2^p], \\
      -2^{1+p} \le -m + l_{4m-1} + h_{i,4m+3}(x) \le & \bigoplus_{j=1}^{4m+3}h_{i,j}(x) \le 2^p + h_{i,4m+3}(x) , \;   h_{i,4m+4}(x) \in\{0\}\cup[2^p], \\
      -2^{p} \le -m + l_{4m-1} + l_{4m-3} \le & \bigoplus_{j=1}^{4m+4}h_{i,j}(x) < 2^p + l_{4m-3}  \le 2^{p}. \; 
\end{align*}
By \cref{lem:integerexact}, all above operations in $ \bigoplus_{j=1}^{4m+2}h_{i,j}(x)$ are exact.

Therefore we conclude that all operations in $ \bigoplus_{j=1}^{4d}h_{i,j}(x)$ are exact with  $ |\bigoplus_{j=1}^{4d}h_{i,j}(x)| \in \{0\} \cup [2^p]$.

We design the target network $f_\theta$ as follows:
\begin{align}
    & f_\theta(\bold{x} ; \bbF_{p,q} ) =  \bigoplus_{i=1 }^n y_i \otimes \relu \left( \left(\bigoplus_{j=1}^{4d}h_{i,j}(x_{\ceilZ{j/4}}) \right) \oplus b_i \right), \nonumber \end{align}
where $b_i= | j \in [d] : z_{i,j} = 0 | - (d-1)$. 
Since for each $k\in[n]$
\begin{align*}
  \bigoplus_{j=1}^{4d}h_{i,j}(z_{k,\ceilZ{j/4}}) \begin{cases}
      =b_i +1~&\text{if}~\bfz_k=\bfz_i,\\
      \le b_i ~&\text{if}~\bfz_k\ne \bfz_i,
  \end{cases}
\end{align*}
$f_\theta$ memorizes the target dataset. 
Since there are {5} parameters for each $h_{i,j}$, $f_\theta$ has $20dn+2n$ parameters. This completes the proof.

\subsection{Proof of \cref{thm:univrelufpqr-temp}}\label{sec:pfthm:univrelufpqr-temp}

The proof of \cref{thm:univrelufpqr-temp} is almost identical to that of \cref{thm:univstepfpqr-temp}; we define $\mathcal I_i$, $\alpha_i$, and $\boldsymbol{\gamma}_{\boldsymbol{\iota}}$ as in \cref{sec:pfthm:univstepfpqr-temp}.
For each $\boldsymbol{\iota}\in[K]^d$, $j \in [d]$, we also define $h_{\boldsymbol{\iota},1},\dots,h_{\boldsymbol{\iota},4d}$ as follows:

\begin{align*}
    h_{\boldsymbol{\iota},4j-3}&=\begin{cases}
    \psi_{\theta_{1,1,t_{\boldsymbol{\iota},j,1}}}~&\text{if}~\iota_j\in\{2,\dots,K\},\\
    -\psi_{\theta_{2,1,t_{\boldsymbol{\iota},j,1}}} ~&\text{if}~\iota_j=1,
    \end{cases}\\
    h_{\boldsymbol{\iota},4j-2}&=\begin{cases}
    \psi_{\theta_{1,2,t_{\boldsymbol{\iota},j,1}}}~&\text{if}~\iota_j\in\{2,\dots,K\},\\
    -\psi_{\theta_{2,2,t_{\boldsymbol{\iota},j,1}}} ~&\text{if}~\iota_j=1,
    \end{cases} \\
    h_{\boldsymbol{\iota},4j-1}&=-\psi_{\theta_{1,1,t_{\boldsymbol{\iota},j,2}}},\\
    h_{\boldsymbol{\iota},4j}&=-\psi_{\theta_{1,2,t_{\boldsymbol{\iota},j,2}}}, \\    
    t_{\boldsymbol{\iota},j,1}&=\begin{cases}
    -\eta ~&\text{if}~\iota_j=1, \\
    \alpha_{j-1}^{(\ge,\bbF_{p,q})}~&\text{if}~\iota_j\in\{2,\dots,K\},
    \end{cases} \\
    t_{\boldsymbol{\iota},j,2}&=\begin{cases}
    \alpha_{j}^{(\ge,\bbF_{p,q})}~&\text{if}~\iota_j\in\{1,\dots,K-1\},\\
    \alpha_{K}^{(>,\bbF_{p,q})}~&\text{if}~\iota_j=K.
    \end{cases}
\end{align*}
and $\psi_{\theta_{1,1,z}},\psi_{\theta_{1,2,z}}$ are defined in \cref{lem:reluindcfpqr}. 

Namely, we have
\begin{align*}
&h_{\boldsymbol{\iota},4j-3}(x)\oplus h_{\boldsymbol{\iota},4j-2}(x)=\begin{cases}
    -\indc{x \le -\eta}~&\text{if}~\iota_j=1, \\
    \indc{x\ge\alpha_{j-1}^{(\ge,\bbF_{p,q})}}~&\text{if}~\iota_j\in\{2,\dots,K\},
    \end{cases}\\
&h_{\boldsymbol{\iota},4j-1}(x)\oplus h_{\boldsymbol{\iota},4j}(x)=\begin{cases}
    -\indc{x\ge \alpha_{j}^{(\ge,\bbF_{p,q})}}~&\text{if}~\iota_j\in\{1,\dots,K-1\},\\
    -\indc{x\ge \alpha_{K}^{(>,\bbF_{p,q})}}~&\text{if}~\iota_j=K,
    \end{cases} \\
&\bigoplus_{k=1}^{4} h_{\boldsymbol{\iota},4j-4+k} (x)=
\begin{cases}
 \indc{0 \le x < \alpha_{1}^{(\ge, \bbF_{p,q})}    } -1 ~&\text{if}~\iota_j=1, \\ 
\indc{\alpha_{j-1}^{(\ge, \bbF_{p,q})} \le x < \alpha_{j}^{(\ge, \bbF_{p,q})}  }~&\text{if}~\iota_j\in\{2,\dots,K-1\},
\end{cases}
\end{align*}
for all $x\in\bbF_{p,q}$ with $|x| \le (2-u) \times 2^{-3+2^{q-2}}$,
 by \cref{lem:reluindcfpqr,lem:integerexact}.
We design the target network $f$ as follows:
\begin{align*}
f_\theta(\bold{x} ; \bbF_{p,q} ) =  \bigoplus_{{\boldsymbol{\iota}}\in [K]^d } \round{f^*(\boldsymbol{\gamma}_{\boldsymbol{\iota}})} \otimes \indc{\left(\bigoplus_{j=1}^{4d}h_{{\boldsymbol{\iota}},j}(x_{\ceilZ{j/4}})\right) \oplus b_i \ge 0}.
\end{align*}
where $b_i= | j \in [d] : z_{i,j} = 0 | - (d-1)$. 

By similar argument presented in the proof of \cref{thm:memrelufpqr-temp}, all operations in $\bigoplus_{j=1}^{4d}h_{\boldsymbol{\iota},j}(x_{\ceilZ{j/4}})$ are exact by \cref{lem:integerexact}, i.e., for each $k\in[n]$
\begin{align*}
  \bigoplus_{j=1}^{4d}h_{\boldsymbol{\iota},j}(x_{\ceilZ{j/4}})
  \begin{cases}
      =-b_i+1~&\text{if}~\bfx\in\mathcal I_{\iota_1}\times\cdots\times\mathcal I_{\iota_d},\\
      \le -b_i~&\text{if}~([0,1]^d\cap\bbF_{p,q}^d)\setminus(\mathcal I_{\iota_1}\times\cdots\times\mathcal I_{\iota_d}),
  \end{cases}
\end{align*}
This implies that for each $\boldsymbol{\iota}\in[K]^d$ and $\bfx \in \mathcal I_{\iota_1}\times\cdots\times\mathcal I_{\iota_d}$,
\begin{align*}
|f_\theta( \bfx ; \bbF_{p,q} ) - f^*(\bold{x}) | &= |\round{f^*(\boldsymbol{\gamma}_{\boldsymbol{\iota}})} - f^*(\boldsymbol{\gamma}_{\boldsymbol{\iota}}) | + |   f^*(\boldsymbol{\gamma}_{\boldsymbol{\iota}}) - f^*(\bold{x}) |\\
&\le |\round{f^*(\bfx)} - f^*(\bfx) | + \varepsilon
\end{align*}
where we use \cref{eq:modulus_fpq} for the above inequality.
Since each $h_{\boldsymbol{\iota},j}$ can be implemented by a $\relu$ network of $3$ layers and $5$ parameters by \cref{lem:reluindcfpqr}, 
$\indc{\left(\bigoplus_{j=1}^{4d}h_{{\boldsymbol{\iota}},j}(x_{\ceilZ{j/4}})\right) \oplus b_i \ge 0}$ can be implemented using $20d+1$ parameters. 
This implies that our $f$ can be implemented by a $\relu$ network of $4$ layers and $(20d+2)K^d$ parameters.
This completes the proof.

\section{Discussions}
In this section, we discuss possible extensions of our research.
One possible direction is to explore general activation functions, which may require more cautious inspections of floating-point operations.
For example, in our $\relu$ network constructions, we approximate a $\step$ function via a $\relu$ network of size $O(1)$ using the \emph{flat region} of $\relu$, i.e., $\relu(x)=0$ for all $x\le0$. 
However, even most piecewise linear activation functions, such as Leaky-ReLU, do not have such a flat region, i.e., our $\relu$ network constructions do not easily generalize. %

We expect that networks using general smooth functions require more caution.
This is because there are various different ways to implement each function in floating-point numbers.
Consider the exponential function $e^x$, for instance. Mathematically, it can be computed via the Taylor series expansion as $e^x = 1 \oplus x \oplus (x\otimes x \otimes 2^{-1}) \oplus \cdots$. However, the result of floating-point operations can vary depending on their operation order, e.g.,
$((\frac{1}{6} \otimes x) \otimes x) \otimes x \neq \frac{1}{6}\otimes (x\otimes \left( x\otimes x\right))$. 
Additionally, a standard approach to implement the exponential function is to divide the input range into several regions and calculate the ouput differently in each region \cite{muller16}.
Such a complicated method is used to guarantee a small rounding error in the output;
for instance, the GNU C library \cite{loosemore2012gnu} claims that its implementation of various mathematical functions (e.g., $\exp,\log,\sin$) has a maximum error of 10 ulps\footnote{%
The ulp of $x \in \bbR$ is commonly defined as
$\text{ulp}(x)=2^{ \max\{ e_{min}, \lfloor{\log_2 |x|}\rfloor \} -p }$ \cite[Definition 2.6]{muller18},
and the ulp error of $y \in \bbR$ from $x$ is defined as $\text{err}_\text{ulp}(x,y) = |x-y| / \text{ulp}(x)$ \cite{goldberg91}.} for most inputs.
Therefore, when discussing the universal approximation property of a specific activation function in the floating-point setting, we should specify not just the activation itself but also its computation protocol and its guaranteed error bounds.
It would be an interesting research topic to analyze the approximation capacity of each activation computing protocol.

We can also explore another arithmetic approach, integer arithmetic, which is utilized in quantized neural networks. 
Additionally, it will be an interesting research topic to investigate the expressive power of more general network architectures such as residual networks, and convolutional neural networks.

\section{Conclusion}\label{sec:conclusion}

In this work, we investigate the expressive power of neural networks under the fully floating-point setting; all parameters of neural networks, inputs, and intermediate values are floating-point, and neural networks are represented as compositions of floating-point operations. 
Under unbounded exponent floating-point arithmetic  $\bbF_p$, we first demonstrate that $\step$ network has memorization universal approximation properties (\cref{thm:memstepfp-temp,thm:univstepfp-temp}). Following this, we establish that $\relu$ network also exhibits memorization universal approximation properties by constructing the indicator function using $\relu$ network (\cref{thm:memrelufp-temp,thm:univrelufp-temp}). Additionally, under bounded exponent floating-point arithmetic $\bbF_{p,q}$, we also demonstrate that $\step$ and $\relu$ network possess memorization and universal approximation properties (\cref{thm:memstepfpqr-temp,thm:univstepfpqr-temp,thm:memrelufpqr-temp,thm:univrelufpqr-temp}). Because underflow and overflow in $\bbF_{p,q}$ presents challenges in constructing the network, we develope several technical lemmas to address this (\cref{lem:representable,lem:exact,lem:integerexact,lem:ignore,lem:reluindcfpqr}). 

On the other hand, most prior works regarding universal approximation assume real-valued inputs and parameters and/or exact mathematical operations, which cannot be simulated by modern computers that can only represent a tiny subset of the real numbers and apply inexact operations.
Considering that almost all neural networks are indeed implemented under floating-point machinery, it is important to theoretically analyze the properties of floating-point neural networks.
To the best of our knowledge, this is the first work to tackle the universal approximation property under the floating-point setting.
We believe that our results and analyses under floating-point operations would contribute to a better understanding of the performance of modern deep and narrow networks that are executed on actual computers. %

\section*{Acknowledgements}
YP was supported by a KIAS Individual Grant [AP090301] via the Center for AI and Natural Sciences at Korea Institute for Advanced Study. GH was supported by a KIAS Individual Grant [AP092801] via the Center for AI and Natural Sciences at Korea Institute for Advanced Study. 
SP was supported by Institute of Information \& communications Technology Planning \& Evaluation (IITP) grant funded by the Korea government (MSIT) (No. 2019-0-00079, Artificial Intelligence Graduate
School Program, Korea University) and Basic Science Research Program through the National Research Foundation of Korea (NRF) funded by the Ministry of Education (2022R1F1A1076180).

\bibliography{reference}
\bibliographystyle{abbrv}

\end{document}